\documentclass[11pt]{article}
\usepackage{amssymb}
\usepackage{amsmath}
\usepackage{amsbsy}
\usepackage{subfigure}
\usepackage{natbib}
\usepackage{fullpage}

\RequirePackage[colorlinks,citecolor=blue,urlcolor=blue,breaklinks]{hyperref}
\newtheorem{example}{Example} 
\newtheorem{theorem}{Theorem}
\newtheorem{lemma}[theorem]{Lemma} 
 
\newtheorem{remark}[theorem]{Remark}
\newtheorem{corollary}[theorem]{Corollary}

\def\proclaim#1{\par \bigskip\noindent {\bf #1}\bgroup\it\ }
\def\endproclaim{\egroup\par\bigskip}



\def\text#1{\mbox{\rm #1}}

\def\z{\boldsymbol{z}}

\newcommand{\norm}[1]{\left\|#1\right\|}
\newcommand{\bracket}[1]{\left(#1\right)}
\newcommand{\Bracket}[1]{\left[#1\right]}
\newcommand{\bbracket}[1]{\left\{#1\right\}}
\newcommand{\abs}[1]{\left|#1\right|}
\newcommand{\mR}{\mathbb{R}}
\newcommand{\mE}{\mathbb{E}}
\newcommand{\mP}{\mathbb{P}}

\usepackage{graphicx}
\usepackage{appendix}

\newcommand{\bfb}{b}

\newcommand{\batch}{B}
\newcommand{\bba}{A}

\newcommand{\bbv}{V}

\newcommand{\cO}{\mathcal{O}}

\makeatletter
\newcommand\footnoteref[1]{\protected@xdef\@thefnmark{\ref{#1}}\@footnotemark}
\makeatother

\usepackage{xcolor}
\definecolor{DSgray}{cmyk}{0,1,0,0}

\begin{document}
\title{Acceleration of stochastic gradient descent with momentum by averaging:  finite-sample rates and asymptotic normality}
\author{Kejie Tang \footnote{School of Mathematical Sciences, Shanghai Jiao Tong University}\quad Weidong Liu \footnote{School of Mathematical Sciences, MoE Key Lab of Artificial Intelligence, Shanghai Jiao Tong University}\quad Yichen Zhang\footnote{Daniels School of Business, Purdue University}\quad Xi Chen\footnote{Stern School of Business, New York University}}
\date{}
\maketitle
    
\begin{abstract}
Stochastic gradient descent with momentum (SGDM) has been widely used in many machine learning and statistical applications. Despite the observed empirical benefits of SGDM over traditional SGD, the theoretical understanding of the role of momentum for different learning rates in the optimization process remains widely open. We analyze the finite-sample convergence rate of SGDM under the strongly convex settings and show that, with a large batch size, the mini-batch SGDM converges faster than the mini-batch SGD to a neighborhood of the optimal value. 
Additionally, our findings, supported by theoretical analysis and numerical experiments, indicate that SGDM permits broader choices of learning rates.
Furthermore, we analyze the Polyak-averaging version of the SGDM estimator, establish its asymptotic normality, and justify its asymptotic equivalence to the averaged SGD. The asymptotic distribution of the averaged SGDM enables uncertainty quantification of the algorithm output and statistical inference of the model parameters. 
\end{abstract}

\newpage

\section{Introduction}

In this paper, we are interested in solving the following 
 stochastic optimization problem:
\begin{eqnarray}\label{so}
x^{*}=\text{argmin}_{x\in \mR^{d}}\mE [ f_{\xi}(x)],
\end{eqnarray}
where $f_{\xi}(x)$ is a stochastic convex loss function, $\xi$ is a random element, and $\mE [\cdot]$ is the expectation respected to $\xi$. Stochastic optimization plays an important role in many statistics and machine learning problems. For a wide range of applications, the objective function is strongly convex, and stochastic gradient descent (SGD) is often preferred over gradient descent (GD) (see, e.g., \citealp{nesterov2003introductory,nocedal2006numerical}) due to its computational advantage. SGD is a first-order optimization algorithm that approximates the expected loss by averaging the loss function over a mini-batch of training examples. At each iteration, the algorithm updates the model parameters in the direction of the negative gradient of the mini-batch loss, scaled by a learning rate parameter. 

While SGD is simple and easy to implement, it may suffer from slow convergence rates or oscillations in high-dimensional optimization problems, particularly when the loss function is noisy and ill-conditioned. Momentum-based methods enhance SGD by introducing an exponentially weighted moving average of the past gradients to the update rule, which serves to dampen oscillations and accelerate convergence, allowing the algorithm to maintain a more consistent direction of movement even in the presence of noisy gradients. 
SGD with momentum (SGDM) has become increasingly popular in modern applications, e.g., large-scale deep neural networks. 
Evident numerical studies have shown that the use of momentum-based optimization methods improves the convergence rate and the generalization performance, as well as reduces the sensitivity to the choice of hyperparameters. However, the theoretical analysis of SGDM is still an active area of research.

In this paper, we focus on the theoretical analysis from an optimization perspective under strong convexity that ensures the existence of a unique and well-defined global minimizer. 
Our goal is to establish the convergence properties of SGDM to provide insights into the role of momentum and other hyperparameters in the optimization process.
For solving (\ref{so}), SGDM updates the target estimator by a weighted combination with historical gradients 
\begin{eqnarray}\label{ab}
m_{t+1}&=&\gamma m_{t}+(1-\gamma)\triangledown f_{\xi_{t}}(x_{t}),\cr
x_{t+1}&=&x_{t}-\alpha m_{t+1},
\end{eqnarray}
where $\xi_{t}$, $t\geq 1$ are i.i.d. samplings from $\xi$, $\gamma$ is the momentum weight and $\alpha$ is the learning rate. The classical SGD is a special case of SGDM with  $\gamma=0$ and $m_{t+1}=\nabla f_{\xi_t}(x_t)$. 

In practice, a large momentum weight is often placed to accelerate the algorithm, for example, $\gamma=0.9$. That being said, it is still an open problem to investigate the theoretical properties of SGDM with a general specification of $\gamma$. The theoretical analysis in existing studies has not given an affirmative answer to the open problem by \cite{liu2020improved} and the assertion
by \cite{kidambi2018insufficiency}.
In this paper, we consider the mini-batch SGDM with batch size $\batch$. 
We establish finite sample rates for SGDM and the averaging of its trajectories under smooth and strongly convex loss functions. By our results, we can give some partial answers to these open questions and some related contributions in several aspects.

\begin{enumerate}
\item Drawing upon the assertion presented in \cite{kidambi2018insufficiency}, this study addresses the open question posited by \cite{liu2020improved} by demonstrating that mini-batch SGDM, with appropriate momentum weights, converges to a local neighborhood of the minimum with a faster rate than SGD. We rigorously establish the finite-sample convergence rate, and we further provide an adaptive choice of the momentum weight which theoretically attains the optimal convergence rate. Our experiments support the theoretical results of convergence and the optimal momentum weight.

\item We provide a non-asymptotic analysis of the Polyak-Ruppert averaging version of SGDM. The averaging of SGDM trajectories $\bar{x}_{t}=\frac{1}{t-n_{0}}\sum_{i=n_{0}+1}^{t}x_{i}$ can accelerate SGDM with a wide range of learning rates $\alpha$ to the
rate $\cO(\frac{\sigma}{\sqrt{t}})$, where $\sigma^{2}$ is the variance of stochastic gradient. Furthermore, we show theoretically that averaged SGDM converges faster than averaged SGD in the early phases of the iteration process. Moreover, through both theoretical analysis and numerical experiments, we demonstrate that SGDM is less sensitive to the choice of learning rates, and in addition to this, the averaged SGDM is less sensitive to the start of the averaging iteration $n_0$.

\item We further establish the asymptotic normality for the averaged $\bar{x}_{t}$ as $t\rightarrow \infty$ with a decaying learning rate $\alpha$ or a diverging batch size $\batch$. Particularly, as $\alpha=\Theta(t^{-\epsilon})$ for $\epsilon\in(\frac{1}{2},1)$, the asymptotic normality holds for averaged SGDM under strongly convex loss functions. The asymptotic covariance matrix depends only on the Hessian and Gram matrix of the stochastic gradients at $x^*$, and the batch size $\batch$. Interestingly, mini-batch averaged SGDM is asymptotically equivalent to mini-batch averaged SGD as $t\rightarrow \infty$. We further demonstrate that the optimal learning rates of averaged SGDM that correspond to asymptotic normality are $\alpha = \Theta(t^{-1/2})$ for quadratic losses and $\alpha = \Theta(t^{-2/3})$ for general strongly convex losses. To our best knowledge, this is the first work to analyze the convergence of averaged SGDM to asymptotic normality under mini-batching. The results enable us to perform uncertainty quantification for the algorithm outputs of the averaged SGDM algorithm $\bar x_t$, and statistical inference for model parameters $x^*$. 
\end{enumerate}

In addition to the convergence rate under pre-specified learning rates, the selection of appropriate learning rates is a critical aspect when evaluating the performance of optimizers, as either an excessively high or low learning rate can detrimentally affect convergence. We study the {\textit{sensitivity}} of the convergence over different learning rates. Extensive research has been conducted to tackle this challenge and accelerate convergence, such as \cite{zeiler2012adadelta, kingma2014adam}. Recent results \citep{paquette2021dynamics,bollapragada2022fast} demonstrated the acceleration of SGDM on quadratic forms, however, under a shrinking range of allowable learning rates for larger momentum. This finding contradicts the advantages of SGDM and indicates an increased sensitivity to the choice of learning rates in theoretical analysis.

\subsection{Related Works}\label{subsec:related}
Since the seminal work by \cite{robbins1951stochastic}, the convergence properties of SGD  have been extensively studied in the literature (See, e.g., \citealp{moulines2011non, bottou2018optimization,nguyen2018sgd}). 
On the contrary, the theory for SGDM has rarely been explored until recently, although it is very popular in training many modern machine learning models such as neural networks to improve the training speed and accuracy of various models.

\cite{kidambi2018insufficiency} showed that (\ref{ab}) cannot achieve any
improvement over SGD for a specially constructed linear regression problem. Together with some numerical experiments,  they asserted that the only reason for the superiority of stochastic momentum methods in
practice is mini-batching. In a recent paper, \cite{liu2020improved} proved SGDM can be as fast as  SGD. They established the identical convergence bound as SGD. They also posed an
open problem of whether it is possible to show that
SGDM converges faster than SGD for special objectives such as quadratic ones. Other studies on the theoretical analysis of SGDM can be found in 
\cite{loizou2017linearly}, \cite{loizou2020momentum}, \cite{gitman2019understanding} for linear system and quadratic loss; \cite{sebbouh2021almost}
for almost sure convergence rates under smooth and convex loss functions with a time-varying momentum weight. \cite{mai2020convergence} studied a class of general convex loss and obtained convergence rates of time averages regardless of the momentum weight. We refer the readers to the latter paper and \cite{liu2020improved} for a few more papers on the convergence rate for SGDM. \cite{richtarik2020stochastic} considered the problem of solving a consistent linear system $Ax=b$ on least square regression. SGDM is also called the stochastic heavy-ball method, originated from Polyak’s heavy-ball method for deterministic optimization \citep{polyak1964some}. Some works \citep{loizou2017linearly,kidambi2018insufficiency,loizou2020momentum,paquette2021dynamics, bollapragada2022fast,lee2022trajectory} based on the stochastic heavy ball method (SHB) achieved the linear convergence rate. \cite{yang2016unified, defazio2020understanding, jin2022convergence,li2022last} investigated the impact of momentum on the convergence properties of non-convex optimization problems and provided insights into its practical use. \cite{gitman2019understanding} focused on the noise reduction properties of momentum. 

The averaged SGDM is not widely analyzed in the literature, but averaging tools have been studied for SGD and its variants since \cite{ruppert1988efficient,polyak1992acceleration,moulines2011non} to accelerate the convergence and establish the asymptotic normality.
Building statistical inference and uncertainty quantification of SGD iterates has been an emerging topic, and an extensive list of literature follows, including \cite{chen2020statistical,su2023higrad,zhu2023online} that studied averaged SGD and provided inference procedures based on plug-in, batch-means, and tree-based construction of the confidence intervals, respectively. \cite{zhu2021constructing,lee2022fast} applied process-level function central limit theorem and utilized it to construct confidence regions seamlessly. \cite{toulis2021proximal,chen2023online} studied the uncertainty quantification of the implicit and gradient-free variants of the SGD procedures. 

\section{Preliminaries}\label{sec:pre}
In this section, we present the mini-batch SGDM settings considered in this paper.  Particularly, we define a mini-batch stochastic loss
\begin{eqnarray}\label{so2}
    g_{\eta_{t}}(x)=\frac{1}{\batch}\sum_{i=1}^{\batch}f_{\xi_{ti}}(x),    
\end{eqnarray}
where $\eta_{t}=\{\xi_{t1},...,\xi_{t\batch}\}$ is a mini-batch of size $\batch$ and elements $\xi_{ti}$ are i.i.d. sampled from the distribution of $\xi$. The update of $m_{t}$ in (\ref{ab}) uses the mini-batch stochastic gradient $\triangledown g_{\eta_t}(x_t)$ instead of the individual stochastic gradient $\triangledown f_{\xi_{ti}}(x_t)$.

We first introduce some regularity assumptions and discuss their use in the theoretical results. We denote the $\ell_2$-norm of a vector as $\norm{\cdot}$ and the operator norm as $\norm{A} = \max_{\norm{x}=1} \norm{Ax}$. 

\noindent{\bf(A1).}  Assume that the loss function $f_{\xi}(x)$ is twice differentiable. Let $\kappa_{1}\leq \cdots\leq \kappa_{d}$  be the eigenvalues of  Hessian matrix $\Sigma=\mE[\triangledown^{2} f_{\xi}(x^{*})]$. Assume that
$\mu:=\kappa_{1}>0$, and $L:=\kappa_{d}<\infty.$

\noindent{\bf(A2).} There exists a constant $\overline{L}\geq 0$ such that the Hessian matrix of the loss $f(x):= \mE[f_\xi(x)]$, $\Sigma(x)=\triangledown^{2} f(x)$  satisfies  $\norm{\Sigma(x)-\Sigma(x^*)}\leq \overline{L}\norm{x-x^*}$, $ \forall x$. 

\noindent{\bf (A3).} Define $L_\xi = \sup_x \frac{\norm{\nabla f_\xi(x)-\nabla f_\xi(x^*)}}{\norm{x-x^*}}$. Assume that $
\mE[ \norm{\triangledown f_{\xi}(x^{*})}^{2}]\leq \sigma^{2} $ and  $\mE[L_\xi^2]\leq L_f^2$. 

\noindent{\bf (A3').} Assume that $\triangledown f_{\xi}(x^{*})$ and $L_\xi$ satisfy 
\begin{eqnarray*}
    \sup_{\norm{v}=1}\mE\Bracket{\exp\bracket{\abs{v^\top \triangledown f_{\xi}(x^{*})}/\sigma}} \leq 2, \quad \mE\Bracket{\exp\bracket{L_\xi/L_f}} \leq 2.
\end{eqnarray*}

A few discussions follow concerning the assumptions above. 
Assumption (A1) is a regularity condition on the smoothness and strong convexity of the loss function $f_{\xi}(x)$ at $x^*$. 
Beyond that, (A2) assumes the Lipschitz condition on the Hessian matrix of the loss function. Notably, the $\Sigma$ defined in (A1) is a brief notation for $\mE[\Sigma_{\xi}(x^*)]$ defined in (A2). For quadratic losses, the Hessian matrix $\Sigma(x)$ is identical for different $x$, and therefore (A2) holds with $\overline L=0$. For general strongly convex losses, we illustrate (A2) under a logistic regression setting in Example \ref{ex:logistic} below. In the following, we consider two scenarios, quadratic losses, and general (strongly convex) losses. 

Assumptions (A3) and (A3') are two separate conditions on the smoothness of quadratic and general stochastic loss functions, respectively. 
Assumption (A3) is weaker than (A3') and indeed weaker than many in the literature, such as those in \cite{yan2018unified,liu2020improved} that $    \mE[\norm{\nabla f_\xi(x)-\nabla f(x)}^2]\leq \sigma^2$ for all $x$, which does not hold for linear regression problems with unbounded domain of $x$. On the other hand, \cite{bottou2018optimization,wang2022uniform} assumed that  $ \mE[\norm{\nabla f_\xi(x)-\nabla f(x)}^2]\leq \rho \norm{\nabla f(x)}^2+\sigma^2$ for $\rho>0$. Meanwhile, our assumptions lead to
$
    \mE[\norm{\nabla f_\xi(x)-\nabla f(x)}^2]\leq \frac{3(L_f^2+L^2) }{\mu^2} \norm{\nabla f(x)}^2+3\sigma^2,
$
due to the triangle inequality and strongly convexity.
For any bounded domain $\{\norm{x}\leq r\}$, (A3) and (A3') are easily satisfied for bounded gradients and smoothness. For unbounded problems, they are usually satisfied given certain design properties in many popular statistical models. 
For instance, we illustrate the assumptions under logistic regression in the following example. 

\begin{example}\label{ex:logistic}
Consider an $\ell_2$-regularized logistic regression model with samples $\{a_\xi, b_\xi\}$ such that $a_\xi\in\mR^d$ and $b_\xi\in\{0,1\}$ are generated by $b_\xi=1$ with probability $p_x(a_\xi)$ and $b_\xi=0$ otherwise,  where $p_x(a) = 1/(1+\exp(-x^\top a))$. We consider the loss function defined as:
\begin{eqnarray*}
    f_\xi(x) = -b_\xi \log(p_x(a_\xi))-(1-b_\xi) \log(1-p_x(a_\xi))+\frac{\nu}{2} \norm{x}^2. 
\end{eqnarray*}
The gradient and the Hessian matrix are 
\begin{eqnarray*}
    \nabla f_\xi(x) = (p_x(a_\xi)-b_\xi) a_\xi+\nu x,\quad
    \Sigma(x) = \mE[p_x(a_\xi)(1-p_x(a_\xi)) a_\xi a_\xi^\top]+\nu I_d.
\end{eqnarray*}
By the fact that $\abs{p_{x}(a)-p_y(a)}\leq \frac{\sqrt{3}}{18} \norm{a} \norm{x-y}$,  (A2) is satisfied with 
$
    \overline{L} =  \frac{\sqrt{3}}{6}\mE\norm{a_\xi}^3+\nu.
$
Suppose $\{a_\xi\}$ in logistic regression satisfy $\mE[a_\xi] = 0_d$ and $\sup_{\norm{v}=1}\mE\big[\exp\big(\abs{v^\top a_\xi}^2/\sigma^2\big)\big] \leq 2$, 
we have that $\abs{p_x(a)-b}\leq 2$, $\norm{(p_{x}(a)-p_y(a))a}\leq \frac{\sqrt{3}}{18} \norm{a}^2 \norm{x-y}$, and therefore,
\begin{eqnarray*}
    \sup_{\norm{v}=1}\mE\Bracket{\exp\bracket{\abs{v^\top \triangledown f_\xi(x^*)}/(c\sigma)}}  \leq 2, 
    ~ \mE\Bracket{\exp\bracket{L_\xi/(cL_f)}} \leq 2.
\end{eqnarray*}
for some absolute constant $c>0$ and $L_f=\mE\norm{a_\xi}^2+\nu$. 
\end{example}


\section{Finite-sample Convergence Rates for SGDM}
\label{sec:finite}
In this section, we first present the finite-sample convergence results for SGDM with general momentum weight $\gamma$. 
We consider the two cases separately: $\overline{L}=0$ corresponds to the quadratic losses, and $\overline{L}>0$ corresponds to general strongly convex losses.
\subsection{The finite-sample rates for SGDM on quadratic losses}
We first establish the finite-sample rates under quadratic losses, where 
\begin{eqnarray}\label{eq:quadratic}
f_{\xi}(x)=\frac{1}{2}x^\top A_{\xi}x-b^\top_{\xi}x+c.
\end{eqnarray}
The Hessian matrix of the loss function $\Sigma=\mE[A_{\xi}]$. The stochastic loss is $g_{\eta_{t}}(x)=\frac{1}{\batch}\sum_{i=1}^{\batch}f_{\xi_{ti}}(x)$ in the mini-batch setting. 
The following theorem shows the convergence rate of the last iterate $x_{t+1}$.

\begin{theorem}\label{thm:g_gamma}
    Under (A1)-(A3) and $\overline{L}=0$, for any momentum $\gamma\in[0,1)$ and fixed $\delta\in(0,1]$, assume the learning rate  $\alpha>0$ satisfies $\alpha L<2(1+\gamma)/(1-\gamma)$ and $16 M^2 \alpha^2 L_f^2\leq \batch\delta\lambda^{2(1-\delta)}(1-\lambda)$, where
\begin{eqnarray}
\label{def:m}M = \frac{4}{\sqrt{\Delta}}  \bracket{2(1-\gamma)(1+\alpha L+ L)+3\alpha\gamma},~\Delta= \min_k\bbracket{\abs{\bracket{\gamma+1-\alpha(1-\gamma)\kappa_k}^2-4\gamma}}>0,
\end{eqnarray}
    and $\lambda$ is the spectral radius of the matrix
    \begin{eqnarray}\label{matrix}
\Gamma = \left(
    \begin{array}{cc}
       \gamma I, & (1-\gamma)\Sigma \\
      -\alpha \gamma I,  & I-\alpha(1-\gamma)\Sigma \\
    \end{array}
\right).
\end{eqnarray}
    Let $\widetilde{m}_{t+1} =(1-\gamma) \sum_{j=1}^{t}\gamma^{t-j} \Sigma (x_j-x^*)$, we have for $t\geq 1$,
    \begin{eqnarray}\label{eq:last-it}
        \mE[\norm{\widetilde{m}_{t+1}}^2+\norm{{x}_{t+1}-x^*}^2]\leq 2M^2 \bracket{ \frac{4}{\batch(1-\lambda)}\alpha^2\sigma^2 +\norm{x_1-x^*}^2 \lambda^{2(1-\delta)t}}.
    \end{eqnarray}
\end{theorem}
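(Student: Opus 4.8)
The plan is to recast SGDM on a quadratic loss as a \emph{linear} stochastic recursion whose deterministic part is governed by the matrix $\Gamma$, to prove a geometric bound on the powers of $\Gamma$, and then to close a self-referential (discrete Gronwall) inequality using the learning-rate hypothesis.

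First I would set up the recursion. Write $u_t=x_t-x^*$. Since $\nabla g_{\eta_t}(x)=\bar A_t x-\bar b_t$ with $\bar A_t=\frac1\batch\sum_i A_{\xi_{ti}}$ and $\bar b_t=\frac1\batch\sum_i b_{\xi_{ti}}$, I decompose the mini-batch gradient as $\nabla g_{\eta_t}(x_t)=\Sigma u_t+\epsilon_t$, where $\epsilon_t:=(\bar A_t-\Sigma)u_t+\nabla g_{\eta_t}(x^*)$. Because $\nabla f(x^*)=\Sigma x^*-\mE[b_\xi]=0$ and $\mE[A_\xi]=\Sigma$, the sequence $\{\epsilon_t\}$ is a martingale difference with respect to $\mathcal F_t$, the $\sigma$-field generated by $\eta_1,\dots,\eta_t$. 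Substituting into (\ref{ab}) and using $\widetilde m_{t+1}=\gamma\widetilde m_t+(1-\gamma)\Sigma u_t$, the pair $w_t:=(\widetilde m_t,u_t)^\top$ obeys
\[
w_{t+1}=\Gamma w_t+(0,\,-\alpha e_{t+1})^\top,\qquad e_{t+1}:=m_{t+1}-\widetilde m_{t+1}=(1-\gamma)\sum_{j=1}^t\gamma^{t-j}\epsilon_j .
\]
Unrolling from $w_1=(0,u_1)^\top$ and interchanging the order of summation gives $w_{t+1}=\Gamma^t w_1-\alpha(1-\gamma)\sum_{j=1}^t C_{t,j}\,(0,\epsilon_j)^\top$ with $C_{t,j}=\sum_{s=j}^t\gamma^{s-j}\Gamma^{t-s}$.

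The crux is the spectral bound on $\Gamma$. Diagonalizing $\Sigma=Q\,\mathrm{diag}(\kappa_k)Q^\top$ shows $\Gamma$ is orthogonally similar to a block-diagonal matrix of $2\times2$ blocks $\Gamma_k=\left(\begin{smallmatrix}\gamma & (1-\gamma)\kappa_k\\ -\alpha\gamma & 1-\alpha(1-\gamma)\kappa_k\end{smallmatrix}\right)$, each with $\det\Gamma_k=\gamma$ and $\mathrm{tr}\,\Gamma_k=1+\gamma-\alpha(1-\gamma)\kappa_k$, so the two eigenvalues multiply to $\gamma$ and have discriminant $(\gamma+1-\alpha(1-\gamma)\kappa_k)^2-4\gamma$. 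The constraint $\alpha L<2(1+\gamma)/(1-\gamma)$ keeps $|\mathrm{tr}\,\Gamma_k|<1+\gamma$, which together with $\det\Gamma_k=\gamma<1$ forces both eigenvalues strictly inside the unit disk; hence $\lambda=\rho(\Gamma)<1$ and $\gamma\le\lambda^2$. Since $\Delta=\min_k|\text{disc}_k|>0$, no block is defective, so each block is diagonalizable with eigenvector conditioning of order $1/\sqrt{\Delta}$, and bounding the entries yields the key estimate $\norm{\Gamma^j}\le M\lambda^{(1-\delta)j}$. The margin in this exponent then makes $C_{t,j}$ telescope geometrically: using $\gamma\le\lambda^2<\lambda^{1-\delta}$,
\[
\norm{C_{t,j}}\le M\sum_{s=j}^t\gamma^{s-j}\lambda^{(1-\delta)(t-s)}\le \frac{M}{\,1-\gamma\lambda^{-(1-\delta)}\,}\,\lambda^{(1-\delta)(t-j)} .
\]
I expect this step to be the main obstacle, because $\Gamma$ is non-normal and $\norm{\Gamma^j}$ is not controlled by $\lambda^j$ alone; carrying the conditioning explicitly is precisely what produces the constant $M$.

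Finally I would combine variance control with a Gronwall argument. Because $\{\epsilon_j\}$ is a martingale difference and each $C_{t,j}$ is deterministic, the cross terms vanish and $\mE\norm{w_{t+1}}^2=\norm{\Gamma^t w_1}^2+\alpha^2(1-\gamma)^2\sum_{j=1}^t\mE\norm{C_{t,j}(0,\epsilon_j)^\top}^2$. Writing $\epsilon_j=\frac1\batch\sum_i\delta_{ji}$ with $\delta_{ji}=(A_{\xi_{ji}}-\Sigma)u_j+\nabla f_{\xi_{ji}}(x^*)$ i.i.d. mean-zero given $\mathcal F_{j-1}$, assumption (A3) and mini-batching give $\mE[\norm{\epsilon_j}^2\mid\mathcal F_{j-1}]\le \frac{8L_f^2}{\batch}\norm{u_j}^2+\frac{2\sigma^2}{\batch}$, where $\mE\norm{A_\xi-\Sigma}^2\le 4L_f^2$. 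Inserting the geometric bound on $\norm{C_{t,j}}$ yields, with $V_j:=\mE\norm{w_j}^2\ge\mE\norm{u_j}^2$,
\[
V_{t+1}\le M^2\lambda^{2(1-\delta)t}\norm{u_1}^2+\frac{c\,M^2\alpha^2\sigma^2}{\batch(1-\lambda)}+\frac{c'\,M^2\alpha^2 L_f^2}{\batch}\sum_{j=1}^t\lambda^{2(1-\delta)(t-j)}V_j .
\]
I would close this by induction on $t$ with hypothesis $V_j\le 2M^2\bracket{\frac{4\alpha^2\sigma^2}{\batch(1-\lambda)}+\norm{u_1}^2\lambda^{2(1-\delta)(j-1)}}$: substituting into the last sum, the hypothesis $16M^2\alpha^2 L_f^2\le \batch\,\delta\,\lambda^{2(1-\delta)}(1-\lambda)$ guarantees that the self-referential coefficient times the geometric sum contributes at most a $\delta$-fraction, which is exactly the slack created by replacing $\lambda^{2t}$ with $\lambda^{2(1-\delta)t}$. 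The induction then reproduces the claimed constant $2M^2$, and what remains is routine bookkeeping of the geometric series and the absolute constants $c,c'$.
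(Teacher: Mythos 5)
Your overall architecture is the same as the paper's: the linear recursion in $(\widetilde m_t,\widetilde x_t)$ driven by $\Gamma$, the unrolling with the martingale-difference noise $\epsilon_k=(\bar A_k-\Sigma)u_k+\nabla g_{\eta_k}(x^*)$, the diagonalization of $\Gamma$ with conditioning constant $M=\cO(1/\sqrt{\Delta})$, and an induction (discrete Gronwall) closed by the smallness condition on $\alpha^2 L_f^2/\batch$. However, there is a genuine gap in your handling of the geometric sums, and it is fatal as written. You weaken the operator-norm bound from $\norm{\Gamma^{t-s}}\le M\lambda^{t-s}$ to $M\lambda^{(1-\delta)(t-s)}$ \emph{before} summing, in order to make $C_{t,j}$ ``telescope.'' This causes two failures. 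First, your claimed inequality $\sum_{j=1}^t\norm{C_{t,j}}^2\lesssim M^2/(1-\lambda)$ (needed for the $\sigma^2$ term with an absolute constant $c$) does not hold uniformly in $\delta$: with your bound, $\sum_{j=1}^t\lambda^{2(1-\delta)(t-j)}\le 1/(1-\lambda^{2(1-\delta)})$, which degrades like $1/((1-\delta)(1-\lambda))$ and diverges linearly in $t$ at $\delta=1$, a value the theorem permits. Second, and more seriously, when you substitute the induction hypothesis into the self-referential term, the decaying part produces
\begin{equation*}
\sum_{j=1}^t\lambda^{2(1-\delta)(t-j)}\cdot\lambda^{2(1-\delta)(j-1)}=t\,\lambda^{2(1-\delta)(t-1)},
\end{equation*}
because the two exponents add to a quantity independent of $j$. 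The extra factor of $t$ means closing the induction would require $\frac{c'M^2\alpha^2L_f^2}{\batch}\,t\le \lambda^{2(1-\delta)}$ for all $t$, which no fixed $\alpha>0$ satisfies. So the induction cannot close with a $t$-independent constant, no matter how the bookkeeping is arranged.

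The fix is exactly what the paper's Lemma \ref{le:sum} provides, and it is the one ingredient your plan is missing: keep the \emph{full} decay $\norm{\Gamma^{t-j}}\le M\lambda^{t-j}$ inside the double sums, and only exploit the $\delta$-margin when the weight $\lambda^{2(1-\delta)(k-1)}$ from the induction hypothesis is present. Concretely, writing $\lambda^{2(t-k)}\lambda^{2(1-\delta)(k-1)}=\lambda^{2(1-\delta)(t-1)}\,\lambda^{2\delta(t-k)}$, the sum over $k$ of $\lambda^{2\delta(t-k)}$ is at most $1/(1-\lambda^{2\delta})\le 1/(\delta(1-\lambda))$, which yields
\begin{equation*}
\sum_{k=1}^t\bracket{\sum_{j=k}^t\lambda^{t-j}\gamma^{j-k}}^2\lambda^{2(1-\delta)(k-1)}\le\frac{4}{\delta(1-\gamma)^2(1-\lambda)}\,\lambda^{2(1-\delta)(t-1)},
\end{equation*}
with no factor of $t$, while the unweighted sum is bounded by $2/((1-\gamma)^2(1-\lambda))$; the $(1-\gamma)^{-2}$ factors cancel against the $\alpha^2(1-\gamma)^2$ prefactor of the noise. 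This is also precisely where the $\delta$ in the hypothesis $16M^2\alpha^2L_f^2\le\batch\,\delta\,\lambda^{2(1-\delta)}(1-\lambda)$ enters. With this replacement your argument goes through and coincides with the paper's proof; the rest of your steps (Jury criterion for $\rho(\Gamma)<1$, $\gamma\le\lambda^2$, martingale orthogonality, the $L_f$ bound on the smoothness part of the noise, and the choice of constants $C_1=8M^2$, $C_2=2M^2$) are sound.
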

Theorem \ref{thm:g_gamma} provides a finite-sample bound simultaneously for the error of the last iterate $x_{t+1}-x^*$, and a weighted average of the stochastic gradients $\widetilde m_{t+1}$. The first term in the error bound \eqref{eq:last-it}, $ \frac{8M^2}{\batch(1-\lambda)}\alpha^2\sigma^2 $, corresponds to a non-decaying bias, due to the noisy observation in the stochastic gradient. The bias term is proportional to the squared learning rate $\alpha^2$ and the variance of the stochastic gradient $\sigma^2$, which is the same as the one in SGD. The second term is exponentially decaying when $\lambda<1$, and establishes the convergence of the SGDM algorithm from any initialization $x_1$ to the true solution $x^*$. 
The linear convergence convergence is up to a neighborhood of $x^*$
with size $\sqrt{\frac{8M^2}{\batch(1-\lambda)}}\alpha\sigma$. With a larger batch size $B$ or a smaller $\sigma$, the size of the neighborhood will be smaller, which aligns with the experimental findings reported in \cite{kidambi2018insufficiency} that the superiority of momentum methods is mainly due to mini-batching.  Meanwhile, the second term in \eqref{eq:last-it}, $\norm{x_1-x^*}^2 \lambda^{2(1-\delta)t}$, remains important to determine the convergence rate in the initial stage. 
Subsequently, we will demonstrate that this term, particularly the quantity $\lambda$ in SGDM, is improved compared to the $\lambda$ in SGD.

\subsubsection{Linear convergence to a local neighborhood}
The second term in \eqref{eq:last-it} determines a linear convergence of SGDM to a local neighborhood of $x^*$ determined by its first term. The rate of this linear convergence is determined by $\gamma$, the spectral radius of the matrix \eqref{matrix}. We first provide some intuition in deriving $\lambda$. 
Considering the noiseless setting where the stochastic gradient is the same as the true gradient, i.e., $\nabla f_\xi(x)= \nabla f(x)$ for any $ x$ and $\xi$. 
We have $\nabla g (x) = \Sigma (x-x^*)$ and the SGDM updating rule (\ref{ab}) can be rewritten as 
\begin{eqnarray*}
    \left(\begin{array}{c}
         m_{t+1}\\
         x_{t+1}-x^*
    \end{array}\right) = \Gamma \left(\begin{array}{c}
         m_{t}\\
         x_{t}-x^*
    \end{array}\right) = \Gamma^t \left(\begin{array}{c}
         m_{1}\\
         x_{1}-x^*
    \end{array}\right),
\end{eqnarray*}
where $\Gamma$ is the matrix in (\ref{matrix}).
When the spectral radius of $\Gamma$ is less than 1, the full-batch gradient descent with momentum enjoys linear convergence. In Theorem \ref{thm:g_gamma}, the matrix $\Gamma$ satisfies $\norm{\Gamma^t}\leq M \lambda^t$ with $\lambda<1$, which is proved in the appendix. 
\begin{remark}
The assumption that $\Delta >0$ in \eqref{def:m} is placed for the diagonalization of the matrix $\Gamma$, which is required in our analysis to provide a last-iterate convergence analysis and can be relaxed if we aim for a time-average convergence analysis of the sum $\sum_t \mE[\norm{\widetilde{m}_{t}}^2+\norm{x_t-x^*}^2]$. 
More particularly, a time-average convergence analysis computes $\norm{\sum_{t=0}^\infty \Gamma^t} =\norm{(I-\Gamma)^{-1}}$ and requires only that the spectral radius $\lambda<1$.
\end{remark}
In Theorem \ref{thm:g_gamma}, the convergence rate 
mainly depends on the spectral radius $\lambda$ of the matrix \eqref{matrix}. 
We characterize its explicit form in the following theorem.

\begin{theorem}\label{thm:lambda}
For any momentum weight $\gamma\in[0,1)$, let $\alpha, \gamma$ satisfy $\alpha L<2(1+\gamma)/(1-\gamma)$ and define $\phi=\min\bbracket{\alpha\mu,2(1+\gamma)/(1-\gamma)-\alpha L}.$ If the momentum $\gamma<(1-\phi)^2/(1+\phi)^2$, we have that the spectral radius $\lambda$ of $\Gamma$ defined by (\ref{matrix}) satisfies
\begin{eqnarray}\label{eq:lambda}
    \lambda = \frac{\gamma+1-(1-\gamma)\phi + \sqrt{\bracket{\gamma+1-(1-\gamma)\phi}^2-4\gamma}}{2}.
\end{eqnarray}
On the other hand, if the momentum $\gamma\geq (1-\phi)^2/(1+\phi)^2$, we have $$\lambda=\sqrt{\gamma}.$$
\end{theorem}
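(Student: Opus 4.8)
The plan is to diagonalize the Hessian, which turns the $2d$-dimensional eigenvalue problem for $\Gamma$ into $d$ decoupled $2\times 2$ problems, and then to track how the modulus of the dominant eigenvalue of each block varies with the corresponding eigenvalue of $\Sigma$. First I would write $\Sigma = Q\Lambda Q^\top$ with $Q$ orthogonal and $\Lambda = \mathrm{diag}(\kappa_1,\dots,\kappa_d)$. Conjugating $\Gamma$ by the orthogonal matrix $\mathrm{diag}(Q,Q)$ simply replaces $\Sigma$ by $\Lambda$ in (\ref{matrix}); after a coordinate permutation this is block-diagonal, with $2\times 2$ diagonal blocks $\Gamma_k$ having first row $(\gamma,\ (1-\gamma)\kappa_k)$ and second row $(-\alpha\gamma,\ 1-\alpha(1-\gamma)\kappa_k)$. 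Hence the spectrum of $\Gamma$ is the union of the spectra of the $\Gamma_k$, and the spectral radius is $\lambda=\max_k \rho(\Gamma_k)$.

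Next I would analyze a single block. A direct computation gives $\mathrm{tr}(\Gamma_k)=\gamma+1-\alpha(1-\gamma)\kappa_k=:b_k$ and $\det(\Gamma_k)=\gamma$, so the characteristic polynomial is $z^2-b_kz+\gamma$. Its two roots therefore have product $\gamma$ and sum $b_k$, hence are either a complex-conjugate pair of modulus $\sqrt{\gamma}$ when $b_k^2<4\gamma$, or real of the same sign with dominant modulus $\bracket{\abs{b_k}+\sqrt{b_k^2-4\gamma}}/2$ when $b_k^2\ge 4\gamma$. Let $r(s)$ denote this dominant modulus as a function of $s=\abs{b_k}$: it equals $\sqrt\gamma$ for $s\le 2\sqrt\gamma$ and $\bracket{s+\sqrt{s^2-4\gamma}}/2$ for $s\ge 2\sqrt\gamma$, so it is continuous at $s=2\sqrt\gamma$ and non-decreasing in $s$. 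Since $r$ is monotone, $\lambda=r\bracket{\max_k\abs{b_k}}$.

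The crux is to show $\max_k\abs{b_k}=\gamma+1-(1-\gamma)\phi=:B$ and then to decide which branch of $r$ applies. Because $b_k$ is affine and decreasing in $\kappa_k$, it ranges between $b_d=\gamma+1-\alpha(1-\gamma)L$ (smallest) and $b_1=\gamma+1-\alpha(1-\gamma)\mu$ (largest), so $\max_k\abs{b_k}=\max\{\abs{b_1},\abs{b_d}\}$. The constraint $\alpha L<2(1+\gamma)/(1-\gamma)$ gives $\alpha(1-\gamma)L<2(1+\gamma)$, hence $b_d>-(1+\gamma)$, and together with $b_1\ge b_d$ this lets me discard the absolute values and write $\max\{\abs{b_1},\abs{b_d}\}=\max\{b_1,-b_d\}$. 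Pulling the factor $(1-\gamma)$ inside the minimum, $(1-\gamma)\phi=\min\{\alpha(1-\gamma)\mu,\ 2(1+\gamma)-\alpha(1-\gamma)L\}$, so that $\gamma+1-(1-\gamma)\phi=\max\{b_1,-b_d\}=B$. To pick the branch I compare $B$ with $2\sqrt\gamma$: using $1+\gamma-2\sqrt\gamma=(1-\sqrt\gamma)^2$ and $1-\gamma=(1-\sqrt\gamma)(1+\sqrt\gamma)$, dividing by $1-\sqrt\gamma>0$ shows $B\ge 2\sqrt\gamma$ is equivalent to $(1-\phi)\ge\sqrt\gamma(1+\phi)$, i.e. to $\gamma\le(1-\phi)^2/(1+\phi)^2$. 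Thus when $\gamma<(1-\phi)^2/(1+\phi)^2$ we have $B>2\sqrt\gamma$ and $\lambda=r(B)=\bracket{B+\sqrt{B^2-4\gamma}}/2$, which is exactly (\ref{eq:lambda}); otherwise $B\le 2\sqrt\gamma$ and $\lambda=r(B)=\sqrt\gamma$.

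I expect the main obstacle to be the reduction $\max_k\abs{b_k}=\gamma+1-(1-\gamma)\phi$: it needs both the monotonicity of $b_k$ in $\kappa_k$ (to restrict attention to the extreme eigenvalues $\mu,L$) and a careful sign analysis, powered by the learning-rate constraint, to remove the absolute values. A secondary subtlety is consistency when $\phi>1$, where $(1-\phi)/(1+\phi)<0\le\sqrt\gamma$ forces $B<2\sqrt\gamma$; one must check this agrees with $\gamma\ge(1-\phi)^2/(1+\phi)^2$, which holds because $B\ge 0$ implies $\phi\le(1+\gamma)/(1-\gamma)$, whence $(1-\phi)^2/(1+\phi)^2\le\gamma^2\le\gamma$.
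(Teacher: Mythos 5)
Your proposal is correct and follows essentially the same route as the paper: block-diagonalize $\Gamma$ via the eigendecomposition of $\Sigma$, note that each $2\times 2$ block has characteristic polynomial $z^2-b_kz+\gamma$ with $b_k=\gamma+1-\alpha(1-\gamma)\kappa_k$, and split into the real-root and complex-root cases to identify the spectral radius. If anything, your write-up is more careful than the paper's at the same steps — the monotone envelope $r(s)$, the explicit reduction $\max_k|b_k|=\gamma+1-(1-\gamma)\phi$, and the $\phi>1$ edge case are spelled out where the paper is terse — but the underlying argument is identical.
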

Theorem \ref{thm:lambda} reveals that the behavior of the convergence rate $\lambda$ is essentially different in two ranges of momentum weights $\gamma$, exhibiting a phase transition at $\gamma=(1-\phi)^2/(1+\phi)^2$. 
With Theorem \ref{thm:lambda}, the following remark sheds light on the optimal choice of $\alpha$ to achieve the fastest convergence rate. 
\begin{remark}\label{rem4}
When $\gamma$ increases from $0$ to $1$, the spectral radius $\lambda$ first decreases and then increases, and the minimal spectral radius is achieved under the condition $\gamma=(1-\phi)^2/(1+\phi)^2$. Particularly, for $\gamma\leq (1-\phi)^2/(1+\phi)^2$, the quantity $\phi$ in Theorem \ref{thm:lambda} is non-decreasing, and 
$\lambda$ decreases as $\gamma$ increases. For $\gamma\geq (1-\phi)^2/(1+\phi)^2$, the spectral radius $\lambda=\sqrt{\gamma}$ increases as $\gamma$ increases. 
Moreover, the minimal spectral radius is 
\begin{eqnarray*}
    \lambda^* = \frac{\sqrt{L}-\sqrt{\mu}}{\sqrt{L}+\sqrt{\mu}},
\end{eqnarray*}
if we specify $\alpha=1/\sqrt{\mu L}$ and $\gamma = (\sqrt{L}-\sqrt{\mu})^2/(\sqrt{L}+\sqrt{\mu})^2$, and it follows that $\phi = \sqrt{\mu/L}$.
\end{remark}

Figure \ref{fig:lambda} illustrates the spectral radium $\lambda$ presented in Theorem \ref{thm:lambda} with respect to different $\gamma$ and $\alpha$ when $L/\mu$ is set to $5$. A brighter color corresponds to smaller $\lambda$ so that the convergence is faster. 
Figure \ref{fig:lambda} verifies that the fastest convergence rate is achieved when $\alpha$ is approximately $1/\sqrt{\mu L}$, $\gamma$ is approximately $(\sqrt 5-1)^2/(\sqrt 5+1)^2\approx0.146$ and the optimal $\lambda^*$ is near $\frac{\sqrt 5- \sqrt 1}{\sqrt 5+\sqrt 1}\approx 0.382$. 

\begin{remark}\label{rem3}
Figure \ref{fig:lambda} shows that SGDM with large batch sizes converges faster than SGD to a local neighborhood of $x^*$. This is reflected by the observation that, in the figure, the minimal radium $\lambda$ for SGDM (with best $\gamma\approx0.146$) is smaller than the minimal $\lambda$ of SGD ($\gamma=0$). A smaller $\lambda$ for SGDM corresponds to a smaller second term in \eqref{eq:last-it} of Theorem \ref{thm:g_gamma}, and thus implies that SGDM will converge faster to enter a local neighborhood of $x^*$ determined by the first term in \eqref{eq:last-it}. This indicates a faster convergence rate of SGDM in the initial stage with mini-batching. 

Moreover, compared to SGD ($\gamma=0$), SGDM with a larger $\gamma$ permits more flexible choices of the learning rate $\alpha$, i.e., the colored region in the figure is larger for larger $\gamma$. This means that the convergence of SGDM is less sensitive to learning rates. 
The conclusion is validated by numerical experiments in the subsequent section. Particularly, Figure \ref{fig:dif-alpha} in Section \ref{sec:exp_sensitivity} shows that SGDM permits a wider range of learning rates to achieve convergence.
\end{remark}
\begin{figure}[ht!]
    \centering
    \includegraphics[width=6.7cm]{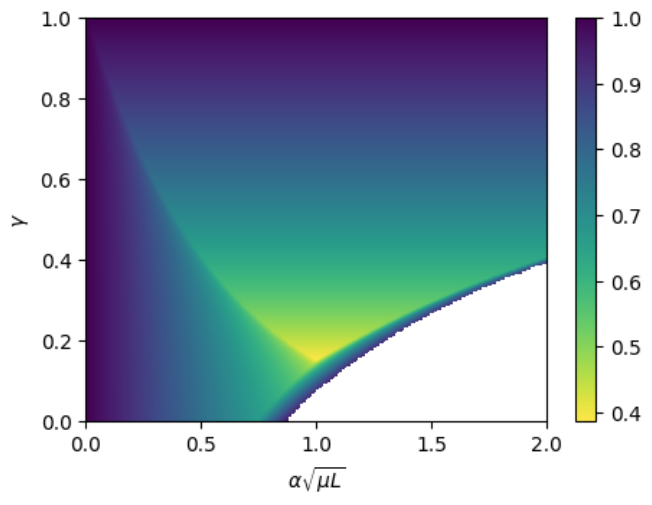}
    {\caption{The spectral radius of $\Gamma$ with respect to $\gamma$ and $\alpha$ in Theorem \ref{thm:lambda}, where $L/\mu=5/1$.}
    \label{fig:lambda}}
\end{figure}

\subsubsection{Explicit convergence rates for SGDM with specified $\gamma$ and $\alpha$}
In the following, we formalize the conclusions drawn from Theorem \ref{thm:g_gamma} and Remark \ref{rem4} in two corollaries on the explicit convergence rates of SGDM within the two phases of specifications of $\gamma$. 

\begin{corollary}[Small momentum weight $\gamma$] \label{cor:rate}
    Under the assumptions in Theorem \ref{thm:g_gamma}, for $0\leq \gamma\leq (1-\alpha L)/(4+4\alpha^2 L^2)$, we have that $\Delta\geq (1-\alpha L)^2/2$, $\alpha L^2 L_f^2 =\cO(\batch\mu)$ and 
    \begin{eqnarray*}
        \mE[\norm{\widetilde{m}_{t+1}}^2+\norm{{x}_{t+1}-x^*}^2]= \cO\bracket{\frac{L^2}{\batch\mu}\alpha\sigma^2 + L^2\lambda^{2(1-\delta)t}}.
    \end{eqnarray*}
\end{corollary}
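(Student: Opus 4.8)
The plan is to read this as a specialization of Theorem~\ref{thm:g_gamma}: the bound \eqref{eq:last-it} already has the correct two-term shape, so the task is only to convert the abstract constants $M$ and $\lambda$ into explicit orders in $L,\mu,\batch,\alpha$ valid in the small-momentum regime. I would establish three quantitative facts and then substitute: (i) $\Delta\ge (1-\alpha L)^2/2$; (ii) $M\asymp L$; and (iii) $1-\lambda\asymp\alpha\mu$. Granting these, the first term of \eqref{eq:last-it} is $2M^2\cdot\frac{4}{\batch(1-\lambda)}\alpha^2\sigma^2=\cO\bracket{\frac{L^2}{\batch\mu}\alpha\sigma^2}$ and the second is $2M^2\norm{x_1-x^*}^2\lambda^{2(1-\delta)t}=\cO\bracket{L^2\lambda^{2(1-\delta)t}}$ after absorbing $\norm{x_1-x^*}^2$ into the constant, which is exactly the claimed rate.

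For (i), write $\beta:=\alpha(1-\gamma)$, so each term inside the minimum defining $\Delta$ in \eqref{def:m} equals $(\gamma+1-\beta\kappa_k)^2-4\gamma=\beta^2\kappa_k^2-2(1+\gamma)\beta\kappa_k+(1-\gamma)^2$, a convex quadratic in $\kappa_k$. Since $\gamma+1-\beta\kappa_k$ is decreasing in $\kappa_k$ and remains $\ge 1-\alpha L\ge 0$ on $[\mu,L]$ (here $\gamma\ge 0$ forces $\alpha L\le 1$), the minimizing eigenvalue is $\kappa_k=L$, giving $\Delta=\bracket{(1-\alpha L)+\gamma(1+\alpha L)}^2-4\gamma$. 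Expanding and isolating the $-4\gamma$ defect reduces the target $\Delta\ge(1-\alpha L)^2/2$ to $4\gamma(1+\alpha^2L^2)\le(1-\alpha L)^2$, which the stated range of $\gamma$ supplies. Claim (ii) then follows from $M=\frac{4}{\sqrt{\Delta}}\bracket{2(1-\gamma)(1+\alpha L+L)+3\alpha\gamma}$: with $\sqrt{\Delta}\ge (1-\alpha L)/\sqrt2$ and $\alpha L\le 1$ (hence $\alpha\le 1/L$, $3\alpha\gamma=\cO(1/L)$, $1+\alpha L+L=\cO(L)$) one gets $M=\cO(L)$, while retaining only the $2(1-\gamma)L$ summand together with $\sqrt{\Delta}=\cO(1)$ gives $M\gtrsim L$. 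Throughout I treat $1-\alpha L$ as order one, i.e. keep $\alpha$ bounded away from $2/L$.

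Claim (iii) is where Theorem~\ref{thm:lambda} is used. I would first check that the small-$\gamma$ hypothesis places us in the first phase $\gamma<(1-\phi)^2/(1+\phi)^2$ with $\phi=\alpha\mu$ (the active branch of the minimum at small learning rates), so that $\lambda$ is the larger root of $z^2-(1+\gamma-(1-\gamma)\phi)z+\gamma=0$. Its two roots multiply to $\gamma$ and sum to $1+\gamma-(1-\gamma)\phi$, whence $(1-\lambda)(1-\gamma/\lambda)=(1-\gamma)\phi$; since $\gamma/\lambda\in(0,1)$, this pins down $1-\lambda\asymp(1-\gamma)\alpha\mu\asymp\alpha\mu$. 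The same two-sided estimate feeds the step-size hypothesis $16M^2\alpha^2L_f^2\le\batch\delta\lambda^{2(1-\delta)}(1-\lambda)$: bounding $\lambda^{2(1-\delta)}\le1$ and $1-\lambda=\cO(\alpha\mu)$ on the right and using $M\gtrsim L$ on the left yields the auxiliary claim $\alpha L^2L_f^2=\cO(\batch\mu)$. Substituting (ii) and (iii) into \eqref{eq:last-it} gives the rate. The main obstacle is (i): near the boundary of the admissible $\gamma$-range the $-4\gamma$ term is genuinely comparable to $(1-\alpha L)^2$, so the reduction to $\kappa_k=L$ and the expansion must be done exactly and the constant in the $\gamma$-restriction spent precisely; a secondary nuisance is keeping the $(1-\alpha L)$ factors in $M$ under control, which is cleanest when $\alpha L$ is bounded away from $1$.
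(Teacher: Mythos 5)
Your overall route---reading Corollary \ref{cor:rate} as a specialization of Theorem \ref{thm:g_gamma} via explicit bounds $\Delta\gtrsim(1-\alpha L)^2$, $M=\cO(L)$, and $1-\lambda\asymp\alpha\mu$, then substituting into \eqref{eq:last-it}---is the intended one, and steps (ii) and (iii) are essentially sound (the identity $(1-\lambda)(1-\gamma/\lambda)=(1-\gamma)\phi$ together with $\gamma\le 1/4$, hence $\gamma/\lambda\le\sqrt{\gamma}\le 1/2$, does give $1-\lambda\asymp\alpha\mu$). However, step (i) contains a genuine gap. Your own expansion shows, with $a=1-\alpha L$ and $b=1+\alpha L$, that $\Delta=(a+\gamma b)^2-4\gamma=a^2-2\gamma(1+\alpha^2L^2)+\gamma^2b^2$, so that $\Delta\ge a^2/2$ reduces (after dropping the nonnegative $\gamma^2 b^2$) to $4\gamma(1+\alpha^2L^2)\le a^2$. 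But the stated range only supplies $4\gamma(1+\alpha^2L^2)\le a$, and since $0\le a\le 1$ one has $a^2\le a$: the hypothesis is \emph{weaker} than what your reduction needs, so the implication you assert runs in the wrong direction. This is not removable slack. Take $\alpha L=1/2$ and $\gamma=0.1$, which is exactly the top of the stated range $(1-\alpha L)/(4+4\alpha^2 L^2)=0.1$; then $\gamma+1-\alpha(1-\gamma)L=0.65$, the eigenvalue branch at $\kappa_d=L$ is real and minimal, and $\Delta=0.65^2-4\gamma=0.0225$, far below $(1-\alpha L)^2/2=0.125$. Your closing caveat that $1-\alpha L$ be of order one does not repair this, since the counterexample has $1-\alpha L=1/2$.

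What your computation actually proves is the corollary under the tightened restriction $\gamma\le (1-\alpha L)^2/(4+4\alpha^2 L^2)$ (note the square), in which case $\Delta\ge a^2-2\gamma(1+\alpha^2L^2)\ge a^2/2$ follows at once and the remaining substitutions close as you describe; the discrepancy you have run into is thus arguably inherited from the statement itself, but as a proof of the corollary as written, the sentence ``which the stated range of $\gamma$ supplies'' is false and the claim $\Delta\ge(1-\alpha L)^2/2$ fails on part of the stated range. Two smaller points: your bound $M=\cO(L)$ implicitly uses $L\gtrsim 1$ (as does the paper's own $L^2$ prefactor, so this is a shared convention rather than an error); and in (iii) the stated $\gamma$-range does not always place you in the first phase of Theorem \ref{thm:lambda}---for $\alpha\mu$ large one can have $\lambda=\sqrt{\gamma}$---but in that phase $\gamma\ge(1-\phi)^2/(1+\phi)^2$ forces $1-\lambda\le 2\phi/(1+\phi)\le 2\alpha\mu$ while $\gamma\le 1/4$ gives $1-\lambda\ge 1/2\ge\alpha\mu/2$, so both directions of $1-\lambda\asymp\alpha\mu$ survive and this is only a presentational omission.
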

Corollary \ref{cor:rate} establishes the explicit finite-sample rate of convergence of SGDM under small $\gamma$, and shows that, the spectral radius $\lambda$ decreases slowly as $\gamma$ increases from $0$ to $(1-\alpha L)/(4+4\alpha^2 L^2)$. 

\begin{corollary}[Large momentum weight $\gamma$] \label{cor:gamma}
    Under the assumptions in Theorem \ref{thm:g_gamma},  for $0<1-\gamma\leq 2\alpha\mu/(1+\alpha\mu)^2$, we have that $\Delta\geq 2 (1-\gamma)\alpha\mu$,  $\alpha  L_f^2\bracket{L^2+\frac{\alpha^2}{(1-\gamma)^2}} =\cO\bracket{\batch\mu},$ and
    \begin{eqnarray*}
        \mE[\norm{\widetilde{m}_{t+1}}^2+\norm{{x}_{t+1}-x^*}^2]= \cO\bracket{\frac{1}{\batch\mu}\bracket{L^2+\frac{\alpha^2}{(1-\gamma)^2}}\alpha\sigma^2+ \bracket{L^2+\frac{\alpha}{(1-\gamma)\mu}}\lambda^{2(1-\delta)t} },
    \end{eqnarray*}
    where $\lambda$ is defined in \eqref{eq:lambda} above. Furthermore, if $\alpha< \frac{2(1+\gamma)}{(\mu+L)(1-\gamma)}$, we have $\lambda=\sqrt{\gamma}$.
    
\end{corollary}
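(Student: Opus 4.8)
The plan is to specialize Theorem~\ref{thm:g_gamma} to the large-momentum window $0<1-\gamma\leq 2\alpha\mu/(1+\alpha\mu)^2$, in which the three quantities $\Delta$, $M$, and $1-\lambda$ that govern the bound \eqref{eq:last-it} all simplify. First I would establish the stated lower bound on $\Delta$; then propagate it through the definition of $M$ in \eqref{def:m}; next verify that the step-size/variance requirement of Theorem~\ref{thm:g_gamma} collapses to the stated condition $\alpha L_f^2(L^2+\alpha^2/(1-\gamma)^2)=\cO(\batch\mu)$; and finally substitute into \eqref{eq:last-it} and read off the two terms.

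For the bound on $\Delta$, set $s_k=\alpha(1-\gamma)\kappa_k$ and expand the block discriminant as $(\gamma+1-s_k)^2-4\gamma=q(s_k)$, where $q(s)=s^2-2(1+\gamma)s+(1-\gamma)^2=\bracket{s-(1-\sqrt\gamma)^2}\bracket{s-(1+\sqrt\gamma)^2}$ has roots $(1\pm\sqrt\gamma)^2$. Thus $\Delta=\min_k\abs{q(s_k)}$, and since $\abs{q}$ vanishes at the two roots and is unimodal in between, its minimum over $s_k\in[\alpha(1-\gamma)\mu,\,\alpha(1-\gamma)L]$ is attained at an endpoint. At the $\kappa=\mu$ endpoint, factoring $1-\gamma=(1-\sqrt\gamma)(1+\sqrt\gamma)$ reduces the target $\abs{q(\alpha(1-\gamma)\mu)}\geq 2(1-\gamma)\alpha\mu$ to $\bracket{\alpha\mu(1+\sqrt\gamma)-(1-\sqrt\gamma)}\bracket{(1+\sqrt\gamma)-\alpha\mu(1-\sqrt\gamma)}\geq 2\alpha\mu$, which a short computation shows is exactly what the window $1-\gamma\leq 2\alpha\mu/(1+\alpha\mu)^2$ delivers (the constant $2$ being calibrated so this inequality is tight as $\gamma\to 1$). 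I expect the \emph{main obstacle} to be controlling the $\kappa=L$ endpoint uniformly: one must verify $s_L$ stays far enough from the upper root $(1+\sqrt\gamma)^2$ that the whole spectrum remains in the well-conditioned (oscillatory) regime, which is precisely the branch of Theorem~\ref{thm:lambda} one is in and is where the finer learning-rate restriction enters.

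Granting $\Delta\geq 2(1-\gamma)\alpha\mu$, applying $(a+b)^2\leq 2a^2+2b^2$ to the two summands of $M$ gives $M^2=\cO\!\bracket{\frac{(1-\gamma)(1+\alpha L+L)^2}{\alpha\mu}+\frac{\alpha}{(1-\gamma)\mu}}$. The first summand is of order $L^2$ up to constant conditioning factors (using the window bound $1-\gamma\leq 2\alpha\mu$), so one also has the coarser $M^2=\cO(L^2+\alpha/((1-\gamma)\mu))$; I will keep the finer form for the variance term and the coarser form for the initialization term. In this regime $1-\lambda=1-\sqrt\gamma\in[\tfrac{1-\gamma}{2},\,1-\gamma]$ and $\lambda^{2(1-\delta)}=\gamma^{1-\delta}=\Theta(1)$, so inserting the finer $M^2$ into the hypothesis $16M^2\alpha^2L_f^2\leq \batch\delta\lambda^{2(1-\delta)}(1-\lambda)$ and dividing through by $1-\gamma$ reduces it to $\alpha L_f^2(L^2+\alpha^2/(1-\gamma)^2)=\cO(\batch\mu)$, as claimed. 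Substituting into \eqref{eq:last-it} with $1/(1-\lambda)\leq 2/(1-\gamma)$, the variance term $8M^2\alpha^2\sigma^2/(\batch(1-\lambda))$ becomes $\cO\!\bracket{\frac{\alpha\sigma^2}{\batch\mu}\bracket{L^2+\frac{\alpha^2}{(1-\gamma)^2}}}$ after multiplying the two pieces of the finer $M^2$ by $\alpha^2/(1-\gamma)$, while the initialization term $2M^2\lambda^{2(1-\delta)t}$ becomes $\cO\!\bracket{\bracket{L^2+\frac{\alpha}{(1-\gamma)\mu}}\lambda^{2(1-\delta)t}}$ from the coarser $M^2$ bound, yielding the asserted estimate.

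For the final ``furthermore'' claim, the extra condition $\alpha<2(1+\gamma)/((\mu+L)(1-\gamma))$ rearranges to $\alpha\mu<2(1+\gamma)/(1-\gamma)-\alpha L$, so the minimum defining $\phi$ in Theorem~\ref{thm:lambda} equals $\alpha\mu$. Since the window gives $1-\gamma\leq 2\alpha\mu/(1+\alpha\mu)^2\leq 4\alpha\mu/(1+\alpha\mu)^2=1-(1-\alpha\mu)^2/(1+\alpha\mu)^2$, we obtain $\gamma\geq (1-\phi)^2/(1+\phi)^2$, and the second case of Theorem~\ref{thm:lambda} yields $\lambda=\sqrt\gamma$, closing the argument.
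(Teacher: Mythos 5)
The paper never writes out a separate proof of Corollary \ref{cor:gamma} --- it is presented as a direct specialization of Theorems \ref{thm:g_gamma} and \ref{thm:lambda} --- so your proposal has to stand on its own. Much of it does. With $s_k=\alpha(1-\gamma)\kappa_k$, the factorization $q(s)=\bracket{s-(1-\sqrt{\gamma})^2}\bracket{s-(1+\sqrt{\gamma})^2}$ is right, and your claim that the window is calibrated exactly to the $\mu$-endpoint is in fact an equivalence: $\abs{q(s_\mu)}\geq 2(1-\gamma)\alpha\mu$ reduces to $\gamma(1+\alpha\mu)^2\geq 1+\alpha^2\mu^2$, i.e.\ precisely $1-\gamma\leq 2\alpha\mu/(1+\alpha\mu)^2$. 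The ``furthermore'' step ($\phi=\alpha\mu$, the window forces $\gamma\geq(1-\phi)^2/(1+\phi)^2$, hence $\lambda=\sqrt{\gamma}$ by Theorem \ref{thm:lambda}) is also correct.

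The genuine gap is the one you flag yourself and then wave away: $\Delta$ in \eqref{def:m} is a minimum over the whole spectrum, and you only prove the bound at the $\kappa=\mu$ endpoint. Your appeal to ``the branch of Theorem \ref{thm:lambda} one is in'' cannot close it, for two reasons. First, the restriction $\alpha<2(1+\gamma)/((\mu+L)(1-\gamma))$ is hypothesized only in the furthermore clause, so it is not available for the main claim. Second, and mathematically more important, lying in the oscillatory branch does not keep $\abs{q(s_L)}$ large: $\abs{q}$ vanishes at the root $(1+\sqrt{\gamma})^2$, so an eigenvalue just inside it --- or just above it, which the hypothesis $\alpha L<2(1+\gamma)/(1-\gamma)$ of Theorem \ref{thm:g_gamma} still permits --- makes $\Delta$ arbitrarily small. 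Concretely, take $d=2$, $\gamma=0.81$, $\alpha\mu=0.2$, and $L$ such that $s_L=\alpha(1-\gamma)L=3.615$. Then the window holds ($0.19\leq 0.4/1.44$), all hypotheses of Theorem \ref{thm:g_gamma} hold (note $(1+\sqrt{\gamma})^2=3.61<3.615<3.62=2(1+\gamma)$, $\Delta>0$, and the batch-size condition is met by taking $\batch$ large), yet $\Delta=(3.615-0.01)(3.615-3.61)\approx 0.018<0.076=2(1-\gamma)\alpha\mu$. The same example breaks your mid-proof simplifications ``$1-\lambda=1-\sqrt{\gamma}\geq(1-\gamma)/2$'' and ``$\lambda^{2(1-\delta)}=\Theta(1)$'': there $\phi=2(1+\gamma)/(1-\gamma)-\alpha L\approx 0.026$ is the active term of the minimum, $\gamma<(1-\phi)^2/(1+\phi)^2$, and \eqref{eq:lambda} gives $\lambda\approx 0.97$, so $1-\lambda\approx 0.03\ll(1-\gamma)/2=0.095$. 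The repair is to carry the furthermore restriction through the entire corollary: $\alpha(\mu+L)(1-\gamma)<2(1+\gamma)$ says precisely $s_\mu+s_L<2(1+\gamma)$, the sum of the two roots of $q$; since $q\bracket{2(1+\gamma)-s}=q(s)$ and $\abs{q}$ is concave between its roots, every $s_k\in[s_\mu,\,2(1+\gamma)-s_\mu]$ satisfies $\abs{q(s_k)}\geq\abs{q(s_\mu)}\geq 2(1-\gamma)\alpha\mu$, and simultaneously $\lambda=\sqrt{\gamma}$, which then legitimizes your substitution into \eqref{eq:last-it} and the reduction of the batch-size condition. Without importing that restriction (or explicitly assuming the minimum defining $\Delta$ is attained at $\kappa=\mu$), your argument --- and, read literally, the corollary's first claim --- does not go through.
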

Corollary \ref{cor:gamma} demonstrates that the spectral radius $\lambda$ increases with $\gamma$ when it approaches $1$, exhibiting an opposite behavior compared to the small $\gamma$ settings that in Corollary \ref{cor:rate}.
As we demonstrated in Remark \ref{rem3} above, compared to mini-batch SGD ($\gamma=0$), mini-batch  SGDM with an appropriate $\gamma$ converges faster to the local neighborhood of $x^*$ by achieving a smaller second term in their convergence rates. The following Remark \ref{re:6} implements an optimal choice of the momentum weight $\gamma$ in SGDM, to provide explicit convergence rates of $\widetilde{m}_{t+1}$ and ${x}_{t+1}$ with respect to $L$, $\mu$ and $B$.
\begin{remark}\label{re:6}
    As one increases $\gamma$ from $0$ to $1$, the corresponding spectral radius $\lambda$ first decreases and then increases. 
    Particularly, if one specifies $\gamma$ and $\alpha$ as
\begin{eqnarray*}
    \gamma = \frac{(1-c\alpha \mu)^2}{(1+c\alpha \mu)^2}, \quad \text{and} \quad \alpha\leq  \sqrt{\frac{1}{\mu L}},
\end{eqnarray*}
for $c<1$ sufficiently close to 1, we have that $\Delta=\cO(\alpha^2\mu^2)$ and
\begin{eqnarray*}
    \mE[\norm{\widetilde{m}_{t+1}}^2+\norm{{x}_{t+1}-x^*}^2]= \cO\bracket{\frac{(L+1/\mu)^2}{ \batch\mu}\alpha\sigma^2+(L+1/\mu)^2\gamma^{(1-\delta)t} }.
\end{eqnarray*}
\end{remark}

We now compare our rate of convergence with several results in the existing literature.

\paragraph{Comparison to SGD.}
Theorem 4.6 in \cite{bottou2018optimization} proved the convergence rate of SGD under strong convexity $\cO(\frac{L}{\mu}\alpha \sigma^2+(1-\alpha \mu)^t)$ under the assumption $\mE[\norm{\nabla f_{\xi}(x)-\nabla f(x)}^2]\leq \sigma^2$ for all $x$. Comparing with their result, our analysis in Theorem \ref{thm:lambda} can apply with $\gamma=0$ and $\alpha< 2/(\mu+L)$ such that $\phi=\alpha\mu$, which is consistent with \cite{bottou2018optimization}. 
Our Remark \ref{rem4} also shows that SGDM with large momentum weight can accelerate the convergence rate in \cite{bottou2018optimization}. 

\paragraph{Comparison to existing results for SGDM.}
\cite{liu2020improved} proved that the convergence rate for strongly convex loss is $\cO(\frac{L}{\mu}\alpha\sigma^2+\max\{1-\alpha\mu, \gamma\}^t)$ for $t\geq t_0=\cO(-1/\log(\gamma))$. Comparing that with Corollary \ref{cor:rate} above, our result improves the rate when the momentum weight $\gamma$ is small. 
Moreover, \cite{liu2020improved} required learning rate $\alpha = \cO(1-\gamma)$, which is considerably more restrictive than the range permitted in our Theorems that $\alpha L<2(1+\gamma)/(1-\gamma)$. This requirement is in opposition to our finding that SGDM permits a wider range of learning rates. The stochastic heavy ball (SHB) method is equivalent to SGDM for $\alpha'=\alpha(1-\gamma)$ and $\gamma'=\gamma$. Under the noiseless setting, for $\alpha'=1/L$ and $\gamma'= (\sqrt{L/\mu}-1)/(\sqrt{L/\mu}+1)$, \cite{nesterov1983method} gave the $\cO((1-\sqrt{\mu/L})^t)$ convergence rate. For $\alpha'=\cO(1/L)$ and $\alpha=\cO(\sqrt{1/\mu L})$ in Remark \ref{re:6}, our convergence factor is $\gamma =\cO(1-\sqrt{\mu/L})$, which improves the factor $\cO(1-\mu/L)$ in SGD and is consistent with \cite{nesterov1983method}. Theorem 3.5 in \cite{bollapragada2022fast} proved that for $
    \sqrt{\alpha'} = \frac{2}{\sqrt{L+\epsilon}+\sqrt{\mu-\epsilon}}$ and $\sqrt{\gamma'} = \frac{\sqrt{(L+\epsilon)/(\mu-\epsilon)}-1}{\sqrt{(L+\epsilon)/(\mu-\epsilon)}+1}$,  $\epsilon\in (0,\mu)$, the convergence rate of the SHB method on a consistent linear system $Ax=b$ is $\cO(\frac{L^2}{\epsilon}t^2(\gamma')^t+\frac{L^4}{\epsilon\mu^2}\sigma^2)$. Their convergence rate is the same as ours in Remark \ref{re:6} under a different setting, while their results are more restrictive on the choice of learning rates and momentum weights. 

\subsection{The finite-sample rates for SGDM on general losses} For the general strongly convex loss where $\overline L>0$, we provide the following convergence rate.

\begin{theorem}\label{thm:lrate}
Under (A1), (A2), and (A3') and $\overline{L}>0$, for any momentum $\gamma\in[0,1)$, assume the learning rate $\alpha>0$ satisfies $\alpha L<2(1+\gamma)/(1-\gamma)$ and
\begin{eqnarray*}
    6\sqrt{2}c  (2\log T+4d)  M\bracket{\frac{2 L_f}{\lambda^{(1-\delta)}\sqrt{\delta}(1-\lambda)^{1/2}}+ \frac{3 \overline{L} M \alpha\sigma}{(1-\lambda)^{3/2}}}\alpha\leq \sqrt{\batch},
\end{eqnarray*}
for fixed $\delta\in(0,1]$ and an absolute constant $c>0$, where $M$ is defined in (\ref{def:m}). In addition, the initialization $x_1$ satisfies $ \norm{x_1-x^*}\leq \frac{\delta(1-\lambda)\lambda^{1-\delta}}{   36 M^2  
 \alpha \overline{L}} $. 
Then with probability $1-2 T^{-1}$, 
\begin{eqnarray*}
\sqrt{\norm{\widetilde{m}_{t+1}}^2+\norm{x_{t+1}-x^*}^2}\leq \frac{3\sqrt{2}c \bracket{2\log T+4d} M}{\sqrt{\batch(1-\lambda)}}\alpha \sigma  + 3 M \norm{x_1-x^*}\lambda^{(1-\delta)t},~ \text{for } 1\leq t\leq T.
\end{eqnarray*}
\end{theorem}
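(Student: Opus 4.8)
The plan is to linearize the SGDM recursion around $x^*$ and to view the joint evolution of the smoothed momentum $\widetilde m_{t+1}$ and the error $x_{t+1}-x^*$ as a perturbed linear system driven by the transition matrix $\Gamma$ of \eqref{matrix}, exactly as in the noiseless analysis preceding Theorem~\ref{thm:g_gamma}, except that the perturbation now carries both the stochastic noise and the Taylor remainder of the nonlinear gradient. Write $\zeta_j:=\nabla g_{\eta_j}(x_j)-\Sigma(x_j-x^*)$ and $e_{t+1}:=m_{t+1}-\widetilde m_{t+1}$. A direct computation from \eqref{ab} and the definition of $\widetilde m_{t+1}$ gives the coupled recursions $\widetilde m_{t+1}=\gamma\widetilde m_t+(1-\gamma)\Sigma(x_t-x^*)$ and $e_{t+1}=\gamma e_t+(1-\gamma)\zeta_t$, so that $e_{t+1}=(1-\gamma)\sum_{j=1}^{t}\gamma^{t-j}\zeta_j$ and the augmented state $s_t:=\bracket{\widetilde m_{t+1},\,x_{t+1}-x^*}$ satisfies
\[
s_t=\Gamma s_{t-1}+\bracket{0,\,-\alpha e_{t+1}},\qquad s_0=\bracket{0,\,x_1-x^*}.
\]
Unrolling this recursion, invoking the spectral bound $\norm{\Gamma^k}\le M\lambda^{k}$ established for Theorem~\ref{thm:g_gamma}, and exchanging the order of the two nested geometric sums (legitimate since $\gamma<\sqrt\gamma\le\lambda$, with $(1-\gamma)\lambda/(\lambda-\gamma)\le 2$), I would reduce everything to a single convolution estimate
\[
\norm{s_t}\le M\norm{x_1-x^*}\lambda^{t}+2\alpha M\sum_{j=1}^{t}\lambda^{t-j}\norm{\zeta_j}.
\]

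Next I would split the driving term as $\zeta_j=\epsilon_j+r_j$, where $\epsilon_j:=\nabla g_{\eta_j}(x_j)-\nabla f(x_j)$ is a martingale difference for the natural filtration and $r_j:=\nabla f(x_j)-\Sigma(x_j-x^*)$ is the deterministic Taylor remainder. By (A2) and $\nabla f(x^*)=0$ the remainder obeys $\norm{r_j}\le\tfrac{\overline L}{2}\norm{x_j-x^*}^2\le\tfrac{\overline L}{2}\norm{s_{j-1}}^2$, i.e.\ it is quadratic in the very quantity being bounded; this is the source of the nonlinear feedback. For the noise I would write $\epsilon_j=\big(\nabla g_{\eta_j}(x^*)-\nabla f(x^*)\big)+\big(\nabla g_{\eta_j}(x_j)-\nabla g_{\eta_j}(x^*)-(\nabla f(x_j)-\nabla f(x^*))\big)$; under (A3') mini-batch averaging makes the first piece sub-exponential of scale $\sigma/\sqrt{\batch}$, while the second piece is sub-exponential of scale $L_{\eta_j}\norm{x_j-x^*}/\sqrt{\batch}$ controlled by $L_f$. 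These two pieces produce, respectively, the $\sigma$-term of the final bound and the $L_f$-term of the learning-rate hypothesis.

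The two remaining tasks are a high-probability bound on the noise convolution and a self-bounding control of the remainder convolution. For the former, $\sum_{j\le t}\lambda^{t-j}\epsilon_j$ is a weighted martingale with sub-exponential increments, so I would apply a Freedman/Bernstein-type maximal inequality combined with an $\varepsilon$-net over the unit sphere of $\mR^d$ and a union bound over $1\le t\le T$: the net contributes the dimension factor $4d$, the union bound the $\log T$ factor, and the weights $\sum_j\lambda^{2(t-j)}\le(1-\lambda^2)^{-1}$ the $(1-\lambda)^{-1/2}$ factor, yielding $\sup_{t\le T}2\alpha M\sum_j\lambda^{t-j}\norm{\epsilon_j}\lesssim \frac{(2\log T+4d)M}{\sqrt{\batch(1-\lambda)}}\alpha\sigma$ with probability at least $1-2T^{-1}$. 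For the remainder, I would run an induction on $t$: assuming $\norm{s_j}\le B_j$ for all $j<t$ on this good event, where $B_t$ denotes the asserted right-hand side carrying its $3M\norm{x_1-x^*}\lambda^{(1-\delta)t}$ and $\sigma$ terms, the remainder contributes at most $\overline L\,\alpha M\sum_j\lambda^{t-j}B_{j-1}^2$, and the smallness of $\norm{x_1-x^*}$ and of $\alpha$ imposed in the hypotheses is calibrated precisely so that this quadratic feedback is reabsorbed into $B_t$ and the induction closes. The decay is only $\lambda^{(1-\delta)t}$, rather than $\lambda^t$, because convolving the geometric factor $\lambda^{t-j}$ against a decaying term such as $\lambda^{2(1-\delta)(j-1)}$ produces sums of the form $(1-\lambda^{\delta})^{-1}\asymp(\delta(1-\lambda))^{-1}$; this is exactly where the slack $\delta$ and the factors $\delta(1-\lambda)$ in the hypotheses originate.

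The hard part is this coupling. Because the Taylor remainder injects $\norm{s_{j-1}}^2$ back into the recursion, a naive Gronwall estimate diverges, and the induction closes only if one separates $B_{j-1}^2$ into its decaying initialization part and its non-decaying $\sigma/\sqrt{\batch}$ part, bounds each squared-and-convolved sum by the corresponding linear term in $B_t$, and checks that the two side conditions leave the slack $\delta\lambda^{2(1-\delta)}(1-\lambda)$ needed to keep both contributions inside the target uniformly over $t\le T$. Carrying this out simultaneously with the martingale concentration, while the noise $\epsilon_j$ itself depends on the random location $x_j$ (so that the sub-exponential scale of the increments is random and must itself be controlled by the inductive bound), is the principal technical obstacle, and it is what forces the stronger assumption (A3') in place of (A3).
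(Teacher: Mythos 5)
Your proposal follows essentially the same route as the paper's proof: the same three-way decomposition of the driving term $\nabla g_{\eta_k}(x_k)-\Sigma(x_k-x^*)$ into the noise at the optimum $\nabla g_{\eta_k}(x^*)$, the $L_f$-Lipschitz martingale-difference part, and the quadratic Taylor remainder $V_k\widetilde x_k$ with $\norm{V_k}\leq \overline L\norm{\widetilde x_k}$ (the paper's Lemmas \ref{l1sum1}, \ref{l1sum2}, \ref{l1sum3}); the same $\varepsilon$-net plus Bernstein concentration with a union bound over $t\leq T$ (the paper's Lemma \ref{le:sube}), which is the source of the $(2\log T+4d)$ factor; and the same induction over high-probability events in which the smallness conditions on $\alpha$ and $\norm{x_1-x^*}$ are exactly what reabsorb the quadratic feedback, with the same $\delta$-slack bookkeeping via Lemma \ref{le:sum}. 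One caution: your displayed ``single convolution estimate'' places $\norm{\zeta_j}$ inside the sum, and if taken literally for the noise term it would only yield a $(1-\lambda)^{-1}$ factor rather than the theorem's $(1-\lambda)^{-1/2}$, since the triangle inequality destroys the martingale cancellation; your subsequent treatment---concentration applied to the vector sum $\sum_j\lambda^{t-j}\epsilon_j$ with sum of squared weights bounded by $(1-\lambda^2)^{-1}$---is the correct (and the paper's) reading, so for that term the passage to norms must be deferred until after the concentration step.
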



Compared to Theorem \ref{thm:g_gamma}, the convergence rate in Theorem \ref{thm:lrate} is worse but with only up to a logarithm term, which indicates that SGDM for general losses achieves almost the same rate as that for quadratic ones. 
This convergence bound is presented with high probability for general losses, in contrast to the convergence in expectation outlined for the quadratic losses previously. Nonetheless, it remains uniformly applicable across all iterations $T$.

Mini-batch SGDM for general convex loss still converges faster than mini-batch SGD with high probability, as discussed in Theorem \ref{thm:lambda}.
In contrast to the quadratic settings $\overline{L}=0$, the general convex setting requires additionally that the initialization $x_1$ is close to $x^*$, i.e., $\norm{x_1-x^*}$ is small. This assumption is expected since the loss function is not guaranteed to be convex everywhere under the weak assumptions in (A1)--(A2). Particularly, for $\gamma$ close to 0, we assume $\norm{x_1-x^*} = \cO\bracket{\mu/\overline{L}L^2}$. For $\gamma$ close to $1$, we assume $\norm{x_1-x^*} = \cO\bracket{\mu/\overline{L}\bracket{L^2+\alpha^2/(1-\gamma)^2}}$.

Based on Theorem \ref{thm:lrate}, we have the following corollaries of the convergence rates for small and large specifications of $\gamma$ under general losses, similar to Corollaries \ref{cor:rate}--\ref{cor:gamma} for quadratic losses. 

\begin{corollary}[Small momentum weight $\gamma$]
    Following Theorem \ref{thm:lrate}, for $0\leq \gamma\leq (1-\alpha L)/(4+4\alpha^2 L^2)$, we have that $\Delta\geq (1-\alpha L)^2/2$, $\sqrt{\alpha} L_f L\bracket{\mu+\overline{L}L} = \cO\bracket{\batch^{1/2} \mu^{3/2}}$ and with high probability,
    \begin{eqnarray*}
    \sqrt{\norm{\widetilde{m}_{t+1}}^2+\norm{x_{t+1}-x^*}^2}=\cO\bracket{\frac{L\log T}{\sqrt{\batch\mu}}\sqrt{\alpha}\sigma+L\lambda^{(1-\delta)t}},
    \end{eqnarray*}    where $\lambda$ is defined in \eqref{eq:lambda} above. 
\end{corollary}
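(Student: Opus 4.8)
The plan is to derive this corollary as the general-loss counterpart of Corollary~\ref{cor:rate}: specialize the estimates of $\Delta$, $M$, and $\lambda$ to the small-momentum window $0\leq\gamma\leq(1-\alpha L)/(4+4\alpha^2 L^2)$, and then substitute them into both the high-probability bound and the admissibility condition of Theorem~\ref{thm:lrate}. Throughout I would treat $c$, $d$, $\delta$, and $\log T$ as the multiplicative constants that the $\cO(\cdot)$ in the statement hides, take $\alpha L$ bounded away from $1$ so that $1-\alpha L=\Theta(1)$, and (as in the displayed condition) identify the gradient-noise scale $\sigma$ with the smoothness scale $L_f$.

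First I would establish the algebraic lower bound $\Delta\geq(1-\alpha L)^2/2$. Writing $h(\kappa)=\gamma+1-\alpha(1-\gamma)\kappa$, the map $\kappa\mapsto h(\kappa)$ is positive and decreasing on $[\mu,L]$ (using $\alpha L<1$ on this window), so $h(\kappa)^2-4\gamma$ is minimized over the spectrum at $\kappa=L$. Expanding $h(L)=(1-\alpha L)+\gamma(1+\alpha L)$ gives $h(L)^2-4\gamma=(1-\alpha L)^2-2(1+\alpha^2L^2)\gamma+(1+\alpha L)^2\gamma^2$, and the upper bound on $\gamma$ is exactly what keeps the middle term from eroding more than half of $(1-\alpha L)^2$, yielding $\Delta=h(L)^2-4\gamma\geq(1-\alpha L)^2/2$. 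Feeding $\sqrt{\Delta}\geq(1-\alpha L)/\sqrt2$ into \eqref{def:m}, together with $1-\gamma\leq1$ and the fact that $2(1-\gamma)(1+\alpha L+L)+3\alpha\gamma=\cO(L)$, bounds $M=\cO(L)$.

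Next I would pin down $\lambda$. Since $\gamma\leq(1-\alpha L)/(4+4\alpha^2L^2)$ lies well below the phase-transition threshold $(1-\phi)^2/(1+\phi)^2$ with $\phi=\alpha\mu$, Theorem~\ref{thm:lambda} applies in its first branch and $\lambda$ is given by \eqref{eq:lambda}; by Remark~\ref{rem4}, $\lambda$ is decreasing in $\gamma$ on this window, so $\lambda\leq\lambda\big|_{\gamma=0}=1-\alpha\mu$, whence $1-\lambda\geq\alpha\mu$ and $\lambda^{(1-\delta)}=\Theta(1)$. Substituting into Theorem~\ref{thm:lrate}, the first term becomes $\frac{M\alpha\sigma}{\sqrt{\batch(1-\lambda)}}\leq\frac{\cO(L)\alpha\sigma}{\sqrt{\batch\,\alpha\mu}}=\cO\bracket{\frac{L}{\sqrt{\batch\mu}}\sqrt\alpha\,\sigma}$, which with the $\log T$ prefactor reproduces the stated rate, while the second term is $3M\norm{x_1-x^*}\lambda^{(1-\delta)t}=\cO(L)\lambda^{(1-\delta)t}$ after absorbing the (bounded) initialization. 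Using $M=\cO(L)$, $(1-\lambda)^{-1/2}\leq(\alpha\mu)^{-1/2}$, and $(1-\lambda)^{-3/2}\leq(\alpha\mu)^{-3/2}$, the admissibility inequality collapses to $\frac{\sqrt\alpha L}{\mu^{3/2}}\bracket{L_f\mu+\overline{L}L\sigma}=\cO(\sqrt\batch)$, i.e.\ the displayed condition $\sqrt\alpha L_fL(\mu+\overline{L}L)=\cO(\batch^{1/2}\mu^{3/2})$.

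I expect the main obstacle to be the clean $\Delta$ lower bound: the constant $1/2$ is tight near the upper edge of the $\gamma$-window, so the estimate must retain the $(1+\alpha^2L^2)\gamma$ term exactly rather than bounding it crudely, and this is the single place where the specific form of the $\gamma$-threshold is consumed; a careful recheck of whether $1/2$ should be replaced by a smaller constant (or the threshold by $(1-\alpha L)^2/(4+4\alpha^2L^2)$) may be warranted. A secondary bookkeeping difficulty is tracking the two distinct negative powers of $(1-\lambda)$ in the admissibility condition—$(1-\lambda)^{-1/2}$ from the $L_f$ term and $(1-\lambda)^{-3/2}$ from the $\overline{L}$ term—and verifying that both are dominated after replacing $1-\lambda$ by its lower bound $\alpha\mu$.
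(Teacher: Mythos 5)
Your route is exactly the intended derivation: the paper states this corollary without a separate proof, as a direct specialization of Theorem \ref{thm:lrate}, obtained by lower-bounding $\Delta$ (hence $M=\cO(L)$), checking that the window lies in the real-eigenvalue branch of Theorem \ref{thm:lambda} so that $\lambda\le 1-\alpha\mu$ and $1-\lambda\ge\alpha\mu$, and then substituting into both the high-probability bound and the admissibility condition, with $\sigma$ identified with $L_f$ in the latter. All of those steps in your proposal are handled correctly.

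The one problem is in your second paragraph, and the hedge in your closing paragraph is in fact the truth, so you should have committed to it. With the threshold as stated, $\gamma\le(1-\alpha L)/(4+4\alpha^2L^2)$ only gives $2(1+\alpha^2L^2)\gamma\le(1-\alpha L)/2$, hence $h(L)^2-4\gamma\ge(1-\alpha L)^2-(1-\alpha L)/2$, which is \emph{not} bounded below by $(1-\alpha L)^2/2$ and can even be negative; your claim that the threshold is ``exactly what keeps the middle term from eroding more than half of $(1-\alpha L)^2$'' contradicts your own (correct) expansion. Concretely, take $\mu=L$, $\alpha L=1/2$, $\gamma=0.1$ (admissible, since the stated threshold equals $0.1$): then $h(L)=0.65$, so $\Delta=(0.65)^2-4\gamma=0.0225<0.125=(1-\alpha L)^2/2$, and the claimed bound on $\Delta$ fails. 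The window must instead be $\gamma\le(1-\alpha L)^2/(4+4\alpha^2L^2)$; this appears to be a defect of the corollary statement itself (shared by Corollary \ref{cor:rate}), not of your strategy. Note the square is also needed for your phase-transition step: when $\alpha L$ is close to $1$ one can have $(1-\alpha L)/(4+4\alpha^2L^2)>(1-\alpha\mu)^2/(1+\alpha\mu)^2$ (e.g.\ $\mu=L$, $\alpha L=0.9$ gives $0.0138$ versus $0.0028$), so the stated window can spill into the complex-eigenvalue regime where \eqref{eq:lambda} and the bound $\lambda\le1-\alpha\mu$ no longer apply, whereas $(1-\alpha L)^2/(4+4\alpha^2L^2)\le(1-\alpha L)^2/4\le(1-\alpha\mu)^2/(1+\alpha\mu)^2$ always holds. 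With the squared threshold, your quadratic-in-$\gamma$ analysis (keep the $\gamma^2(1+\alpha L)^2$ term, or simply require $2(1+\alpha^2L^2)\gamma\le(1-\alpha L)^2/2$) closes the $\Delta$ bound, and the rest of your proposal goes through verbatim.
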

\begin{corollary}[Large momentum weight $\gamma$]
    Following Theorem \ref{thm:lrate}, for $0<1-\gamma\leq 2\alpha\mu/(1+\alpha\mu)^2$, we have that $\Delta\geq 2 (1-\gamma)\alpha\mu$, $$\sqrt{\alpha}  L_f\bracket{L+\frac{\alpha}{1-\gamma}} \bbracket{1 + \overline{L} \bracket{L+\frac{\alpha}{1-\gamma}} \sqrt{\frac{\alpha}{(1-\gamma)\mu}}} = \cO\bracket{\batch\mu },$$
    and with high probability
    \begin{eqnarray*}
        \sqrt{\norm{\widetilde{m}_{t+1}}^2+\norm{x_{t+1}-x^*}^2}=\cO\bracket{\frac{\log T}{\sqrt{\batch\mu}}\bracket{L+\frac{\alpha}{1-\gamma}}\sqrt{\alpha}\sigma+ \bracket{L+\frac{\sqrt{\alpha}}{\sqrt{(1-\gamma)\mu}}}\lambda^{(1-\delta)t}},
    \end{eqnarray*}
    where $\lambda$ is defined in \eqref{eq:lambda} above. Furthermore, if $\alpha< \frac{2(1+\gamma)}{(\mu+L)(1-\gamma)}$, we have $\lambda=\sqrt{\gamma}$.
\end{corollary}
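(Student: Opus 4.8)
The plan is to obtain this corollary as a direct specialization of Theorem \ref{thm:lrate} to the large-momentum regime, mirroring exactly the way Corollary \ref{cor:gamma} specializes Theorem \ref{thm:g_gamma} for quadratic losses. Three ingredients are needed: (i) the lower bound $\Delta \ge 2(1-\gamma)\alpha\mu$; (ii) the identification $\lambda = \sqrt\gamma$ under the extra hypothesis $\alpha < 2(1+\gamma)/((\mu+L)(1-\gamma))$; and (iii) substitution of the resulting bounds on $M$ and on $1-\lambda$ into both the admissibility hypothesis and the error bound of Theorem \ref{thm:lrate}.

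First I would observe that $\Gamma$, $\Delta$, $M$, and $\lambda$ are functions only of $\gamma$, $\alpha$, and the spectrum $\{\kappa_k\}$ of $\Sigma$; none of them depends on $\overline L$ or on whether the loss is quadratic. Hence the two purely spectral claims, $\Delta \ge 2(1-\gamma)\alpha\mu$ and $\lambda=\sqrt\gamma$, are word-for-word the claims already established in Corollary \ref{cor:gamma}, and I would simply invoke that argument. For completeness, $\lambda$ is pinned down via Theorem \ref{thm:lambda}: when $\alpha(\mu+L)(1-\gamma) < 2(1+\gamma)$ we have $\phi = \min\bbracket{\alpha\mu,\ 2(1+\gamma)/(1-\gamma)-\alpha L} = \alpha\mu$, and the hypothesis $1-\gamma \le 2\alpha\mu/(1+\alpha\mu)^2 \le 4\alpha\mu/(1+\alpha\mu)^2 = 1-(1-\phi)^2/(1+\phi)^2$ places $\gamma$ in the phase $\gamma \ge (1-\phi)^2/(1+\phi)^2$, where Theorem \ref{thm:lambda} gives $\lambda=\sqrt\gamma$. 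The factoring $(\gamma+1-\alpha(1-\gamma)\kappa_k)^2-4\gamma = (\alpha(1-\gamma)\kappa_k-(1-\sqrt\gamma)^2)((1+\sqrt\gamma)^2-\alpha(1-\gamma)\kappa_k)$ then produces the $\Delta$ bound exactly as in Corollary \ref{cor:gamma}.

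Next I would convert these into the stated rate. Substituting $\Delta \ge 2(1-\gamma)\alpha\mu$ into $M = \frac{4}{\sqrt\Delta}(2(1-\gamma)(1+\alpha L+L)+3\alpha\gamma)$ and absorbing lower-order pieces (using $L=\Omega(1)$ and $\alpha$ bounded) gives $M = \cO(\sqrt{(1-\gamma)/(\alpha\mu)}\,(L+\alpha/(1-\gamma)))$, and since $1-\lambda = 1-\sqrt\gamma \asymp 1-\gamma$ I may replace every factor $1-\lambda$ by $1-\gamma$. For the noise term of Theorem \ref{thm:lrate} I keep $M$ in this grouped form: the product $M\alpha/\sqrt{1-\lambda}$ collapses to $\sqrt{\alpha/\mu}\,(L+\alpha/(1-\gamma))$, reproducing the first term $\frac{\log T}{\sqrt{\batch\mu}}(L+\alpha/(1-\gamma))\sqrt\alpha\sigma$. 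For the decaying term $3M\norm{x_1-x^*}\lambda^{(1-\delta)t}$ I instead use the crucial consequence of the hypothesis that $1-\gamma=\cO(\alpha\mu)$, i.e. $(1-\gamma)/(\alpha\mu)=\cO(1)$, which collapses $M$ to $\cO(L+\sqrt{\alpha/((1-\gamma)\mu)})$ and (with $\norm{x_1-x^*}=\cO(1)$) yields the second term's coefficient. Finally, plugging the same $M$ and $1-\lambda\asymp 1-\gamma$ into the admissibility hypothesis $6\sqrt2\,c(2\log T+4d)M(\cdots)\alpha \le \sqrt\batch$ and rearranging reduces it, up to the $\log T$, $d$, $\delta$, and $\sigma/L_f$ factors, to the displayed condition on $\sqrt\alpha L_f(L+\alpha/(1-\gamma))[1+\overline L(L+\alpha/(1-\gamma))\sqrt{\alpha/((1-\gamma)\mu)}]$.

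The main obstacle is the bookkeeping around $M$: the two terms of the bound require the single quantity $M$ to be regrouped differently — its precise correlated form $\sqrt{(1-\gamma)/(\alpha\mu)}(L+\alpha/(1-\gamma))$ for the noise term, and its collapsed form $L+\sqrt{\alpha/((1-\gamma)\mu)}$ for the decaying term — and only the observation $(1-\gamma)/(\alpha\mu)=\cO(1)$ reconciles the two. I would also verify carefully the powers of $\batch$ and $\mu$ when turning the $\le\sqrt\batch$ hypothesis into the displayed $\cO(\cdot)$ form, since the admissibility of this high-probability theorem is linear in $\sqrt\batch$ rather than in $\batch$, in contrast with the quadratic Corollary \ref{cor:gamma}; this is the step where an exact statement of the right-hand scale in the displayed condition must be double-checked.
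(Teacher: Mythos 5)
Your proposal is correct and takes essentially the same route as the paper, which states this corollary without a separate proof: it is obtained exactly as you describe, by specializing Theorem \ref{thm:lrate} with the purely spectral facts $\Delta\geq 2(1-\gamma)\alpha\mu$ and $\lambda=\sqrt{\gamma}$ from Theorem \ref{thm:lambda} (identical to Corollary \ref{cor:gamma}, since $\Gamma$, $\Delta$, $M$, $\lambda$ do not involve $\overline{L}$), using $1-\lambda\asymp 1-\gamma$ and regrouping $M$ via $(1-\gamma)/(\alpha\mu)=\cO(1)$ exactly as you do for the two terms of the bound. Your closing caution is also well placed: substituting into Theorem \ref{thm:lrate}'s admissibility hypothesis yields a constraint scaling as $\sqrt{\batch\mu}$ (up to $\log T$, $d$, $\delta$, and $\sigma/L_f$ factors), so your derivation gives the sharper form of the displayed $\cO\bracket{\batch\mu}$ condition rather than contradicting the argument.
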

In the following remark, we choose the optimal momentum weights $\gamma$ to show the explicit convergence results with respect to $L$, $\mu$, and $\batch$. 
\begin{remark}
If one specifies $\gamma$ and $\alpha$ as
\begin{eqnarray*}
    \gamma = \frac{(1-c\alpha \mu)^2}{(1+c\alpha \mu)^2}, \quad \text{and} \quad \alpha\leq  \sqrt{\frac{1}{\mu L}},
\end{eqnarray*}
for $c<1$ sufficiently close to 1, we have that $\Delta=\cO(\alpha^2\mu^2)$ and with high probability
\begin{eqnarray*}
    \norm{\widetilde{m}_{t+1}}^2+\norm{x_{t+1}-x^*}^2= \cO\bracket{\frac{(L+1/\mu)^2(\log T)^2}{ \batch\mu}\alpha\sigma^2+(L+1/\mu)^2 \gamma^{(1-\delta)t} }.
\end{eqnarray*}
We arrive at the same conclusions as we had in quadratic settings that SGDM can accelerate convergence over SGD under general strongly convex losses when $\bar L>0$.
\end{remark}

\section{Acceleration by Averaging and Asymptotic Normality}\label{sec:average}
In this section, we study the Polyak-averaging SGDM (referred to as \emph{averaged SGDM}). Particularly, we aim at building the convergence result of $\frac{1}{n-n_0}\sum_{t=n_0+1}^n  x_t$, which is an average of all iterates $x_t$ starting from a constant period $n_0$. We show that the averaging leads to acceleration for SGDM, compared to the last-iterate bounds established in the previous section. 

\subsection{Averaged SGDM under quadratic losses}
For the mini-batch model (\ref{so2}) under quadratic losses  $(\overline{L}=0)$, 
we now characterize a decomposition of the error of the averaged SGDM $\frac{1}{n-n_0}\sum_{t=n_0+1}^n (x_t-x^*)$, which leads to an acceleration over the last iterate of SGDM $x_n$. 

\begin{theorem}\label{th2} Under (A1)-(A3) and $\overline{L}=0$, suppose that the conditions in Theorem \ref{thm:g_gamma} hold, and the $t$-th iteration $(\widetilde m_t,x_t)$ satisfies
\begin{eqnarray}\label{eq:repeat-last}
\mE[\norm{\widetilde{m}_{t}}^2+\norm{x_{t}-x^*}^2]\leq \frac{C_1}{\batch(1-\lambda)} \alpha^2 \sigma^2 + C_2 \norm{x_1-x^*}^2 \lambda^{2(1-\delta)(t-1)}, \quad t\geq 1.
\end{eqnarray}
Then for $n\geq 2n_0$ where $\lambda^{2n_0}=\batch^{-1}(1-\lambda)$, we have
\begin{eqnarray}\label{eq:th2-main}
    \frac{\sum_{t=n_0+1}^n (x_t-x^*)}{n-n_0} = -  \frac{\sum_{t=n_0+1}^n \sum_{i=1}^{\batch}\Sigma^{-1}(A_{\xi_{ti}}x^{*}-b_{\xi_{ti}})}{\batch(n-n_0)} + R_n,
\end{eqnarray}
where $\mE\norm{R_n}^2\leq \frac{\tilde{C}_1}{\batch n^2} +  \frac{\tilde{C}_2}{\batch^2 n}$, and 
\begin{eqnarray*}
&& \tilde{C}_1 = 40 \alpha^2 L_f^2 M^2 \frac{C_2\norm{x_1-x^*}^2}{(1-\delta)(1-\lambda)^3} + 120 \alpha^2\sigma^2  M^2 \frac{1}{(1-\lambda)^3}+   \frac{4 M^2 \norm{x_1-x^*}^2}{1-\lambda},\\
&& \tilde{C}_2 = 30 \alpha^2\sigma^2 L_f^2  M^2\frac{C_1\alpha^2}{(1-\lambda)^3}.
\end{eqnarray*}
and $M$ is defined in (\ref{def:m}). 
\end{theorem}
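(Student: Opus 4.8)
The plan is to derive an exact telescoping identity for the averaged iterate and then control the resulting remainder in $L^2$. Throughout, write $\delta_t = x_t - x^*$, let $\widehat{A}_t = \frac{1}{\batch}\sum_{i=1}^{\batch} A_{\xi_{ti}}$ denote the mini-batch Hessian, and set $\zeta_t = \frac{1}{\batch}\sum_{i=1}^{\batch}(A_{\xi_{ti}}x^* - b_{\xi_{ti}})$, the centered stochastic gradient at $x^*$. Let $\mathcal{F}_{t-1}$ be the $\sigma$-field generated by $\eta_1,\dots,\eta_{t-1}$. Under the quadratic model \eqref{eq:quadratic}, the mini-batch gradient factors as $\nabla g_{\eta_t}(x_t) = \widehat{A}_t \delta_t + \zeta_t$, with $\mE[\zeta_t \mid \mathcal{F}_{t-1}] = 0$ and $\mE[\widehat{A}_t - \Sigma \mid \mathcal{F}_{t-1}] = 0$. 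From the $x$-update we have $m_{t+1} = \alpha^{-1}(\delta_t - \delta_{t+1})$, so summing over $t = n_0+1,\dots,n$ telescopes to $\sum_{t=n_0+1}^n m_{t+1} = \alpha^{-1}(\delta_{n_0+1}-\delta_{n+1})$. Summing the momentum recursion $m_{t+1} = \gamma m_t + (1-\gamma)(\widehat{A}_t\delta_t + \zeta_t)$ over the same range and re-indexing $\sum_t m_t = \sum_t m_{t+1} - m_{n+1} + m_{n_0+1}$ then lets me solve for the gradient sum:
\begin{eqnarray*}
\sum_{t=n_0+1}^n \widehat{A}_t \delta_t = \frac{1}{\alpha}(\delta_{n_0+1}-\delta_{n+1}) - \frac{\gamma}{1-\gamma}(m_{n_0+1}-m_{n+1}) - \sum_{t=n_0+1}^n \zeta_t.
\end{eqnarray*}

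Writing $\widehat{A}_t\delta_t = \Sigma\delta_t + (\widehat{A}_t-\Sigma)\delta_t$, multiplying by $\Sigma^{-1}$, and dividing by $n-n_0$ yields \eqref{eq:th2-main}: the term $-\Sigma^{-1}\sum_t\zeta_t/(n-n_0)$ is, after writing out $\zeta_t$, exactly the stated main term $-\frac{1}{\batch(n-n_0)}\sum_{t,i}\Sigma^{-1}(A_{\xi_{ti}}x^*-b_{\xi_{ti}})$, while
\begin{eqnarray*}
R_n = \frac{\Sigma^{-1}}{n-n_0}\Bracket{\frac{1}{\alpha}(\delta_{n_0+1}-\delta_{n+1}) - \frac{\gamma}{1-\gamma}(m_{n_0+1}-m_{n+1}) - \sum_{t=n_0+1}^n (\widehat{A}_t-\Sigma)\delta_t}.
\end{eqnarray*}
It then remains to bound $\mE\norm{R_n}^2$, using $\norm{\Sigma^{-1}}\le 1/\mu$ and $n-n_0 \ge n/2$ (from $n\ge 2n_0$).

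The two boundary contributions are handled directly by the last-iterate bound \eqref{eq:repeat-last}, which controls $\mE\norm{\delta_{n_0+1}}^2$ and $\mE\norm{\delta_{n+1}}^2$, and the momentum endpoints through $m_{t+1}=\alpha^{-1}(\delta_t-\delta_{t+1})$. Each is $\cO(1/n^2)$ and feeds the $\tilde{C}_1/(\batch n^2)$ part; the defining relation $\lambda^{2n_0} = \batch^{-1}(1-\lambda)$ is precisely what forces the geometric (transient) piece of these bounds down to the $1/\batch$ noise level.

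The crux is the Hessian-fluctuation sum $\sum_t (\widehat{A}_t - \Sigma)\delta_t$. Since $\delta_t$ is $\mathcal{F}_{t-1}$-measurable while $\mE[\widehat{A}_t - \Sigma \mid \mathcal{F}_{t-1}] = 0$, the summands form a martingale-difference sequence, so cross terms vanish and $\mE\norm{\sum_t(\widehat{A}_t-\Sigma)\delta_t}^2 = \sum_t \mE\norm{(\widehat{A}_t-\Sigma)\delta_t}^2$. Conditioning on $\mathcal{F}_{t-1}$ gives $\mE[\norm{(\widehat{A}_t-\Sigma)\delta_t}^2\mid\mathcal{F}_{t-1}] \le \norm{\mE(\widehat{A}_t-\Sigma)^2}\,\norm{\delta_t}^2$, and mini-batch averaging of $\batch$ i.i.d. terms reduces the Hessian variance by a factor $\batch^{-1}$; combined with $L_\xi = \norm{A_\xi}$ and $\mE[L_\xi^2]\le L_f^2$ from (A3), this yields $\mE\norm{(\widehat{A}_t-\Sigma)\delta_t}^2 \le 4\batch^{-1} L_f^2\, \mE\norm{\delta_t}^2$. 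Substituting \eqref{eq:repeat-last} and splitting the resulting sum into its stationary part, of order $n\alpha^2\sigma^2/(\batch(1-\lambda))$ and producing the $\tilde{C}_2/(\batch^2 n)$ term after the extra $\batch^{-1}$ and the division by $(n-n_0)^2$, and its geometric part, whose sum $\sum_t\lambda^{2(1-\delta)(t-1)}$ contributes the $(1-\delta)^{-1}$ factor and feeds $\tilde{C}_1/(\batch n^2)$, completes the estimate. I expect this martingale step — and in particular tracking how the $\batch^{-1}$ gain from mini-batching combines with the $1/n$ gain from averaging to land the $1/(\batch^2 n)$ rate rather than $1/(\batch n)$ — to be the main technical point; once \eqref{eq:repeat-last} is invoked, the boundary terms are comparatively routine.
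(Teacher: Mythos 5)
Your telescoping identity is correct, and it is a genuinely different route from the paper's: the paper solves the $2d$-dimensional linear recursion explicitly, diagonalizes $\Gamma$ (Lemma \ref{lemma:g_gamma}), and extracts the leading term through the near-cancellation $(I-\Gamma)^{-1}-(1-\gamma)\sum_{t=k}^n\sum_{j=k}^t\Gamma^{t-j}\gamma^{j-k}$ (Lemma \ref{le:sum2}), whereas your summation-by-parts makes the cancellation exact and pushes all error into boundary terms plus the Hessian-fluctuation martingale. Your treatment of $\sum_t(\widehat A_t-\Sigma)\delta_t$ is sound: the martingale orthogonality, the $\batch^{-1}$ variance reduction within each batch, and the resulting split into a stationary part of order $\alpha^4\sigma^2 n/(\batch^2(1-\lambda))$ (feeding $\tilde C_2/(\batch^2 n)$ after dividing by $(n-n_0)^2$) and a geometric part (feeding $\tilde C_1/(\batch n^2)$) all match the role played by the paper's bound \eqref{ineq1}.

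The genuine gap is in the boundary terms. You assert that the relation $\lambda^{2n_0}=\batch^{-1}(1-\lambda)$ ``forces the geometric (transient) piece of these bounds down to the $1/\batch$ noise level,'' but the only bound you have on $\delta_{n_0+1}$ is \eqref{eq:repeat-last}, whose transient decays like $\lambda^{2(1-\delta)n_0}=\bracket{\batch^{-1}(1-\lambda)}^{1-\delta}$, \emph{not} like $\lambda^{2n_0}=\batch^{-1}(1-\lambda)$. For fixed $\delta\in(0,1]$ this exceeds $\batch^{-1}(1-\lambda)$ by the factor $\batch^{\delta}(1-\lambda)^{-\delta}\to\infty$; at the admissible endpoint $\delta=1$ the transient in \eqref{eq:repeat-last} does not decay at all. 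Hence the contribution of $\frac{\Sigma^{-1}}{\alpha(n-n_0)}(\delta_{n_0+1}-\delta_{n+1})$ to $\mE\norm{R_n}^2$ contains a term of order $C_2\norm{x_1-x^*}^2(1-\lambda)^{1-\delta}/\bracket{\alpha^2\mu^2\batch^{1-\delta}n^2}$, which cannot be absorbed into $\tilde C_1/(\batch n^2)$ with a $\batch$-independent constant, so the theorem as stated does not follow. The paper avoids this precisely because its initialization error enters as $\sum_{t>n_0}\Gamma^t(\widetilde m_1;\widetilde x_1)$ and is bounded by the exact operator decay $\norm{\Gamma^t}\le M\lambda^t$, for which the full exponent $\lambda^{2n_0}=\batch^{-1}(1-\lambda)$ is available; your black-box use of \eqref{eq:repeat-last} forfeits that. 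A secondary issue: the prefactors $1/\alpha$ and $\gamma/(1-\gamma)$ on your boundary terms mean that even their stationary-noise parts carry constants like $\gamma^2 C_1\sigma^2/\bracket{(1-\gamma)^2\mu^2(1-\lambda)}$, which in the large-momentum regime $1-\gamma\asymp\alpha\mu$ exceed the stated $\tilde C_1$ by factors polynomial in $1/(\alpha\mu)$; so even after repairing the transient you would prove the result only with different (larger) constants. The natural fix is to split $\delta_{n_0+1}$ into its initialization part, controlled by the exact $\Gamma^{n_0}$ decay of Lemma \ref{lemma:g_gamma}, and its noise-accumulation part, i.e., to partially re-open the proof of Theorem \ref{thm:g_gamma} rather than invoking \eqref{eq:repeat-last} wholesale.
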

In the above theorem, we obtain the convergence rate of averaged SGDM in \eqref{eq:th2-main}. While \eqref{eq:repeat-last} is assumed for the purpose of explicitly establishing the dependence of $\tilde C_1$, $\tilde C_2$ to the constants $C_1$, $C_2$ in \eqref{eq:repeat-last}, it is nothing but a rephrasing of the last-iterate bounds established in \eqref{eq:last-it}. 
Comparing the convergence rates in Theorem \ref{th2}, the averaged SGDM converges to $x^*$ without a bias as $n$ increases. The leading term $-\frac{\sum_{t=n_0+1}^n \Sigma^{-1} (A_{\xi_{ti}}x^{*}-b_{\xi_{ti}})}{n-n_0} $ is an average of $(n-n_0)$ i.i.d. vectors when $\xi_{ti}$ are i.i.d. sampled.  Therefore the leading term is bounded by $\cO(\frac{1}{\batch n})$ in squared expectation, and the remainder term $R_n$ in \eqref{eq:th2-main} is bounded by $\frac{\tilde{C}_1}{\batch n^2} +  \frac{\tilde{C}_2}{\batch^2 n}$. We demonstrate that the application of the averaging technique enhances the convergence rate of SGDM from the biased expression presented in (\ref{eq:repeat-last}) to an unbiased rate of $\mathcal{O}(\frac{1}{n})$, which asymptotically approaches zero as $n\rightarrow \infty$. It is noteworthy to mention that, for averaged SGDM, the initialization error $\|x_1-x^*\|$ is forgotten at the rate of $n^{-2}$, slower than the exponential initialization-forgetting for the last-iterate convergence established in Theorem \ref{thm:g_gamma}. 

\begin{remark}\label{re:n0}
    Comparing the convergence rates of SGDM and averaged SGDM in Theorem \ref{thm:g_gamma} and Theorem \ref{th2}, respectively, we establish that the leading term in the averaged SGDM does not depend on the learning rate $\alpha$. On the other hand, the averaging process starts from $n_0$, effectively excluding the initial iterations that may have larger deviations. Practically, the starting point $n_0$ is manually selected due to $\mu$ and $L$. With a small $\lambda$, averaged SGDM affords the selection of a smaller $n_0$ to fulfill the condition $\lambda^{2n_0} = \batch^{-1}(1-\lambda)$, thereby demonstrating reduced sensitivity to  $n_0$. In addition, the leading term in \eqref{eq:th2-main} is independent of momentum weight $\gamma$ and learning rate $\alpha$, which indeed indicates that the averaged SGDM has the same rate of convergence as the averaged SGD. 
\end{remark}

The following Corollary \ref{clt} establishes the asymptotic distribution of averaged SGDM, which is indeed the distribution of the leading term in \eqref{eq:th2-main}, and the remainder term $R_n$ in \eqref{eq:th2-main} establishes the convergence of the averaged SGDM algorithm to the asymptotic normal distribution.
\begin{corollary}\label{clt} Following Theorem \ref{th2}, we have
\begin{eqnarray*}
\frac{\sqrt{\batch}}{\sigma}
\frac{\sum_{t=n_{0}+1}^{n}({x}_{t}-x^*)}{\sqrt{n-n_{0}}}
\Rightarrow \mathcal{N}(0,\Sigma^{-1}\Omega\Sigma^{-1}), \quad \text{as}~ \bracket{\alpha n,\frac{\batch}{\alpha}}\to \infty,
\end{eqnarray*}
where
\begin{eqnarray}\label{omega}
\Omega = \frac{1}{\sigma^2}\mE[(A_{\xi}x^{*}-b_{\xi}) (A_{\xi}x^{*}-b_{\xi})^\top].
\end{eqnarray}
\end{corollary}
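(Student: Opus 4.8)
The plan is to read the result directly off the decomposition \eqref{eq:th2-main} in Theorem \ref{th2}: apply a classical central limit theorem to its leading term, show that the scaled remainder is asymptotically negligible, and combine the two by Slutsky's theorem. Write $Z_{ti} = \Sigma^{-1}(A_{\xi_{ti}}x^*-b_{\xi_{ti}})$, so that the leading term of \eqref{eq:th2-main} is $-\frac{1}{\batch(n-n_0)}\sum_{t=n_0+1}^n\sum_{i=1}^{\batch} Z_{ti}$. Since the $\xi_{ti}$ are i.i.d., the $Z_{ti}$ are i.i.d. and mean-zero: $\mE[Z_{ti}] = \Sigma^{-1}\mE[A_\xi x^*-b_\xi] = \Sigma^{-1}\nabla f(x^*) = 0$ because $x^*$ minimizes $f$, and their common covariance is $\mE[Z_{ti}Z_{ti}^\top] = \Sigma^{-1}\mE[(A_\xi x^*-b_\xi)(A_\xi x^*-b_\xi)^\top]\Sigma^{-1} = \sigma^2\Sigma^{-1}\Omega\Sigma^{-1}$ by \eqref{omega} and the symmetry of $\Sigma$.

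First I would rescale the identity \eqref{eq:th2-main}. Multiplying by $\sqrt{\batch}/(\sigma\sqrt{n-n_0})$ and abbreviating $N := \batch(n-n_0)$ gives
\begin{eqnarray*}
\frac{\sqrt{\batch}}{\sigma}\frac{\sum_{t=n_0+1}^n(x_t-x^*)}{\sqrt{n-n_0}} &=& -\frac{1}{\sigma\sqrt{N}}\sum_{t=n_0+1}^n\sum_{i=1}^{\batch} Z_{ti} \;+\; \frac{\sqrt{N}}{\sigma}R_n.
\end{eqnarray*}
Under the stated limit one has $N\to\infty$ (either $n\to\infty$ when $\alpha$ stays bounded, forced by $\alpha n\to\infty$, or $\batch\to\infty$, forced by $\batch/\alpha\to\infty$), and the summands are i.i.d. draws from a fixed law with finite covariance $\sigma^2\Sigma^{-1}\Omega\Sigma^{-1}$. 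Hence the classical multivariate CLT, via the Cram\'er--Wold device applied to $u^\top Z_{ti}$ for each fixed $u$, yields $-\frac{1}{\sigma\sqrt{N}}\sum_{t,i} Z_{ti}\Rightarrow\mathcal{N}(0,\Sigma^{-1}\Omega\Sigma^{-1})$.

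The main work is to show the scaled remainder $\frac{\sqrt{N}}{\sigma}R_n$ vanishes. From the bound $\mE\norm{R_n}^2\leq\tilde{C}_1/(\batch n^2)+\tilde{C}_2/(\batch^2 n)$ of Theorem \ref{th2},
\begin{eqnarray*}
\mE\norm{\frac{\sqrt{N}}{\sigma}R_n}^2 &=& \frac{\batch(n-n_0)}{\sigma^2}\mE\norm{R_n}^2 \;\leq\; \frac{1}{\sigma^2}\bracket{\frac{\tilde{C}_1}{n}+\frac{\tilde{C}_2}{\batch}},
\end{eqnarray*}
so it suffices that $\tilde{C}_1/n\to0$ and $\tilde{C}_2/\batch\to0$. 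The crux is to track the learning-rate dependence of $\tilde{C}_1,\tilde{C}_2$ through the factors $M$ and $(1-\lambda)^{-1}$. In the decaying-learning-rate regime relevant to the limit, one checks using Theorem \ref{thm:lambda} and Corollaries \ref{cor:rate}--\ref{cor:gamma} that $1-\lambda=\Theta(\alpha)$ and $M=\cO(1)$ (this holds both for fixed small $\gamma$ and for large $\gamma$ with $1-\gamma=\Theta(\alpha)$). Then each term of $\tilde{C}_1$ scales like $\alpha^2/(1-\lambda)^3\asymp1/\alpha$ or $1/(1-\lambda)\asymp1/\alpha$, so $\tilde{C}_1=\cO(1/\alpha)$, whereas $\tilde{C}_2=\cO(\alpha^4/(1-\lambda)^3)=\cO(\alpha)$. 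Consequently $\tilde{C}_1/n=\cO(1/(\alpha n))$ and $\tilde{C}_2/\batch=\cO(\alpha/\batch)$, which vanish exactly under the hypotheses $\alpha n\to\infty$ and $\batch/\alpha\to\infty$; thus $\frac{\sqrt{N}}{\sigma}R_n\to0$ in $L^2$, hence in probability.

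Finally, Slutsky's theorem combines the asymptotically normal leading term with the vanishing remainder to give $\frac{\sqrt{\batch}}{\sigma}\frac{\sum_{t=n_0+1}^n(x_t-x^*)}{\sqrt{n-n_0}}\Rightarrow\mathcal{N}(0,\Sigma^{-1}\Omega\Sigma^{-1})$. I expect the scaling analysis of $\tilde{C}_1,\tilde{C}_2$ to be the main obstacle: it requires pinning down how $M$ and $1-\lambda$ behave as functions of $\alpha$ and $\gamma$, and it is precisely this scaling that makes the two limit conditions $(\alpha n,\batch/\alpha)\to\infty$ the natural and sharp hypotheses for the remainder to disappear.
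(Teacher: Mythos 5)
Your proposal is correct and follows essentially the same route the paper intends: the paper gives no separate proof of Corollary \ref{clt}, treating it as immediate from the decomposition \eqref{eq:th2-main} — the i.i.d. leading term supplies the limiting normal law and the bound on $R_n$ supplies negligibility — which is exactly your CLT-plus-Slutsky argument. Your added scaling analysis ($M=\cO(1)$, $1-\lambda=\Theta(\alpha)$, hence $\tilde{C}_1=\cO(1/\alpha)$ and $\tilde{C}_2=\cO(\alpha)$) correctly fills in the step the paper leaves implicit, namely why the hypotheses $\bracket{\alpha n,\batch/\alpha}\to\infty$ are precisely what make the scaled remainder vanish.
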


The asymptotic distribution of the averaged SGDM is the same as that of the averaged SGD in the existing literature \citep{polyak1992acceleration}, with different remainder terms $R_n$. 
As the averaged SGDM is close to $x^*$, the limiting distribution is established with covariance matrix $\Sigma^{-1}\Omega\Sigma^{-1}$ where $\Sigma$ and $\Omega$ are respectively the Hessian and Gram matrix of $A_{\xi}x^*-b_{\xi}$. 

\begin{remark}
Asymptotic normality in Corollary \ref{clt} hold under asymptotics $\alpha n, \batch/\alpha\rightarrow\infty$. We specify three common scenarios of the learning rate $\alpha$, batch size $\batch$, and sample size $n$. 
\begin{itemize}
\item For a constant learning rate $\alpha$, Corollary \ref{clt} holds as $n$ and $\batch$ tend to infinity. As the batch size $\batch$ increases, the bias term in (\ref{eq:repeat-last}) approaches zero, reducing the difference between the stochastic gradient at $x_t$ and $x^*$. 
\item 
For decaying learning rate $\alpha$, the asymptotic normality holds with a fixed batch size $\batch$ as long as $\alpha n$ diverges. The idea is intuitive: as $\alpha$ decreases, the random effect reduces. For instance, Corollary \ref{clt} holds  when the batch size $\batch$ is fixed and $\alpha=\Theta(n^{-\epsilon})$ with $\epsilon\in(0,1)$ as $n\to \infty$. 
\item By minimizing the remainder term $R_n$ in \eqref{eq:repeat-last}, the optimal learning rate is $\alpha=\Theta((n/\batch)^{-1/2})$. With such a specified learning rate, the averaged SGDM correspondingly converges to asymptotic normality with rate $\cO((n\batch)^{-1/2})$.
\end{itemize}
\end{remark}

In addition, Corollary \ref{clt} provides a rigorous foundation for constructing asymptotically valid confidence intervals based on the asymptotic normality of averaged SGDM. By estimating the covariance matrix, uncertainty quantification and statistical inference can be performed based on SGDM methods. Based on the asymptotic normality and the given covariance matrix, the $95\%$ confidence region of $x^*$ can be constructed as 
\begin{eqnarray*}
    I_n = \bbracket{x\in\mR^d: \frac{\batch(n-n_0)}{\sigma^2}\bracket{\frac{\sum_{t=n_{0}+1}^{n} x_t}{n-n_0}-x}^\top \Sigma \Omega^{-1} \Sigma\bracket{\frac{\sum_{t=n_{0}+1}^{n} x_t}{n-n_0}-x}\leq \chi^2_{d,0.05}},
\end{eqnarray*}
where $\chi^2_{d,0.05}$ is the $0.95$-quantile of the chi-squared distribution with $d$ degrees of freedom. Notably, for $\omega\in \mR^d$ with $\norm{\omega}=1$ as a test vector, we can define the random variable
\begin{eqnarray}\label{def:z}
    Z =  \frac{\sqrt{\batch}}{\sigma\sqrt{\omega^\top \Sigma^{-1}\Omega\Sigma^{-1} \omega }}\frac{\sum_{t=n_{0}+1}^{n}\omega^\top (x_t-x^*)}{\sqrt{n-n_{0}}}\Rightarrow \mathcal{N}(0,1), \quad \text{as} \quad n\to \infty,
\end{eqnarray}
based on which, one can construct a one-dimensional asymptotic exact confidence interval for $\omega^\top x^*$, 
\[
I_n^{\omega}=\left[\frac{\omega^\top\sum_{t=n_{0}+1}^{n} x_t}{n-n_0}-z_{0.025}\frac{\sigma\sqrt{\omega^\top \Sigma^{-1}\Omega\Sigma^{-1} \omega }}{\sqrt{\batch(n-n_0)}},\frac{\omega^\top\sum_{t=n_{0}+1}^{n} x_t}{n-n_0}+z_{0.025}\frac{\sigma\sqrt{\omega^\top \Sigma^{-1}\Omega\Sigma^{-1} \omega }}{\sqrt{\batch(n-n_0)}}\right],
\]
where $z_{0.025}$ is the $0.975$-quantile of the standard normal distribution. 
Particularly, $\Pr(\omega^\top x^*\in I_n^{\omega})\rightarrow 0.95$, as $n\rightarrow \infty$. 
In Section \ref{sec:exp_sensitivity} below, we conduct a simulation experiment on constructing the confidence intervals and report the outcomes in Figure \ref{fig:clt-alpha}. The coverage of the proposed construction is persuasive for averaged SGDM, and its performance benefits from the less sensitivity to the learning rates. The confidence interval can also serve as an uncertainty quantification of the averaged SGDM estimator $\frac{\omega^\top\sum_{t=n_{0}+1}^{n} x_t}{n-n_0}$, and one may consider the length of the confidence interval as a criterion for stopping the algorithm or other decision-making purposes. 

\subsection{Averaged SGDM under general losses}
For the general strongly convex loss function, we provide the convergence rate of the averaged SGDM in the following theorem.

\begin{theorem}\label{th3} For $\overline{L}>0$, suppose that the conditions in Theorem \ref{thm:lrate} hold, and the $t$-th iteration $(\widetilde m_t,x_t)$ satisfies for fixed $\delta\in(0,\frac{1}{2}]$,
\begin{eqnarray}\label{eq:th3-last}
    \sqrt{\norm{\widetilde{m}_{t}}^2+\norm{{x}_{t}-x^*}^2}\leq \frac{C_1}{\sqrt{\batch(1-\lambda)}}  \alpha \sigma + C_2 \norm{x_1-x^*} \lambda^{(1-\delta)(t-1)}, \quad 1\leq t\leq T.
\end{eqnarray}
Then for $2n_0\leq n\leq T$, where $\lambda^{n_0}=\batch^{-1/2} (1-\lambda)^{1/2}$, with probability of at least $1-4T^{-1}$, we have 
\begin{eqnarray}\label{eq:th3}
    \frac{\sum_{t=n_0+1}^n (x_t-x^*)}{n-n_0} = -\frac{\sum_{t=n_0+1}^n \sum_{i=1}^\batch \Sigma^{-1}\triangledown f_{\xi_{ti}}(x^*)}{\batch(n-n_0)} + R_n,
\end{eqnarray}
where 
\begin{eqnarray}\label{eqrn:l>0}
    \norm{R_n}\leq \frac{\Tilde{C}_1+\Tilde{C}_3}{\sqrt{\batch}n} + \frac{\Tilde{C}_2}{\batch\sqrt{n}}  + \frac{\Tilde{C}_4}{\batch},
\end{eqnarray}
\begin{eqnarray*}
    &&\Tilde{C}_1= \alpha L_f M  \frac{16c(2\log T+ 4d) C_2 \norm{x_1-x^*}}{(1-\lambda)^{3/2}}+\alpha \sigma M \frac{4\sqrt{3}c(2\log T+ 4d) }{(1-\lambda)^{3/2}}+  \frac{2M \norm{x_1-x^*}}{(1-\lambda)^{1/2}},\\
    &&\Tilde{C}_2 = \alpha L_f \sigma M  \frac{8\sqrt{2}c(2\log T+4d)C_1 \alpha}{(1-\lambda)^{3/2}},~\Tilde{C}_3 = 8\alpha M \overline{L} \frac{ C_2^2\norm{x_1-x^*}^2\bracket{n_0(1-\lambda)+1}}{(1-\lambda)^{3/2}} ,~ \Tilde{C}_4 = 4\alpha \sigma^2 M \overline{L} \frac{C_1^2 \alpha^2}{(1-\lambda)^2}.
\end{eqnarray*}
Here $M$ is defined in \eqref{def:m}.
\end{theorem}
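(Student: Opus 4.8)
The plan is to linearize the stochastic gradient at $x^*$ and thereby reduce the general-loss analysis to the quadratic template already used for Theorem \ref{th2}, with the linearization error playing the role of an extra forcing term. First I would expand each sampled gradient as
\[
\triangledown g_{\eta_t}(x_t) = \zeta_t + \Sigma(x_t-x^*) + e_t, \qquad \zeta_t := \frac{1}{\batch}\sum_{i=1}^{\batch}\triangledown f_{\xi_{ti}}(x^*),
\]
where $\zeta_t$ is the conditionally mean-zero gradient noise at the optimum and $e_t$ is the remainder, which I split as $e_t = e_t^{(1)} + e_t^{(2)}$ with $e_t^{(1)} = \bracket{\frac{1}{\batch}\sum_{i=1}^\batch \triangledown^2 f_{\xi_{ti}}(x^*) - \Sigma}(x_t-x^*)$ a term that is linear in $x_t-x^*$ and conditionally mean-zero given $\mathcal{F}_{t-1}$, and $e_t^{(2)}$ a genuinely second-order term satisfying $\norm{e_t^{(2)}}\leq \cO\bracket{\overline{L}\norm{x_t-x^*}^2}$ by the Hessian-Lipschitz condition (A2). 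Setting $z_t = (m_t,\, x_t-x^*)^\top$ (stacked), the SGDM update is exactly the affine system $z_{t+1}=\Gamma z_t + (1-\gamma)u(\zeta_t+e_t)$ with $u=(I,-\alpha I)^\top$ and $\Gamma$ the matrix in \eqref{matrix}; here $\widetilde m_t$ is the noiseless proxy for $m_t$ directly bounded in \eqref{eq:th3-last}.

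The second step is a discrete summation-by-parts. Summing the affine recursion over $t=n_0+1,\dots,n$ and telescoping gives $(I-\Gamma)\sum_{t=n_0+1}^n z_t = (1-\gamma)u\sum_t(\zeta_t+e_t) + z_{n_0+1}-z_{n+1}$. The decisive algebraic fact is that the lower block of $(I-\Gamma)^{-1}(1-\gamma)u$ equals $-\Sigma^{-1}$: solving the block system $(I-\Gamma)(a,b)^\top=(p,q)^\top$ yields $b = \Sigma^{-1}\bracket{q/\alpha - \gamma p/(1-\gamma)}$, and substituting $(p,q)=(1-\gamma)(I,-\alpha I)$ collapses to $-\Sigma^{-1}$. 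Reading off the lower block therefore reproduces exactly the leading term in \eqref{eq:th3} and collects the remainder as
\[
R_n = -\frac{1}{n-n_0}\sum_{t=n_0+1}^n \Sigma^{-1}e_t + \frac{1}{n-n_0}\Bracket{(I-\Gamma)^{-1}(z_{n_0+1}-z_{n+1})},
\]
where the bracketed term denotes its lower block.

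The third step bounds the deterministic pieces of $R_n$ on the event $\mathcal{E}$ where the last-iterate bound \eqref{eq:th3-last} holds for all $t\leq T$, which has probability $\geq 1-2T^{-1}$ by Theorem \ref{thm:lrate}. The boundary term is controlled by $\norm{(I-\Gamma)^{-1}}\leq M/(1-\lambda)$ (from $\norm{\Gamma^t}\leq M\lambda^t$) together with \eqref{eq:th3-last} at $t=n_0+1$ and $t=n+1$ (the bound on $m_t$ following from that on $\widetilde m_t$ plus the geometrically-averaged noise); the calibration $\lambda^{n_0}=\batch^{-1/2}(1-\lambda)^{1/2}$ balances the exponentially decaying initialization piece against the $\alpha\sigma/\sqrt{\batch}$ noise floor and yields the boundary contributions to $\widetilde C_1/(\sqrt{\batch}\,n)$. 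The second-order term $\sum_t \Sigma^{-1}e_t^{(2)}$ is handled purely deterministically on $\mathcal{E}$: inserting $\norm{e_t^{(2)}}\leq \cO(\overline{L}\norm{x_t-x^*}^2)$ and the squared two-term bound from \eqref{eq:th3-last}, the noise-floor part summed over the window and divided by $n-n_0$ gives $\widetilde C_4/\batch$, while the geometric sum of the initialization-decay part over $t\geq n_0$ produces $\widetilde C_3$ with its $n_0(1-\lambda)+1$ factor.

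The main obstacle is the linear martingale term $\sum_{t=n_0+1}^n \Sigma^{-1}e_t^{(1)}$, in which the fresh, conditionally mean-zero matrix $\frac{1}{\batch}\sum_i \triangledown^2 f_{\xi_{ti}}(x^*)-\Sigma$ multiplies the $\mathcal{F}_{t-1}$-measurable and only high-probability-bounded vector $x_t-x^*$. Here I would use the sub-exponential assumption (A3') to obtain Bernstein-type tail control of each martingale difference, apply a Freedman-type inequality whose predictable quadratic variation is bounded on $\mathcal{E}$ by the squared last-iterate bound, and pass from a fixed direction to the full vector through an $\varepsilon$-net on the unit sphere; the union bound over $t\leq T$ together with the $e^{\cO(d)}$ net cardinality is exactly what produces the $2\log T+4d$ factor appearing in $\widetilde C_1$ and $\widetilde C_2$. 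Splitting the predictable variation according to the two terms of \eqref{eq:th3-last} separates this contribution into the initialization part (feeding $\widetilde C_1/(\sqrt{\batch}\,n)$) and the noise-floor part (feeding $\widetilde C_2/(\batch\sqrt{n})$). Making the concentration uniform in $t$ while the predictable factor is itself random — resolved by conditioning on $\mathcal{E}$ and absorbing the complement into the extra $2T^{-1}$ that upgrades the failure probability to $4T^{-1}$ — is the technically delicate core; the remaining manipulations are bookkeeping that mirrors the quadratic proof of Theorem \ref{th2}.
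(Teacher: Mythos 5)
Your overall architecture is close to the paper's in spirit, and your summation-by-parts route is actually a clean alternative: telescoping $z_{t+1}=\Gamma z_t+(1-\gamma)u(\zeta_t+e_t)$ over the averaging window and reading off the lower block of $(I-\Gamma)^{-1}$ (your block computation giving $-\Sigma^{-1}$ is correct) replaces the paper's device of unrolling the recursion for the proxy pair $(\widetilde m_t,\widetilde x_t)$ from $t=1$ and comparing the resulting partial Neumann sums with $(I-\Gamma)^{-1}$ via Lemma \ref{le:sum2}. However, there is a genuine gap in your error decomposition. You split $e_t=e_t^{(1)}+e_t^{(2)}$ with $e_t^{(1)}=\bracket{\frac{1}{\batch}\sum_i\nabla^2 f_{\xi_{ti}}(x^*)-\Sigma}(x_t-x^*)$ and claim $\norm{e_t^{(2)}}\leq\cO\bracket{\overline L\norm{x_t-x^*}^2}$ ``by the Hessian-Lipschitz condition (A2)'', then handle $e_t^{(2)}$ purely deterministically on $\mathcal E$. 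This does not follow: (A2) is a Lipschitz condition on the Hessian of the \emph{population} loss $f=\mE[f_\xi]$ only, while your $e_t^{(2)}$ is the \emph{sampled} Taylor remainder $\frac{1}{\batch}\sum_i\int_0^1\bracket{\nabla^2 f_{\xi_{ti}}(x^*+y(x_t-x^*))-\nabla^2 f_{\xi_{ti}}(x^*)}dy\,(x_t-x^*)$, whose pathwise control requires Hessian-Lipschitz continuity of the individual losses, which is nowhere assumed. In the paper's own logistic Example \ref{ex:logistic} the individual Hessian-Lipschitz modulus scales like $\norm{a_\xi}^3$, an unbounded random variable, so no deterministic bound with the fixed constant $\overline L$ (which is an expectation) can hold, and the constants $\Tilde C_3$, $\Tilde C_4$ you need would not be obtained this way.

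The fix is to center the error at its conditional mean rather than at the sampled Hessian, which is exactly the paper's decomposition (\ref{l1comp}): take the martingale part to be $\nabla g_{\eta_t}(x_t)-\nabla g_{\eta_t}(x^*)-\bracket{\nabla g(x_t)-\nabla g(x^*)}$, which is sub-exponential with scale $\propto L_f\norm{x_t-x^*}/\sqrt{\batch}$ by (A3'), and take the deterministic second-order part to be $V_t(x_t-x^*)$ with $V_t=\int_0^1\Sigma(x^*+y(x_t-x^*))dy-\Sigma(x^*)$, so that $\norm{V_t}\leq\overline L\norm{x_t-x^*}$ holds by (A2). Equivalently: keep only the conditional mean of your $e_t^{(2)}$ (which is $V_t(x_t-x^*)$) in the deterministic bucket and absorb its fluctuation, together with $e_t^{(1)}$, into the martingale analysis; your Freedman-plus-net treatment and the $2\log T+4d$ bookkeeping then apply verbatim, as in the paper's Lemmas \ref{le:1}--\ref{le:3}. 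One secondary point: your boundary term $(I-\Gamma)^{-1}(z_{n_0+1}-z_{n+1})$ involves the \emph{actual} momentum $m_t$, which is not bounded by \eqref{eq:th3-last} — it contains the geometrically weighted gradient noise $(1-\gamma)\sum_j\gamma^{t-j}\nabla g_{\eta_j}(x^*)$ — so your route needs one additional concentration estimate at the two boundary times, which you only gesture at; the paper sidesteps this entirely by never leaving the $(\widetilde m_t,\widetilde x_t)$ coordinates.
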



For non-quadratic settings of $\overline{L}>0$, there are two additional terms $\tilde C_3/(\sqrt{\batch}n)$ and $\tilde C_4/\batch$ in \eqref{eqrn:l>0} compared to the case of $\overline{L}=0$. As $n\rightarrow\infty$, the term $\tilde C_4/\batch$ appears as a non-vanishing bias, if the batch size $\batch$ and learning rate $\alpha$ both stays fixed, due to which the asymptotic normality does not hold. Nonetheless, if one decays $\alpha$ or increases $\batch$ appropriately as $n$ increases, the asymptotic normality result remains to hold in the following corollary. 
 
\begin{corollary}\label{clt2} Following Theorem \ref{th3}, we have
\begin{eqnarray*}
\frac{\sqrt{\batch}}{\sigma}
\frac{\sum_{t=n_{0}+1}^{n}({x}_{t}-x^*)}{\sqrt{n-n_{0}}}
\Rightarrow \mathcal{N}(0,\Sigma^{-1}\Omega\Sigma^{-1}), \quad \text{as}~\bracket{\frac{\sqrt{\alpha n}}{\log (\batch/\alpha)}, \frac{\sqrt{\batch}}{\alpha\sqrt{n}}}\to \infty,
\end{eqnarray*}
where 
\begin{eqnarray}
    \Omega = \frac{1}{\sigma^2} \mE[\triangledown f_\xi(x^*) \triangledown f_\xi(x^*)^\top].
\end{eqnarray}
\end{corollary}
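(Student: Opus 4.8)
The plan is to propagate the weak limit from the leading i.i.d.\ term in the decomposition \eqref{eq:th3} of Theorem \ref{th3}, and to show that the appropriately scaled remainder $R_n$ is asymptotically negligible. Since the conditions of Theorem \ref{th3} are assumed, multiplying \eqref{eq:th3} by $\sqrt{\batch(n-n_0)}/\sigma$ gives
\[
\frac{\sqrt{\batch}}{\sigma}\frac{\sum_{t=n_0+1}^n (x_t-x^*)}{\sqrt{n-n_0}} = -\frac{1}{\sigma\sqrt{\batch(n-n_0)}}\sum_{t=n_0+1}^n\sum_{i=1}^\batch \Sigma^{-1}\triangledown f_{\xi_{ti}}(x^*) + \frac{\sqrt{\batch(n-n_0)}}{\sigma}R_n,
\]
so it suffices to (i) apply a central limit theorem to the first term and (ii) show the second term converges to $0$ in probability, after which Slutsky's theorem yields the claim.

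For step (i), the summands $\Sigma^{-1}\triangledown f_{\xi_{ti}}(x^*)$ are i.i.d.\ across both $t$ and $i$ (the $\xi_{ti}$ are i.i.d.), have mean zero by first-order optimality $\mE[\triangledown f_{\xi}(x^*)]=\triangledown f(x^*)=0$, and covariance $\sigma^2\Sigma^{-1}\Omega\Sigma^{-1}$ by the definition of $\Omega$; assumption (A3') guarantees a finite second moment. The sum ranges over $\batch(n-n_0)$ terms, which diverges as $n\to\infty$, so the multivariate Lindeberg--L\'evy CLT (equivalently, the Cram\'er--Wold device together with the scalar CLT) sends the first term to $\mathcal{N}(0,\Sigma^{-1}\Omega\Sigma^{-1})$.

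For step (ii), I would bound the scaled remainder via \eqref{eqrn:l>0}: using $n-n_0\leq n$, its three contributions are $(\Tilde{C}_1+\Tilde{C}_3)/(\sigma\sqrt{n})$, $\Tilde{C}_2/(\sigma\sqrt{\batch})$, and $\Tilde{C}_4\sqrt{n}/(\sigma\sqrt{\batch})$. Holding $L,\mu,d,(1-\lambda)^{-1},\norm{x_1-x^*}$ fixed, the leading dependence of the constants on the varying quantities is $\Tilde{C}_1=\cO(\alpha\log T)$, $\Tilde{C}_3=\cO(\alpha n_0)$, $\Tilde{C}_2=\cO(\alpha^2\log T)$, and $\Tilde{C}_4=\cO(\alpha^3)$. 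Choosing $T$ a suitable polynomial so that the event of \eqref{eq:th3} has probability $1-4T^{-1}\to 1$, and absorbing all logarithmic factors---$\log T$ together with the averaging offset $n_0\asymp\log\batch/\log(1/\lambda)$---into the single factor $\log(\batch/\alpha)$, the first two contributions are $\cO\!\big(\alpha\log(\batch/\alpha)/\sqrt{n}\big)$, which vanishes precisely because $\sqrt{\alpha n}/\log(\batch/\alpha)\to\infty$ (with $\alpha$ bounded, as $\alpha\leq\sqrt{1/(\mu L)}$), while the last contribution is $\cO\!\big(\alpha^3\sqrt{n}/\sqrt{\batch}\big)$, which vanishes because $\sqrt{\batch}/(\alpha\sqrt{n})\to\infty$. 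Since the bound on $R_n$ holds on an event of probability tending to $1$ and the bound itself tends to $0$, the scaled remainder converges to $0$ in probability, and the conclusion follows.

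The main obstacle is step (ii), specifically the new bias contribution $\Tilde{C}_4/\batch$ that is absent for quadratic losses: after scaling it is the only term that \emph{grows} with $n$, and controlling it is exactly what forces the second condition $\sqrt{\batch}/(\alpha\sqrt{n})\to\infty$ (i.e.\ one must decay $\alpha$ or inflate $\batch$). The accompanying bookkeeping---aggregating $\log T$, the offset $n_0$, and the $(1-\lambda)$ and $\norm{x_1-x^*}$ prefactors into $\log(\batch/\alpha)$, and converting the high-probability remainder bound into convergence in probability---is routine, but must be carried out carefully to confirm that the two stated limits are exactly what is required.
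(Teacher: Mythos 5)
Your overall strategy is the intended one: the paper gives no separate proof of Corollary \ref{clt2}, which is meant to follow from Theorem \ref{th3} exactly as you describe --- multivariate CLT on the i.i.d.\ leading term, plus Slutsky after showing the scaled remainder vanishes --- and your step (i) is correct.

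The genuine gap is in step (ii), in the bookkeeping of the constants. The quantities $C_1,C_2$ in \eqref{eq:th3-last} are not absolute constants: the only way the paper furnishes \eqref{eq:th3-last} is through Theorem \ref{thm:lrate}, which yields $C_1=3\sqrt{2}c\bracket{2\log T+4d}M\asymp\log T$ and $C_2=3M$. Substituting this into \eqref{eqrn:l>0} gives $\Tilde{C}_2=\Theta\bracket{\alpha^2(\log T)^2}$ and, crucially, $\Tilde{C}_4=\Theta\bracket{\alpha^3(\log T)^2}$ (it contains $C_1^2$), not $\cO(\alpha^2\log T)$ and $\cO(\alpha^3)$ as you claim; the paper itself records these squared logarithms in the remark following the corollary, where the rate to normality is $\cO\bracket{(n\batch)^{-1/3}\log^2(n\batch)}$. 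This error is load-bearing: with $T=n$, the third scaled contribution is $\cO\bracket{\alpha^2(\log n)^2\cdot\frac{\alpha\sqrt{n}}{\sqrt{\batch}}}$, and the bare condition $\sqrt{\batch}/(\alpha\sqrt{n})\to\infty$ no longer forces it to zero. Concretely, take $\alpha$ a fixed admissible constant and $\batch=n(\log n)^2$: then both stated conditions hold, since $\sqrt{\batch}/(\alpha\sqrt{n})\asymp\log n\to\infty$ and $\sqrt{\alpha n}/\log(\batch/\alpha)\asymp\sqrt{n}/\log n\to\infty$, yet your bound on the scaled remainder is of order $\alpha^3\log n\to\infty$. (The second contribution has the same defect in milder form: it needs $\sqrt{\alpha n}\gg(\log n)^2$, not merely $\gg\log n$.) So step (ii) as written does not establish negligibility of $R_n$ under the stated asymptotics. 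To be fair, this log-factor tension is inherited from the paper's own statement of the corollary; but a complete proof must confront it --- e.g.\ by carrying the $\log^2$ factors explicitly and restricting to the regimes with polynomial slack highlighted in the paper's remark ($\batch$ fixed with $\alpha=\Theta(n^{-\epsilon})$, $\epsilon\in(\frac12,1)$, or $\alpha=\Theta((n/\sqrt{\batch})^{-2/3})$) --- rather than silently dropping the $\log T$ hidden inside $C_1$.

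A secondary point: your ``absorption'' of $\log T$ and $n_0$ into the single factor $\log(\batch/\alpha)$ is itself a claim needing proof, and it relies on the \emph{second} condition, not just convention. It does hold: since $\batch\geq1$ and $\alpha\leq\sqrt{1/(\mu L)}$, the condition $\sqrt{\batch}/(\alpha\sqrt{n})\to\infty$ gives $(\batch/\alpha)^2\geq\batch/\alpha^2\gg n$, hence $\log(\batch/\alpha)\geq\frac12\log n-\cO(1)\gtrsim\log T$, and $n_0\asymp\log\batch\lesssim\log(\batch/\alpha)$. Without this observation the first condition alone says nothing about $\log n$, so the step should be spelled out.
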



\begin{remark}The conditions on $n$, $\batch$, and $\alpha$ in this corollary are more restrictive than those in Corollary \ref{clt} due to the presence of the approximation errors $\overline{L}\norm{x-x^*}^2$. For a decaying learning rate $\alpha$, the condition on the batch size is much relaxed since a small learning rate reduces the error caused by randomness, as shown in the $C_1$ term in \eqref{eq:th3-last}. Particularly, when the batch size $\batch$ is fixed, the condition of Corollary \ref{clt2} is met for $\alpha=\Theta(n^{-\epsilon})$ with $\epsilon\in(\frac{1}{2},1)$ as $n\to \infty$. Specifically, with diverging $n$ and $\batch$, the nearly optimal learning rate is $\alpha=\Theta((n/\sqrt{\batch})^{-2/3})$, and the corresponding rate of convergence to asymptotic normality is $\cO((n\batch)^{-1/3}\log^2(n\batch))$.
\end{remark}

\section{Experiments}\label{sec:exp}
In this section, we support our theoretical results with simulations in a quadratic example and a logistic loss example of the general strongly convex loss, and real-data experiments of a multinomial logistic regression on MNIST hand-written digit classification. 

In the numerical experiments, we verify the convergence results built in the paper on a training set of size $N$. Setting $\xi$ to be a discrete uniform random variable with values in $\{1,2,\dots, N\}$, we consider to minimize $\min_x \frac{1}{N}\sum_{i=1}^N f_i(x)$, as an example of the stochastic optimization model (\ref{so}), 
where the objective is either the empirical risk $\mE [ f_{\xi}(x)]=\frac{1}{N}\sum_{i=1}^N f_i(x)$, or more specifically in a  maximum likelihood estimation, the negative log-likelihood. In all the simulation results below, we repeat the experiment 200 times. For the real-data analysis on MNIST, we repeat the experiment in a replicable setting while we fix the random seeds to 1, 2, and 3 in training. 
\subsection{Simulation: quadratic loss}
In a quadratic loss model (\ref{eq:quadratic}), we first conduct a simulation study with a sample of size $N=20,000$ and dimension $d=10$. We generate the parameters $\{(\bba_i,\bfb_i)\}_{i=1}^N$ in (\ref{eq:quadratic}) i.i.d and $\bfb_i\sim\mathcal{N}(0_d,I_{d})$, and the positive definitive matrix $\bba_i=\rho \bbv_i^\top \bbv_i + 10 I_d$. Here $\bbv_i$ is a matrix in $\mR^{d\times d}$ and each row of $\bbv_i$ generates from the normal distribution $\mathcal{N}(0_d,I_{d})$, where $\rho>0$ affect the conditional number $L/\mu$. We choose $\rho=1$ and the average conditional number $L/\mu$ is $35/10$. The deterministic minimizer $x^*$ is computed by $x^*=\bracket{\sum_{i=1}^N \bba_i}^{-1} \bracket{\sum_{i=1}^N \bfb_i}$.

We consider the mini-batch SGDM with batch size $\batch=0.2N$ with replacement. The learning rate is fixed at $\alpha = 0.001$. According to Theorem \ref{thm:lambda}, we choose the momentum weight following
\begin{eqnarray}\label{gammaset}
    \gamma = \bracket{\frac{1-\mu \alpha}{1+\mu \alpha}}^2.
\end{eqnarray}
We refer to the above $\gamma$ as the adaptive momentum weight in SGDM (\texttt{SGDM-adap}), while we also compare it with other fixed $\gamma$, as well as SGD as a special case of SGDM with $\gamma=0$ in Figure \ref{fig:linear}.

\begin{figure}[ht]
  \centering
  \subfigure[Small $\gamma$]{
    \includegraphics[width=6.7cm]{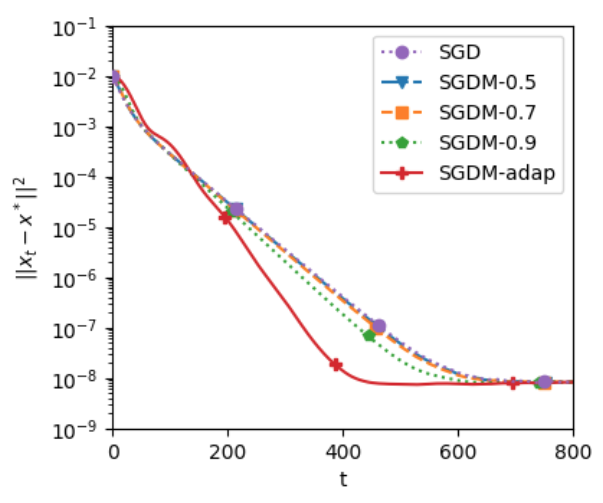}}
  \hspace{0in} 
  \subfigure[Large $\gamma$]{
    \includegraphics[width=6.7cm]{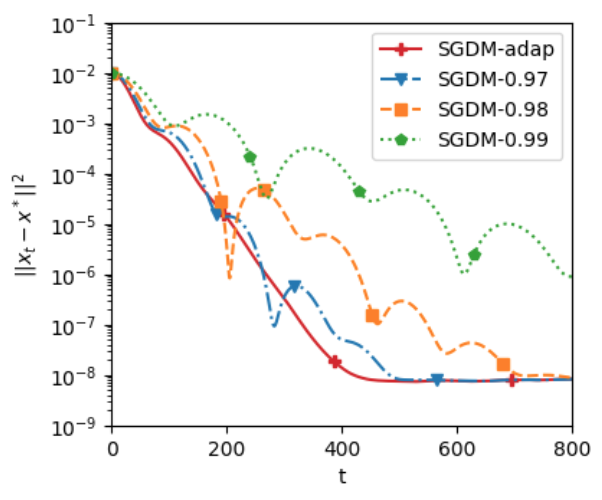}}
  \caption{Performance of SGD and SGDM on the quadratic loss. The average value of the adaptive momentum weight is 0.96.}
  \label{fig:linear}
\end{figure}


Figure \ref{fig:linear} supports the theoretical results in Theorems \ref{thm:g_gamma} and \ref{thm:lambda}. Both SGD and SGDM enjoy linear convergence at the beginning and have the same order of error at the end. The adaptive momentum \texttt{SGDM-adap} with $\gamma = (1-\phi)^2/(1+\phi)^2$ in (\ref{gammaset}) converges fastest, where the average value over 200 experiments is $0.95$. The acceleration of convergence with small momentum weights is not significant, while SGDM with a very large $\gamma$ becomes slower and unstable. Particularly, SGDM with $\gamma=0.5$ converges as fast as SGD, and SGDM with momentum weight 0.9 converges faster than SGD. For large momentum weight $\gamma=0.99$, the linear convergence factor is $\sqrt{\gamma}$, and we can see from the experiment that its convergence is the slowest in Figure \ref{fig:linear}.


We can see that appropriate momentum weights lead to faster convergence as shown in Figure  \ref{fig:linear}(a). However, when the momentum weight is exceedingly large, SGDM may cause the $\ell_2$ error to oscillate, as shown in Figure \ref{fig:linear}(b). This is because momentum causes the algorithm to continue moving in the direction of past gradients, even if the current gradient is opposite to the momentum. As a result, SGDM oscillates back and forth near the minimum of the loss function instead of converging steadily towards it. Moreover, it also leads to a decrease in the convergence speed. The observation matches the finding in Theorem \ref{thm:lambda}.




We further compare the averaged SGD and SGDM. We set $n_0$ in Theorem \ref{th2} as $n_0=200$ and $500$, and report the convergence of the averaged SGD and SGDM in
\begin{eqnarray}\label{def:ave}
    \bar{x}_{n_0+t} = \frac{\sum_{j=n_0+1}^{n_0+t} x_j}{t}.
\end{eqnarray}
\begin{figure}[ht]
  \centering
  \subfigure[Small $\gamma$]{
    \includegraphics[width=6.7cm]{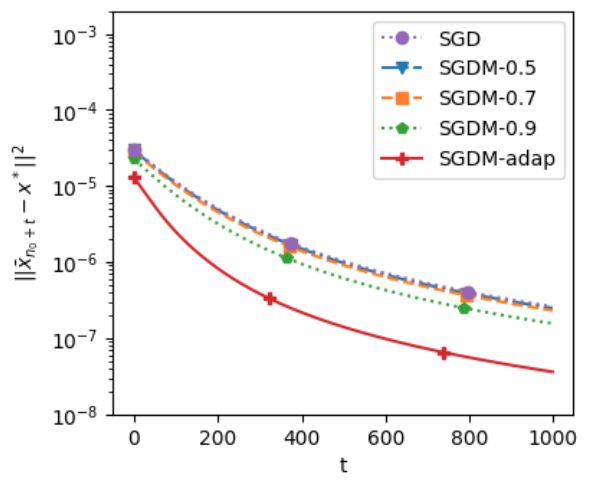}}
  \hspace{0in} 
  \subfigure[Large $\gamma$]{
    \includegraphics[width=6.7cm]{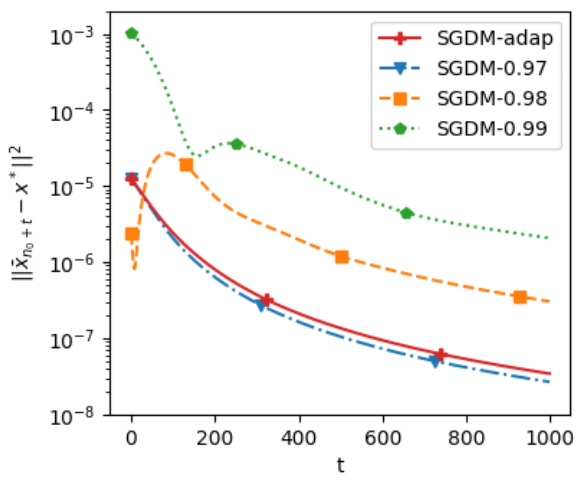}}
  \caption{Performance of Averaged SGD and Averaged SGDM on the quadratic loss. The average value of the adaptive momentum weight is 0.96.}
    \label{fig:average}
\end{figure}
\begin{figure}[ht]
  \centering
  \subfigure[Small $\gamma$]{
    \includegraphics[width=6.7cm]{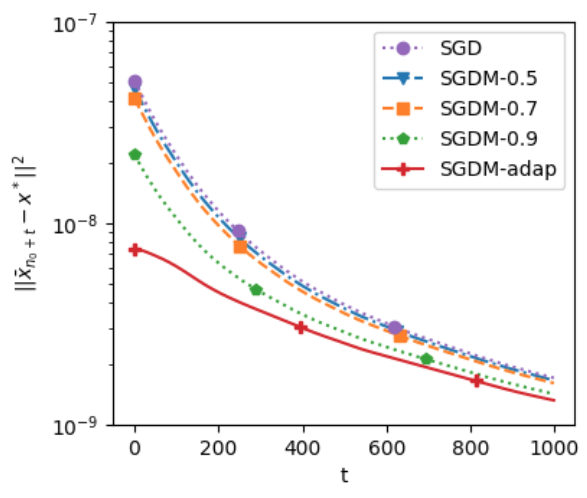}}
  \hspace{0in} 
  \subfigure[Large $\gamma$]{
    \includegraphics[width=6.7cm]{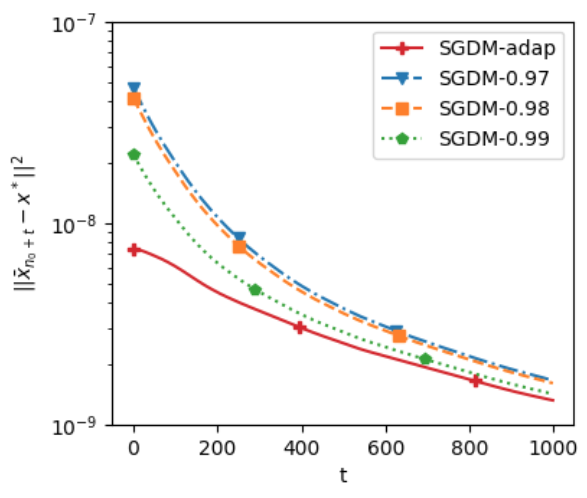}}
  \caption{Performance of Averaged SGD and Averaged SGDM on the quadratic loss with $n_0=500$.}
    \label{fig:average2}
\end{figure}
In Figures \ref{fig:average}--\ref{fig:average2}, averaged SGDM with adaptive momentum weight defined as (\ref{gammaset}) converges faster than the others. Specifically, it converges significantly faster than the averaged SGD. When $n_0=500$, Figure \ref{fig:average2} illustrates that the averaging technique reduces the order of the bias from $10^{-8}$, as seen in Figure \ref{fig:linear}, to $10^{-9}$. 

We further conduct a simulation study to verify the asymptotic normality result of Corollary \ref{clt}. In this experiment, we evaluate the one-dimensional projection statistic $Z$ in \eqref{def:z}
with $n_0=1000$, $n=2000$ and $\Omega=\bracket{N\sigma^2}^{-1}\sum_{i=1}^N (A_i x^*-b_i)(A_i x^*-b_i)^\top$. 
We generate the trajectory $\{x_1,\cdots,x_n\}$ 1000 times and get the replications $Z^{(1)}, Z^{(2)}, \cdots, Z^{(1000)}$. Figure \ref{fig:clt} shows the frequency of $\{Z^{(1)}, Z^{(2)}, \cdots, Z^{(1000)}\}$. From Figure \ref{fig:clt} we can see that both averaged SGD and SGDM well approximate the normal distribution. We can further see that the frequency of averaged SGDM is slightly closer to the normal distribution than that of averaged SGD under finite rounds $n=2000$. These observations reflect the theoretical results we build in Theorem \ref{th2}.


\begin{figure}[ht]
  \centering
  \subfigure{
    \includegraphics[width=6.2cm]{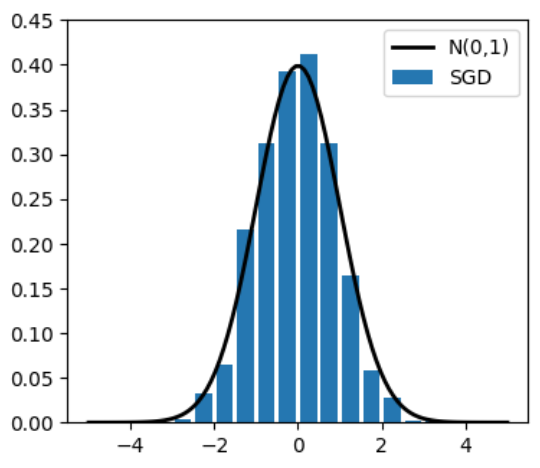}}
  \hspace{0in} 
  \subfigure{
    \includegraphics[width=6.2cm]{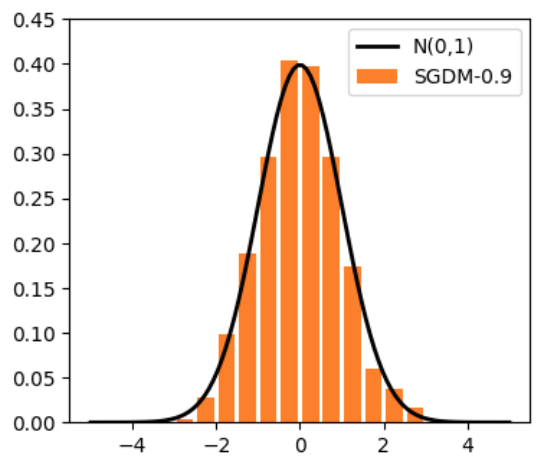}}
  \caption{Frequency of $Y$ about Averaged SGD and Averaged SGDM  with $\gamma=0.9$.} 
  \label{fig:clt}
\end{figure}

\subsection{Sensitivity to learning rates}\label{sec:exp_sensitivity}
In this section, we will show that the performance of SGDM and averaged SGDM have a wider range of tunable learning rates compared to SGD and averaged SGD.
In the quadratic loss model with $N=20,000$ and $d=10$, we generate the parameters $\{(\bba_i,\bfb_i)\}_{i=1}^N$ in (\ref{eq:quadratic}) i.i.d and $\bfb_i\sim\mathcal{N}(0_d,I_{d})$, and the positive definitive matrix $\bba_i= \bbv_i^\top \bbv_i + I_d$, where each row of $\bbv_i$ generates from the normal distribution $\mathcal{N}(0_d,I_{d})$. The average conditional number $L/\mu$ is $26/1$. The learning rates we set are chosen from $\{2^1,2^0,2^{-1},2^{-2},\cdots\}$. The batch size is $\batch=0.2N$. 

\begin{figure}[ht!]
    \centering
    \includegraphics[height=6cm]{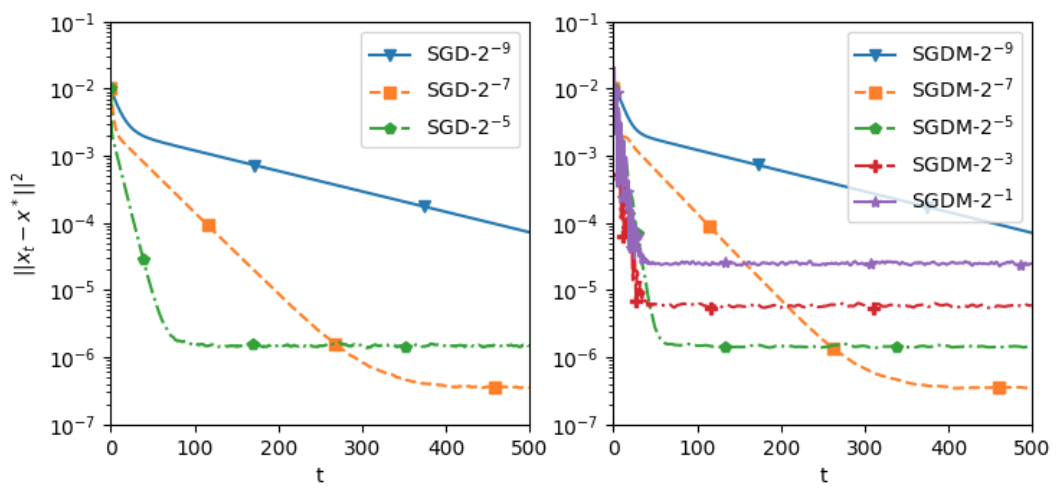}
    {\caption{Performance of SGD and SGDM with $\gamma=0.8$ under different learning rates. SGD fails to converge with $\alpha=2^{-1}$ and $\alpha=2^{-3}$.}
    \label{fig:dif-alpha}}
\end{figure}

Figure \ref{fig:dif-alpha} shows the convergence behaviors of SGD and SGDM for a wide range of learning rates $\alpha=2^{-1},2^{-3},2^{-5},2^{-7},2^{-9}$. Notably, SGD fails to converge with learning rates of $\alpha\geq 2^{-3}$. In contrast, SGDM with $\gamma=0.8$ exhibits robust convergence properties, successfully converging even with a larger learning rate $\alpha=2^{-1}$. This comparison emphasizes that SGDM is less sensitive to the learning rate. 
Figure \ref{fig:sen-alpha} further illustrates the finite-sample errors of SGD and SGDM across different learning rates, given a fixed number of iterations at $T=500$. 
Intuitively, smaller learning rates necessitate more iterations to converge and are associated with an increased bias for each method. Notably, SGDM is capable of converging with a relatively large learning rate. The maximum learning rate ensuring convergence is $\alpha=2^{-4}$ for SGD, $\alpha=2^{-1}$ for SGDM with $\gamma=0.8$ and $\alpha=2^{0}$ for SGDM with $\gamma=0.9$.
Reflecting upon the learning rate condition $\alpha L<(1+\gamma)/(1-\gamma)$, it can be confirmed that $ (1+\gamma)/(1-\gamma)\approx 2^{-1}/2^{-4}$ for $\gamma=0.8$ and $(1+\gamma)/(1-\gamma)\approx 2^{0}/2^{-4}$ for $\gamma=0.9$. This alignment validates that the figure is in agreement with the theoretical analysis.

\begin{figure}[ht!]
    \centering
    \includegraphics[width=6.7cm]{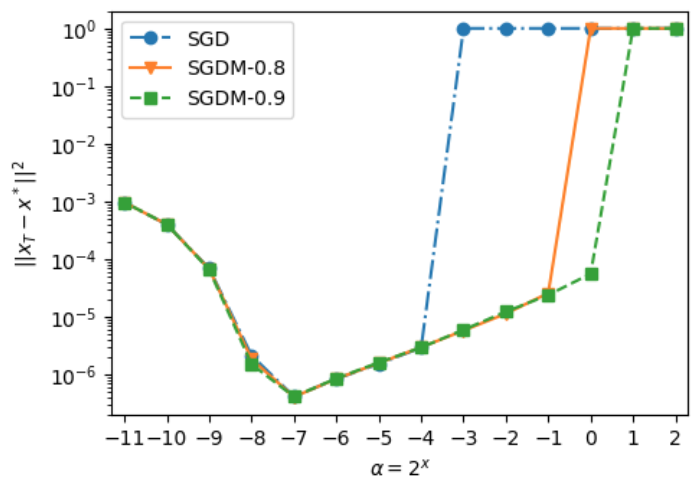}
    {\caption{Performance of SGD and SGDM with $\gamma=0.8,~0.9$ under different learning rates, where $T=500$ is fixed. }
    \label{fig:sen-alpha}}
\end{figure}

\begin{figure}[ht!]
 \centering
    \includegraphics[height=6cm]{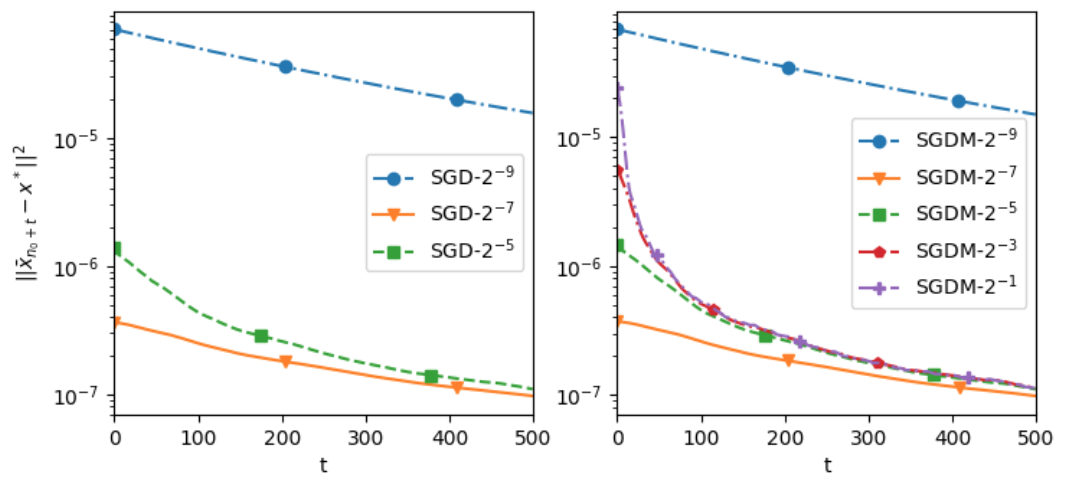}
    {\caption{Performance of Averaged SGD and Averaged SGDM with $\gamma=0.8$ under different learning rates.}
    \label{fig:ave-alpha}}
\end{figure}

Moreover, we examine the performance of averaged SGD and averaged SGDM across a range of learning rates. For both algorithms, we set $n_0=500$ and define $\bar{x}_{n_0+t}$ as (\ref{def:ave}). Figure \ref{fig:ave-alpha} illustrates that all algorithms exhibit a sublinear rate of convergence. SGDM with learning rates ranging from $\alpha=2^{-1}$ to $2^{-7}$, as well as SGD with learning rates $\alpha=2^{-5},2^{-7}$, achieve convergence to an similar error level. Compared to Figure \ref{fig:dif-alpha}, we can see that the averaging technique improves convergence and is less dependent on learning rates.
However, it is noteworthy that algorithms with learning rate $\alpha=2^{-9}$ require a substantially larger initial iteration count $n_0$ to mitigate the effects of iterations with significant deviations.

\begin{figure}[ht!]
    \centering
    \includegraphics[width=6.7cm]{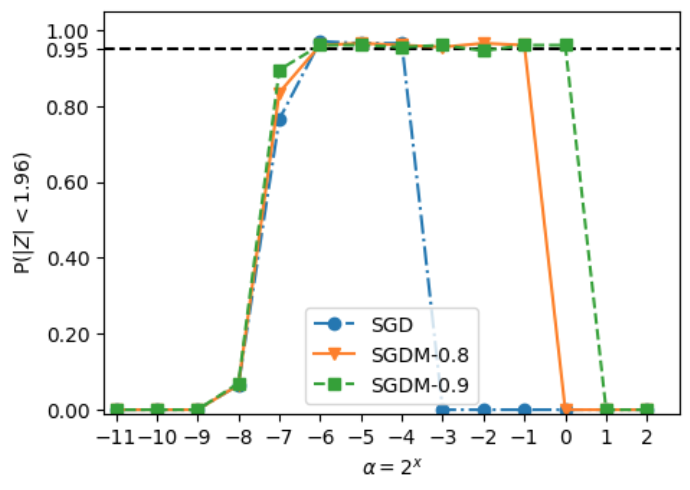}
    {\caption{The probability $P(|Z|<1.96)$ for SGD and SGDM with $\gamma=0.8,~0.9$ under different learning rates, where $Z$ is defined as (\ref{def:z}).}
    \label{fig:clt-alpha}}
\end{figure}

Additionally, we investigate the asymptotic normality as outlined in Corollary \ref{clt}. 
Figure \ref{fig:clt-alpha} illustrates the asymptotic behavior of the statistic $Z$ in (\ref{def:z}) with $n=1000$ and $n_0=500$.
This illustration confirms its convergence to $\mathcal{N}(0,1)$, under a range of learning rates. The $y$-axis displays the empirical probability $P(|Z|<1.96)$, which indicates the frequency with which the statistic $Z$ falls within the critical range $(-1.96, 1.96)$ across $1000$ trials, corresponding to a $95\%$ confidence interval. The $x$-axis specifies the learning rates employed in the numerical experiments.
Figure \ref{fig:clt-alpha} supports the assertions in the corollary, demonstrating the robustness of the asymptotic normality with respect to different learning rates. The averaged SGDM with large momentum permits a broader selection of learning rates and exhibits reduced sensitivity to their variation.

\subsection{Simulation: logistic regression}

In Example \ref{ex:logistic}, we generate the data $a_i\in\mR^d$ i.i.d from $\mathcal{N}(0_d,I_d)$, and $b_i\in\{0,1\}$ where generated by $b_i=1$ with probability $p_x(a_i)$ and $b_i=0$ otherwise. Here $p_x(a) = 1/(1+\exp(-x^\top a))$, and we use $x=\frac{1}{\sqrt{d}}(1,1,\cdots,1)^\top$. 
We fix the sample size $N=20,000$ and the dimension $d=10$. We set the regularization parameter $\nu$ to zero. Before each simulation, we run the full-batch gradient descent to get the minimizer $x^*$, then compute the Hessian matrix at $x^*$,
\begin{eqnarray*}
    \Sigma = \frac{1}{N}\sum_{i=1}^N p_{x^*}(a_i)(1-p_{x^*}(a_i)) a_i a_i^\top.
\end{eqnarray*}
We specify the batch size $\batch=0.2N$ and the learning rate is $\alpha=0.5$. We consider SGDM with fixed momentum weights $\gamma=0.3,0.5,0.7,0.8,0.9$, as well as the adaptive momentum weight as (\ref{gammaset}). Under the data generation procedure, the average of the adaptive momentum weight is $0.75$.

\begin{figure}[ht]
  \centering
  \subfigure[Small $\gamma$]{
    \includegraphics[width=6.7cm]{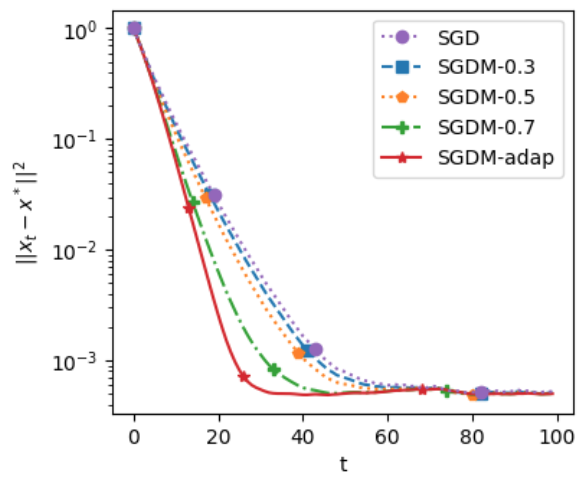}}
  \hspace{0in} 
  \subfigure[Large $\gamma$]{
    \includegraphics[width=6.7cm]{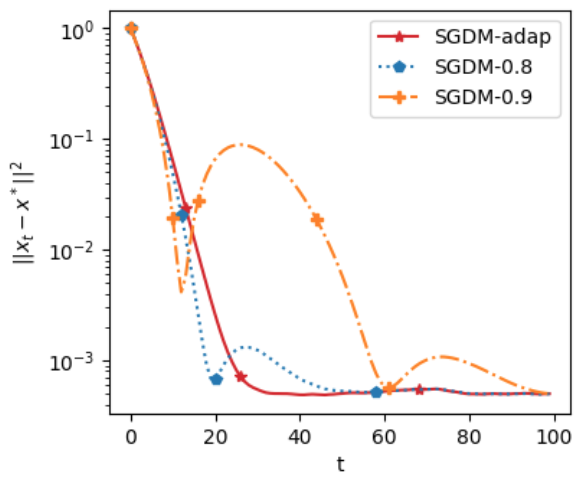}}
  \caption{Performance of SGD and SGDM on the logistic loss, and the average value of the adaptive momentum weight is 0.75.}
    \label{fig:logistic}
\end{figure}

Figure \ref{fig:logistic} illustrates that for SGDM with momentum weights $\gamma$ less than 0.5, the convergence rate of SGDM is slightly faster than SGD, while for $\gamma=0.7$ and adaptive weights $\gamma$ specified as in (\ref{gammaset}), the convergence is much faster than SGD. For $\gamma=0.8$, the convergence is similar to that of the adaptive weight but less stable. For $\gamma=0.9$, the convergence is much slower and more unstable.

 \begin{figure}[ht]
  \centering
  \subfigure[Small $\gamma$]{
    \includegraphics[width=6.7cm]{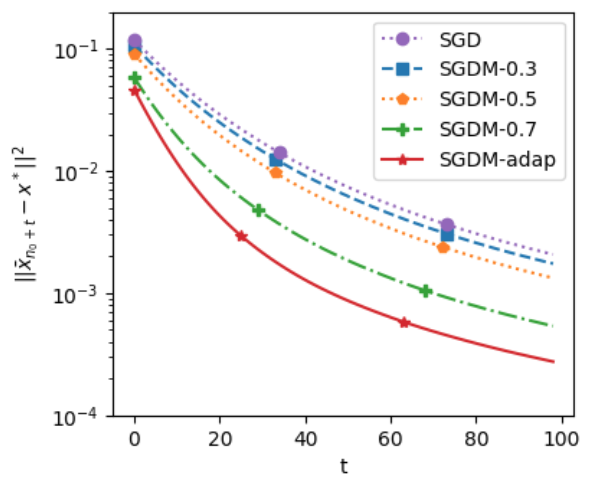}}
  \hspace{0in} 
  \subfigure[Large $\gamma$]{
    \includegraphics[width=6.7cm]{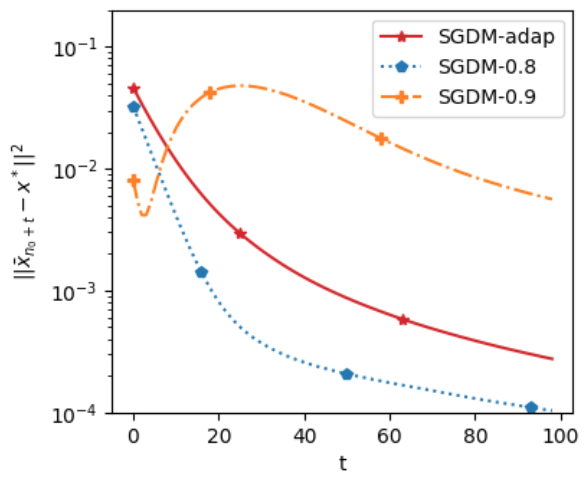}}
  \caption{Performance of Averaged SGD and Averaged SGDM on the logistic loss. The average value of the adaptive momentum weight is 0.75.}
    \label{fig:logistic_ave}
\end{figure}

\begin{figure}[ht]
  \centering
  \subfigure[Small $\gamma$]{
    \includegraphics[width=6.7cm]{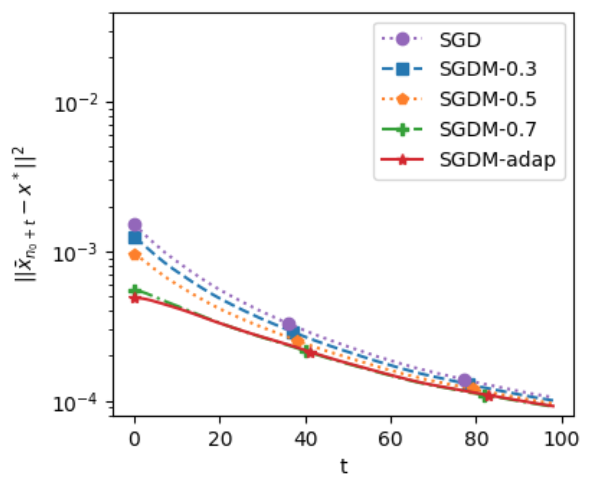}}
  \hspace{0in} 
  \subfigure[Large $\gamma$]{
    \includegraphics[width=6.7cm]{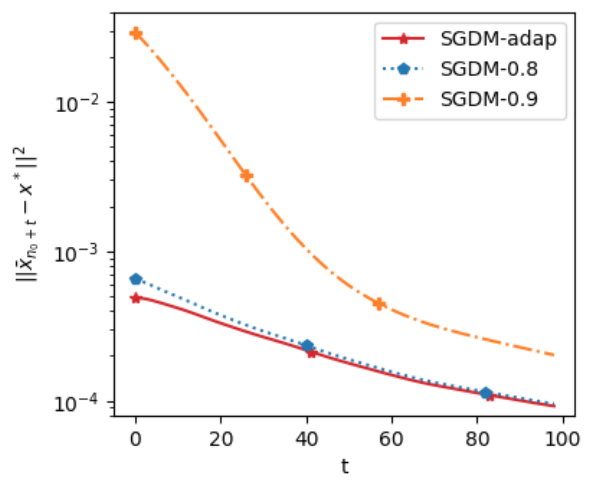}}
  \caption{Performance of Averaged SGD and Averaged SGDM on the logistic loss with $n_0=40$.}
    \label{fig:logistic_ave2}
\end{figure}

To further compare the convergence of averaged SGD and SGDM, we set $n_0$ in Theorem \ref{th2} as $n_0=10$ and $40$. We plot the comparison of the $\ell_2$ error of the averaged SGDM and SGD. 
Figure \ref{fig:logistic_ave}(a) shows that for momentum weights smaller and equal to the adaptive momentum weight in (\ref{gammaset}), the averaged SGDM converges faster than the averaged SGD. In Figure \ref{fig:logistic_ave}(b), we can see the averaged SGDM with $\gamma=0.8$ converges even faster than the averaged SGDM with adaptive $\gamma$, while for $\gamma=0.9$, the averaged SGDM converges faster at the beginning but fluctuates, which cause the increase of $\ell_2$ error for $\gamma=0.9$. 
In comparison with Figure \ref{fig:linear}(b) for the quadratic loss, we can see that although large momentum weight causes fluctuation in both cases, the convergence is more flattened for the the logistic loss in Figure \ref{fig:logistic}(b). This may also explain why the averaged SGDM performs well with $\gamma$ slightly larger than the adaptive momentum weight in Figure \ref{fig:logistic_ave}(b). Figure \ref{fig:logistic_ave2} demonstrates that with $n_0=40$, there is an improvement in the bias order from $10^{-3}$, as seen in Figure \ref{fig:logistic}, to $10^{-4}$. This improvement reflects a sublinear convergence that is in agreement with the behavior observed in Figure \ref{fig:average2} for quadratic loss. Consequently, for averaged SGDM, this suggests reduced sensitivity to the choice of a smaller $n_0$. 




\subsection{Real data: MNIST classification}
In this section, we consider the multinomial logistic regression on the MNIST dataset, which consists of $N=60,000$ images of handwritten digits with size $28\times 28$.  
We reshape the images to vectors of size $784\times 1$. For the samples $(a_i, b_i)$, where $a_i\in\mR^{784}$ are the vectorized images and $b_i\in\mR^{10}$ are one-hot indicators corresponding the digits $0,1,\cdots,9$, the loss function is  $
    f_i(X) = H(X^\top a_i,b_i),$
where $X\in\mR^{784\times 10}$ and $H$ is the multi-class cross entropy.
We specify the batch size $\batch=256$ and the learning rate $\alpha=1.0$. Due to the difficulty of computing the condition number $L/\mu$, we fix the momentum weight $\gamma=0.1,0.3,0.5,0.7,0.9,0.99$ in training. 

 \begin{figure}[ht]
  \centering
  \subfigure[Small $\gamma$]{
    \includegraphics[width=6.7cm]{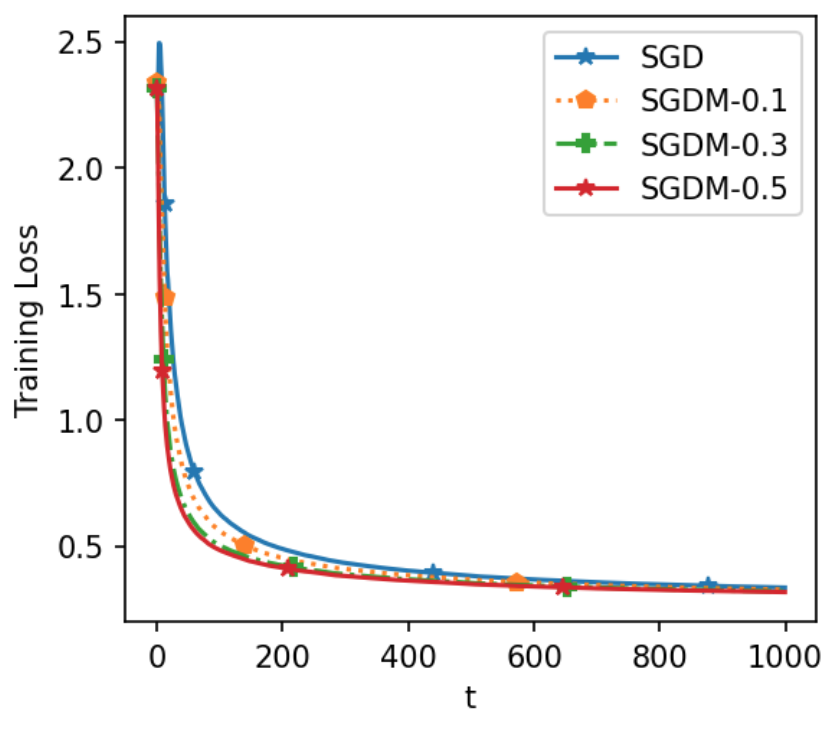}}
  \hspace{0in} 
  \subfigure[Large $\gamma$]{
    \includegraphics[width=6.7cm]{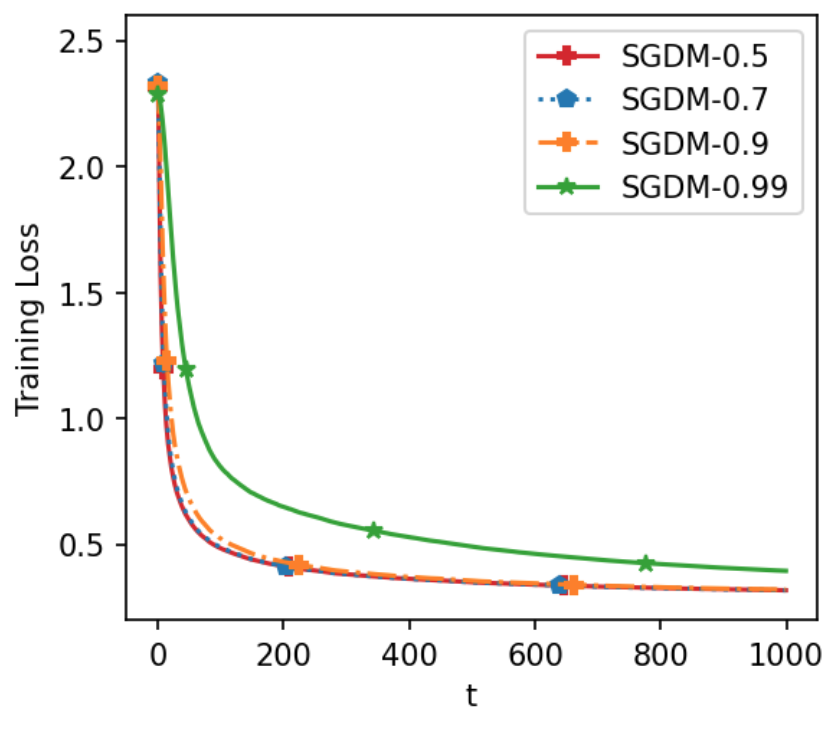}}
  \caption{Performance of SGD and SGDM on MNIST.}
    \label{fig:mnist}
\end{figure}


Figure \ref{fig:mnist} shows the convergence of the training loss over iterations $t$. For the purpose of clear representation, the reported training loss for each iteration $t$ is based on an average of mini-batch stochastic losses $g_{\eta_t}$ of the past $\lfloor N/\batch\rfloor$ batches, with the batch size $\batch$. 

From Figure \ref{fig:mnist}, we see that 
SGDM with momentum weight $\gamma$ ranging from $0.1$ to $0.9$ greatly outperforms SGD. SGDM with $\gamma=0.5$ converges fastest in the earlier iterations, while the training losses are almost identical for different momentum weights in the later iterations, except for $\gamma=0.99$. The convergence rate is not very sensitive to the momentum weights and a wide range of $\gamma\in[0.3,0.9]$ leads to similar performance.

\section{Conclusion}\label{sec:conclution}

Our study sheds light on the performance of mini-batch SGDM in solving optimization problems under strongly convex loss functions. The convergence rate of mini-batch SGDM is influenced by several factors, including the batch size, the momentum weight, and the learning rate. Our analysis rigorously shows that certain choices of momentum weight with a reasonably large batch size can lead to faster convergence compared to SGD. This finding is consistent with previous numerical studies on SGDM.  Additionally, our findings, supported by theoretical analysis and numerical experiments, indicate that SGDM permits a broader selection of learning rates.

We further establish the asymptotic normality of averaged SGDM in the quadratic settings and reveal the non-vanishing bias of that in the general settings. Our investigation reveals that averaged SGDM is asymptotically equivalent to averaged SGD. By minimizing the remainder term, we give the optimal learning rate and the corresponding rate of convergence to asymptotic normality. In addition, we present the asymptotic covariance matrix for the averaged SGDM, enabling the uncertainty quantification of the algorithm outputs and statistical inference of the true model parameters based on SGDM, as opposed to SGD. 

In summary, our study contributes to the theoretical understanding of mini-batch SGDM and has practical implications for developing efficient optimization algorithms in machine learning. It is noteworthy to mention that our study assumed that the loss function was smooth and strongly convex. In future work, our work can be extended in several ways. It would be interesting to investigate the performance of mini-batch SGDM under non-convex loss functions, since this would have important implications for the application of SGDM in deep learning. It is also of interest to extend the analysis to the case when the learning rate decays over time.

\newpage




\vskip 0.2in

\bibliographystyle{chicago}
\bibliography{references}

\newpage
\begin{appendices}
\section{Proofs of finite-time convergence rates of SGDM}
\subsection{Proof of Theorem \ref{thm:g_gamma}}

Recall that
\begin{eqnarray*}
m_{t+1}=\gamma m_{t}+(1-\gamma)\triangledown g_{\eta_{t}}(x_{t}),\quad
x_{t+1}=x_{t}-\alpha m_{t+1}.
\end{eqnarray*} 
Put
\begin{eqnarray*}
\Gamma=\left(
    \begin{array}{ccc}
       \gamma I, & (1-\gamma)\Sigma\\
      -\alpha \gamma I,  & I-\alpha(1-\gamma)\Sigma \\
    \end{array}
\right).
\end{eqnarray*}
Let $\widetilde{x}_{t}=x_{t}-x^{*}$ and $\widetilde{m}_{t+1} = (1-\gamma) \sum_{j=1}^{t}\gamma^{t-j}\Sigma \widetilde{x}_j$, we can write 
\begin{eqnarray}\label{iter}
    \notag\left(
    \begin{array}{ccc}
      \widetilde{m}_{t+1} \\
      \widetilde{x}_{t+1} \\
    \end{array}
\right)&& =\Gamma\left(
    \begin{array}{ccc}
      \widetilde{m}_{t} \\
      \widetilde{x}_{t} \\
    \end{array}
\right)-\alpha (1-\gamma) \left(
    \begin{array}{ccc}
    0 \\
      \sum_{j=1}^{t}\gamma^{t-j}(\triangledown g_{\eta_{j}}(x_j) -\Sigma\widetilde{x}_j)\\
    \end{array}
\right)\\
&&=\Gamma^{t}\left(
    \begin{array}{ccc}
      \widetilde{m}_{1} \\
      \widetilde{x}_{1} \\
    \end{array}
\right)-\alpha (1-\gamma) \sum_{j=1}^t \Gamma^{t-j}\left(
    \begin{array}{ccc}
    0 \\
      \sum_{k=1}^{j}\gamma^{j-k}(\triangledown g_{\eta_{k}}(x_k)-\Sigma\widetilde{x}_k)\\
    \end{array}
\right).
\end{eqnarray}
Define 
\begin{eqnarray*}
    q_t:= \mE[\norm{\widetilde{m}_t}^2+\norm{\widetilde{x}_t}^2].
\end{eqnarray*}
Recall for $\overline{L}=0$, there hold that $\triangledown g(x) = \mE[\triangledown g_\eta (x)] = \Sigma (x-x^*)$ and
\begin{eqnarray*}
    \triangledown g_{\eta_{k}}(x_k)-\Sigma\widetilde{x}_k = \bracket{\triangledown g_{\eta_{k}}(x_k) - \triangledown g_{\eta_{k}}(x^*) + \triangledown g(x^*) - \triangledown g(x_k)} + \triangledown g_{\eta_{k}}(x^*).
\end{eqnarray*}
By the $L_f$-smooth of the individual function $f_\xi(x)$ in (A3) and the independence of $L_\xi$ and $\widetilde{x}_k$, we have
\begin{eqnarray*}
    \mE\norm{\triangledown f_{\xi}(x_k) - \triangledown f_{\xi}(x^*) + \triangledown f(x^*) - \triangledown f(x_k)}^2\leq\mE\norm{\triangledown f_{\xi}(x_k) - \triangledown f_{\xi}(x^*)}^2\leq  L_f^2 \mE\norm{\widetilde{x}_k}^2.
\end{eqnarray*}
Then from the iteration (\ref{iter}), we have
\begin{eqnarray}
    \notag q_{t+1}&&\leq  \norm{\Gamma^t}^2 q_1 + 2\alpha^2 (1-\gamma)^2 \frac{\sigma^2}{\batch}\sum_{k=1}^t \bracket{\sum_{j=k}^t \norm{\Gamma^{t-j}}\gamma^{j-k}}^2  \\
    \label{l0qt}&&+ 2\alpha^2 (1-\gamma)^2\frac{L_f^2}{\batch} \sum_{k=1}^t\bracket{\sum_{j=k}^t \norm{\Gamma^{t-j}}\gamma^{j-k}}^2 q_k.
\end{eqnarray}

In the remaining part of proof, we use the inductive method to derive the bound of $q_t$. Let $C_1\geq 1$ and $C_2\geq 1$ be some constants which will be specified later. We will prove that, for any $t\geq 1$, the inequalities
\begin{eqnarray}\label{induction}
    q_{j}\leq \frac{C_1}{\batch(1-\lambda)} \alpha^2\sigma^2+C_2q_1\lambda^{2(1-\delta)(j-1)},\quad 1\leq j\leq t
\end{eqnarray}
with fixed $\delta\in(0,1]$, imply that 
\begin{eqnarray}\label{induction2}
    q_{t+1}\leq \frac{C_1}{\batch(1-\lambda)} \alpha^2\sigma^2+C_2q_1\lambda^{2(1-\delta)t}.
\end{eqnarray}

By Lemma \ref{lemma:g_gamma} below, we have $\norm{\Gamma^j}\leq M \lambda^j$ for all $j\geq 0$.
Theorem \ref{thm:lambda} provides that $\lambda\geq \sqrt{\gamma}$. By applying Lemma \ref{le:sum},  we are equipped to manage the summation presented in (\ref{l0qt}).
Consequently, from (\ref{l0qt}) and (\ref{induction}), we deduce that
\begin{eqnarray*}
    q_{t+1}&&\leq M^2\lambda^{2t} q_1 +2\alpha^2  \frac{\sigma^2}{\batch} M^2 \frac{2}{1-\lambda} \\
    &&+ 2\alpha^2 (1-\gamma)^2 \frac{L_f^2}{\batch} M^2\bracket{ \frac{2C_1\alpha^2\sigma^2}{\batch(1-\gamma)^2(1-\lambda)^2} + C_2 q_1 \lambda^{2(1-\delta)t}\frac{1}{\lambda^{2(1-\delta)}}\frac{4}{\delta(1-\gamma)^2(1-\lambda)}}\\
    &&\leq \frac{C_1}{\batch(1-\lambda)} \alpha^2\sigma^2M^2 \bracket{\frac{4}{C_1} + \frac{4 \alpha^2L_f^2}{\batch(1-\lambda)}}+ C_2 q_1 \lambda^{2(1-\delta)t}M^2 \bracket{\frac{\lambda^{2\delta t}}{C_2} + \frac{8\alpha^2  L_f^2}{\batch\lambda^{2(1-\delta)}\delta(1-\lambda)}}.
\end{eqnarray*}
Let the learning rate $\alpha$ and the batch size $\batch$ satisfy
\begin{eqnarray*}
    8 M^2  \frac{1}{\lambda^{2(1-\delta)}} \frac{1}{\delta(1-\lambda)} \alpha^2\frac{L_f^2}{\batch}\leq \frac{1}{2}, 
\end{eqnarray*}
we can take
\begin{eqnarray*}
    C_1  = 8M^2,\quad C_2  = 2M^2.
\end{eqnarray*}
It is straightforward to verify that (\ref{induction2}) is satisfied, and the induction is completed.

\subsection{Proof of Theorem \ref{thm:lambda}}
Put
\begin{eqnarray*}
\Gamma=\left(
    \begin{array}{ccc}
       U, & 0_d\\
      0_d,  & U \\
    \end{array}
\right)\left(
    \begin{array}{ccc}
       \gamma I, & (1-\gamma)A \\
      -\alpha \gamma I,  & I-\alpha(1-\gamma)A \\
    \end{array}
\right)\left(
    \begin{array}{ccc}
       U^{\top}, & 0_d\\
      0_d,  & U^{\top} \\
    \end{array}
\right)
\end{eqnarray*}
where $\Sigma=U diag(\kappa_{1},...,\kappa_{d})U^{T}=:UAU^{T}$, $\mu=\kappa_{1}\leq \cdots\leq \kappa_{d}=L$, $U$ is an orthogonal matrix. Now we can define
\begin{eqnarray*}
\lambda^{\pm}_{k}=\frac{\gamma+1-\alpha(1-\gamma)\kappa_{k}\pm\sqrt{(\alpha(1-\gamma)\kappa_{k}-\gamma-1)^{2}-4\gamma}}{2},
\end{eqnarray*}
for $1\leq k\leq d$, and the diagonal matrix
\begin{eqnarray}\label{def:Lambda}
\Lambda=\left(
    \begin{array}{ccc}
       diag(\lambda^{+}_{1},...,\lambda^{+}_{d}), & 0_d \\
      0_d,  & diag(\lambda^{-}_{1},...,\lambda^{-}_{d})\\
    \end{array}
\right).
\end{eqnarray}
Notice that if $(\alpha(1-\gamma)\kappa_{k}-\gamma-1)^{2}<4\gamma$, $\lambda^{\pm}_{k}$ is complex and the definition is 
\begin{eqnarray*}
    \lambda^{\pm}_{k}=\frac{\gamma+1-\alpha(1-\gamma)\kappa_{k}\pm \sqrt{-1}\sqrt{4\gamma-(\alpha(1-\gamma)\kappa_{k}-\gamma-1)^{2}}}{2}.
\end{eqnarray*}

Prior to establishing Theorem \ref{thm:lambda}, we first demonstrate the diagonalization of $\Gamma$, with $\Lambda$ representing the resultant diagonal matrix. Subsequent to this, we proceed to delineate the bounds of the spectral radius of $\Lambda$.

\begin{lemma}\label{lemma:g_gamma} For $\alpha, \gamma,L$ satisfying that $\alpha L<2(1+\gamma)/(1-\gamma)$ and $\abs{\gamma+1-\alpha(1-\gamma)\kappa_{k}}\neq 2\sqrt{\gamma}$ for all $1\leq k\leq d$, we have
\begin{eqnarray*}
\left(
    \begin{array}{ccc}
       \gamma I, & (1-\gamma)A \\
      -\alpha \gamma I,  & I-\alpha(1-\gamma)A \\
    \end{array}
\right)=P\Lambda P^{-1},
\end{eqnarray*}
for some invertible matrix $P$ that satisfies
\begin{eqnarray*}
\norm{P}\leq 2, \quad \norm{P^{-1}}\leq \frac{2}{\sqrt{\Delta}} \bracket{2(1-\gamma)(1+\alpha L+ L)+3\alpha\gamma},
\end{eqnarray*}
where $\Delta = \min_k\bbracket{\abs{\bracket{\gamma+1-\alpha(1-\gamma)\kappa_{k}}^2-4\gamma}}$. Therefore, $\norm{\Gamma^{j}}\leq M \lambda^j$ for any $j\geq 1$, where
\begin{eqnarray*}
    M = \frac{4}{\sqrt{\Delta}} \bracket{2(1-\gamma)(1+\alpha L+ L)+3\alpha\gamma},
\end{eqnarray*}
and $\lambda$ is the spectral radius of $\Lambda$.
\end{lemma}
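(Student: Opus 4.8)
The plan is to exploit that $A=\mathrm{diag}(\kappa_1,\dots,\kappa_d)$ is diagonal, so the central factor $B:=\left(\begin{smallmatrix}\gamma I & (1-\gamma)A\\ -\alpha\gamma I & I-\alpha(1-\gamma)A\end{smallmatrix}\right)$ decouples, after a coordinate permutation grouping $(m_k,x_k)$, into a block-diagonal matrix with $d$ blocks $B_k=\left(\begin{smallmatrix}\gamma & (1-\gamma)\kappa_k\\ -\alpha\gamma & 1-\alpha(1-\gamma)\kappa_k\end{smallmatrix}\right)$. Since a permutation similarity is orthogonal and the assembled $P$ is block diagonal up to such a permutation, it suffices to diagonalize each $2\times2$ block and to bound $\norm{P_k}$, $\norm{P_k^{-1}}$ uniformly in $k$, because then $\norm{P}=\max_k\norm{P_k}$ and $\norm{P^{-1}}=\max_k\norm{P_k^{-1}}$. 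First I would record that $B_k$ has trace $\gamma+1-\alpha(1-\gamma)\kappa_k$ and determinant $\gamma$, so its eigenvalues are exactly the $\lambda_k^{\pm}$ of the statement, with $\lambda_k^+\lambda_k^-=\gamma$; the hypothesis $\abs{\gamma+1-\alpha(1-\gamma)\kappa_k}\neq2\sqrt\gamma$ makes the discriminant nonzero, so $\lambda_k^+\neq\lambda_k^-$ and $B_k$ is diagonalizable.

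Next I would take the unnormalized eigenvectors $\widetilde v_k^{\pm}=\bracket{(1-\gamma)\kappa_k,\ \lambda_k^{\pm}-\gamma}^\top$, which solve the first row of $(B_k-\lambda_k^{\pm}I)v=0$, and normalize them to unit length to form the columns of $P_k$. With unit columns $\norm{P_k}\le\norm{P_k}_F=\sqrt2\le2$, giving $\norm{P}\le2$ at once. For the inverse I would use that a $2\times2$ matrix with unit columns satisfies $\norm{P_k^{-1}}\le\sqrt2/\abs{\det P_k}$ (since its larger singular value is at most $\norm{P_k}_F=\sqrt2$), and compute $\det P_k=(1-\gamma)\kappa_k(\lambda_k^--\lambda_k^+)/(\norm{\widetilde v_k^+}\norm{\widetilde v_k^-})$, where $\abs{\lambda_k^+-\lambda_k^-}=\sqrt{\abs{\mathrm{disc}_k}}\ge\sqrt\Delta$ by the definition of $\Delta$. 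This is exactly where the factor $1/\sqrt\Delta$ enters: a small discriminant forces the two eigenvectors nearly parallel and $P_k$ nearly singular.

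The algebraic simplification I would lean on is the identity $(\lambda_k^+-\gamma)(\lambda_k^--\gamma)=\alpha\gamma(1-\gamma)\kappa_k$, read off from the trace and determinant of $B_k$, which yields the clean form $\norm{\widetilde v_k^+}\norm{\widetilde v_k^-}/\bracket{(1-\gamma)\kappa_k}=\sqrt{((1-\gamma)\kappa_k)^2+\abs{\lambda_k^+-\gamma}^2+\abs{\lambda_k^--\gamma}^2+(\alpha\gamma)^2}$, so the troublesome factor $(1-\gamma)\kappa_k$ cancels. Bounding the right-hand side by the triangle inequality, together with $\kappa_k\le L$, $\abs{1-\alpha\kappa_k}\le1+\alpha L$, and the identity $\lambda_k^{\pm}-\gamma=\tfrac12[(1-\gamma)(1-\alpha\kappa_k)\pm\sqrt{\mathrm{disc}_k}]$, then produces $\norm{P^{-1}}\le\frac{2}{\sqrt\Delta}(2(1-\gamma)(1+\alpha L+L)+3\alpha\gamma)$. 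Finally, since $\Gamma=\mathrm{diag}(U,U)\,B\,\mathrm{diag}(U^\top,U^\top)$ is an orthogonal conjugate of $B$, I obtain $\norm{\Gamma^j}=\norm{B^j}=\norm{P\Lambda^jP^{-1}}\le\norm{P}\,\norm{P^{-1}}\,\lambda^j\le M\lambda^j$, where $\lambda$ is the spectral radius of $\Lambda$ and $M=\norm{P}\norm{P^{-1}}$ recovers the stated constant.

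The main obstacle is the explicit constant in $\norm{P^{-1}}$: the conceptual steps (block reduction, eigenpairs, the $1/\sqrt\Delta$ blow-up, orthogonal invariance) are routine, but matching the precise coefficient $2(1-\gamma)(1+\alpha L+L)+3\alpha\gamma$ demands careful bookkeeping of $\abs{\lambda_k^{\pm}-\gamma}$ and $\sqrt{\abs{\mathrm{disc}_k}}$ under the constraint $\alpha L<2(1+\gamma)/(1-\gamma)$, which is what guarantees $\abs{\gamma+1-\alpha(1-\gamma)\kappa_k}\le1+\gamma$ and thereby keeps the eigenvalues and these eigenvector norms uniformly controlled.
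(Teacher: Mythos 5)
Your proposal is correct and follows essentially the same route as the paper's proof: blockwise diagonalization with unit-normalized eigenvector columns (your $\widetilde v_k^{\pm}=\bracket{(1-\gamma)\kappa_k,\lambda_k^{\pm}-\gamma}^\top$ are scalar multiples of the paper's eigenvectors $(z_k^{\pm},1)^\top$ with $z_k^{\pm}=(1-\gamma)\kappa_k/(\lambda_k^{\pm}-\gamma)$), the eigenvalue-gap factor $1/\sqrt{\Delta}$, and the product identity $(\lambda_k^{+}-\gamma)(\lambda_k^{-}-\gamma)=\alpha\gamma(1-\gamma)\kappa_k$. The only cosmetic difference is that you bound $\norm{P_k^{-1}}\leq \sqrt{2}/\abs{\det P_k}$ via singular values where the paper writes out the $2\times 2$ inverse explicitly; both routes bound exactly the same quantity $\norm{\widetilde v_k^{+}}\norm{\widetilde v_k^{-}}/\bracket{(1-\gamma)\kappa_k\abs{\lambda_k^{+}-\lambda_k^{-}}}$, and your real/complex-discriminant bookkeeping closes with the stated constant to spare.
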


\begin{proof}
The eigenvalues satisfy
\begin{eqnarray*}
&&(\lambda^{\pm}_{k})^{2}+(-\gamma-1+\alpha(1-\gamma)\kappa_{k})\lambda^{\pm}_{k}+\gamma=0,\\
&&\lambda^{+}_{k} + \lambda^{-}_{k} = \gamma+1-\alpha(1-\gamma)\kappa_{k},\quad \lambda^{+}_{k} \lambda^{-}_{k} = \gamma.
\end{eqnarray*}
Given that $\alpha L < 2(1+\gamma)/(1-\gamma)$ and considering $\mu \leq \kappa_k \leq L$, it follows that $|\gamma + 1 - \alpha(1 - \gamma)\kappa_{k}| < 1 + \gamma$ for all $k$. 
Considering the product of the eigenvalues $\lambda^{+}_{k} \lambda^{-}_{k} = \gamma < 1$, subsequent calculations reveal that $\max_k\{|\lambda^{\pm}_{k}|\} < 1$, which holds even in the case of complex eigenvalues. Moreover, since $|\gamma + 1 - \alpha(1 - \gamma)\kappa_{k}| \neq 2\sqrt{\gamma}$ for any $k$, it follows that $\lambda^{+}_{k} \neq \lambda^{-}_{k}$, confirming that the matrix is indeed diagonalizable.




Let $e_{k}$ be the unit vector which has 1 in the $k$-th coordinate and others zero. By the definition of $A$, we have
\begin{eqnarray*}
(\alpha\gamma(1-\gamma)+\alpha(\lambda^{\pm}_{k}-\gamma)(1-\gamma))Ae_{k}=(\lambda^{\pm}_{k}-\gamma)(1-\lambda^{\pm}_{k})e_{k},
\end{eqnarray*}
which is equivalent to
\begin{eqnarray}\label{eq6}
(1-\gamma)Ae_{k}=\frac{\lambda^{\pm}_{k}-\gamma}{\alpha\gamma}\Big{\{}(1-\lambda^{\pm}_{k})e_{k}-\alpha(1-\gamma)Ae_{k}\Big{\}}.
\end{eqnarray}
Now define $\z^{\pm}_{k}$ by
\begin{eqnarray*}
(\lambda^{\pm}_{k}-\gamma)\z^{\pm}_{k}=(1-\gamma)Ae_{k},
\end{eqnarray*}
which can be written as $\gamma\z_{k}^\pm+(1-\gamma)Ae_{k}=\lambda^{\pm}_{k}\z_{k}^\pm$.
Combining this equation with (\ref{eq6}), we have
\begin{eqnarray*}
(\lambda^{\pm}_{k}-\gamma)\z^{\pm}_{k}=\frac{\lambda^{\pm}_{k}-\gamma}{\alpha\gamma}\Big{\{}(1-\lambda^{\pm}_{k})e_{k}-\alpha(1-\gamma)Ae_{k}\Big{\}},
\end{eqnarray*}
which yields that
\begin{eqnarray*}
-\alpha\gamma\z^{\pm}_{k}+e_{k}-\alpha(1-\gamma)Ae_{k}=\lambda^{\pm}_{k}e_{k}.
\end{eqnarray*}
Thus
\begin{eqnarray}\label{eq7}
\left(
    \begin{array}{ccc}
       \gamma I, & (1-\gamma)A \\
      -\alpha \gamma I,  & I-\alpha(1-\gamma)A \\
    \end{array}
\right)\left(
    \begin{array}{c}
       \z^{\pm}_{k}\\
      e_{k}\\
    \end{array}
\right)=\lambda^{\pm}_{k}\left(
    \begin{array}{c}
       \z^{\pm}_{k}\\
      e_{k}\\
    \end{array}
\right)=\left(
    \begin{array}{c}
       \z^{\pm}_{k}\\
      e_{k}\\
    \end{array}
\right)\lambda^{\pm}_{k}.
\end{eqnarray}

Define
\begin{eqnarray*}
\z^{+}=(\z^{+}_{1},...,\z^{+}_{d}),\quad\z^{-}=(\z^{-}_{1},...,\z^{-}_{d}),
\end{eqnarray*}
and
\begin{eqnarray*}
    P &&= \left(
    \begin{array}{cc}
        \z^+\bracket{\abs{\z^+}^2+I}^{-\frac{1}{2}}, & \z^-\bracket{\abs{\z^-}^2+I}^{-\frac{1}{2}} \\
        I\bracket{\abs{\z^+}^2+I}^{-\frac{1}{2}},& I\bracket{\abs{\z^-}^2+I}^{-\frac{1}{2}}
    \end{array}\right), \\
    P^{-1} &&= \left(
    \begin{array}{cc}
       \bracket{\abs{\z^+}^2+I}^{\frac{1}{2}}(\z^+-\z^-)^{-1}, & -\bracket{\abs{\z^+}^2+I}^{\frac{1}{2}} \z^{-}(\z^+-\z^-)^{-1} \\
       - \bracket{\abs{\z^-}^2+I}^{\frac{1}{2}}(\z^+-\z^-)^{-1}, & \bracket{\abs{\z^-}^2+I}^{\frac{1}{2}} \z^{+}(\z^+-\z^-)^{-1}
    \end{array}\right).
\end{eqnarray*}
Note that $\z^{+}$ and $\z^{-}$ are diagonal matrices. Then by (\ref{eq7}),
\begin{eqnarray*}
\left(
    \begin{array}{ccc}
       \gamma I, & (1-\gamma)A \\
      -\alpha \gamma I,  & I-\alpha(1-\gamma)A \\
    \end{array}
\right)P=P\Lambda,
\end{eqnarray*}
where $\Lambda$ is defined by (\ref{def:Lambda}).
By the definition of $\z^\pm$, we have
\begin{eqnarray*}
    \z^{\pm} &&= diag\bracket{\frac{\lambda_k^\mp-\gamma}{\alpha\gamma}},\\
    \z^{+}\z^{-} &&= diag\bracket{\frac{(1-\gamma)\kappa_k}{\alpha\gamma}},\\
    (\z^{+}-\z^{-})^{-1} &&= diag\bracket{\frac{-\alpha\gamma}{\sqrt{(\gamma+1-\alpha(1-\gamma)\kappa_k)^2-4\gamma}}}.
\end{eqnarray*}
For $\abs{\z_k^+}+\abs{\z_k^-}$, we have
\begin{eqnarray*}
    \abs{\z_k^+}+\abs{\z_k^-}\leq 2\sqrt{\abs{\z_k^+}^2+\abs{\z_k^-}^2} = \left\{\begin{array}{cc}
         \frac{2}{\alpha \gamma}\sqrt{2(\lambda_k^+-\gamma)(\lambda_k^--\gamma)},& (\gamma+1-\alpha(1-\gamma)\kappa_k)^2<4\gamma,  \\
         \frac{2}{\alpha \gamma}\sqrt{(\lambda_k^+-\gamma)^2+(\lambda_k^--\gamma)^2},& (\gamma+1-\alpha(1-\gamma)\kappa_k)^2>4\gamma,
    \end{array}\right.
\end{eqnarray*}
and 
\begin{eqnarray*}
    \frac{2}{\alpha \gamma}\sqrt{2(\lambda_k^+-\gamma)(\lambda_k^--\gamma)} &&= \frac{2}{\alpha\gamma} \sqrt{2\alpha \gamma(1-\gamma)\kappa_k},\\
     \frac{2}{\alpha \gamma}\sqrt{(\lambda_k^+-\gamma)^2+(\lambda_k^--\gamma)^2}&&=\frac{2}{\alpha\gamma} \sqrt{(\lambda_k^++\lambda_k^--\gamma)^2-2\lambda_k^+\lambda_k^-+\gamma^2}\\
    &&=\frac{2}{\alpha\gamma} \sqrt{(1-\gamma)^2-2\alpha(1-\gamma)\kappa_k+\alpha^2(1-\gamma)^2 \kappa_k^2}\leq \frac{2}{\alpha\gamma} (1-\gamma)\bracket{1+\alpha \kappa_k}.
\end{eqnarray*}
So it holds that
\begin{eqnarray*}
    \abs{\z_k^+}+\abs{\z_k^-}&&\leq  \frac{2}{\alpha\gamma} \bracket{\sqrt{2\alpha \gamma(1-\gamma)\kappa_k}+ (1-\gamma)(1+\alpha \kappa_k)}\\
    &&\leq \frac{1}{\alpha\gamma}  \bracket{2\alpha\gamma + (1-\gamma)(2+2\alpha \kappa_k+\kappa_k)}.
\end{eqnarray*}
Then for $P$ and $P^{-1}$, it holds that
\begin{eqnarray*}
\norm{P}&&\leq \sqrt{4} =2,\\
\norm{P^{-1}} &&\leq 2\max_k \abs{\sqrt{\abs{\z^+_k}^2+1} \sqrt{\abs{\z^-_k}^2+1}(\z^+-\z^-)_k^{-1} }\\
&&\leq 2 \max_k \bracket{\abs{(\z^{+}\z^{-})_k} + \abs{\z^{+}_k}+\abs{\z^{-}_k}+1} \abs{(\z^{+}-\z^{-})_k^{-1}}\\
    &&\leq 2 \max_k \bracket{\frac{(1-\gamma)\kappa_k}{\alpha\gamma} + \frac{1}{\alpha\gamma}  \bracket{2\alpha\gamma + (1-\gamma)(2+2\alpha \kappa_k+\kappa_k)}+1 }\frac{\alpha\gamma}{\sqrt{\Delta}}\\
    &&\leq \frac{2}{\sqrt{\Delta}} \bracket{2(1-\gamma)(1+\alpha L+ L)+3\alpha\gamma},
\end{eqnarray*}
where $\Delta = \min_k\bbracket{\abs{\bracket{\gamma+1-\alpha(1-\gamma)\kappa_{k}}^2-4\gamma}}$.

Put 
\begin{eqnarray*}
    \tilde{P}=\left(\begin{array}{cc}
U, & 0_d \\
0_d, & U
\end{array}\right) P.
\end{eqnarray*}
Then we have 
\begin{eqnarray*}
    \norm{\Gamma^j}  = \norm{\Tilde{P} \Lambda^j \Tilde{P}^{-1}} \leq \norm{P}\norm{P^{-1}}\lambda^j \leq \frac{4}{\sqrt{\Delta}}\bracket{2(1-\gamma)(1+\alpha L+ L)+3\alpha\gamma}\lambda^j:=M\lambda^j.
\end{eqnarray*}
\end{proof}

After diagonalizing $\Gamma$, we can obtain an upper bound for the spectral radius $\lambda$ by evaluating the maximal absolute eigenvalue $\max_k\abs{\lambda^{\pm}_{k}}$ in the diagonal matrix $\Lambda$, as specified in definition (\ref{def:Lambda}).

\begin{proof}[Theorem \ref{thm:lambda}] From Lemma \ref{lemma:g_gamma}, the eigenvalues satisfy
\begin{eqnarray*}
    \lambda^{+}_{k} + \lambda^{-}_{k} = \gamma+1-\alpha(1-\gamma)\kappa_k\in [\gamma+1-\alpha(1-\gamma)L, \gamma+1-\alpha(1-\gamma)\mu],
\end{eqnarray*}
For real eigenvalues, there exists $k$ such that $\abs{\gamma+1-\alpha(1-\gamma)\kappa_k}>2\sqrt{\gamma}$. By the fact that $\mu\leq \kappa_k\leq L$, we have 
\begin{eqnarray}
    \notag \lambda = \max_k\abs{\lambda^{\pm}_k} &&= \max\left\{\frac{\gamma+1-\alpha(1-\gamma)\mu + \sqrt{\bracket{\gamma+1-\alpha(1-\gamma)\mu}^2-4\gamma}}{2},\right.\\
    \notag&&\quad\left. \frac{\alpha(1-\gamma)L-\gamma-1 + \sqrt{\bracket{\alpha(1-\gamma)L-\gamma-1}^2-4\gamma}}{2}\right\}\\
    \label{eq:lam}&&=\frac{\gamma+1-(1-\gamma)\phi + \sqrt{\bracket{\gamma+1-(1-\gamma)\phi}^2-4\gamma}}{2}:=h(\gamma),
\end{eqnarray}
where 
\begin{eqnarray*}
    \phi = \min\bbracket{\alpha\mu, \frac{2(1+\gamma)}{1-\gamma} - \alpha L}.
\end{eqnarray*}
In this condition, there holds that $\gamma+1-(1-\gamma)\phi>2\sqrt{\gamma}$.  Define $\eta = \bracket{\gamma+1-(1-\gamma)\phi}^2-4\gamma>0$, then
\begin{eqnarray*}
    h'(\gamma) = \frac{(1+\phi)h(\gamma)-1}{\sqrt{\eta}},\quad h''(\gamma)  = \frac{h'(\gamma)\bracket{(1+\phi)\sqrt{\eta} + 2-(1+\phi)(\gamma+1-(1-\gamma)\phi)}}{2\eta}.
\end{eqnarray*}
By the fact that $h(0)=1-\phi, h'(0)<0$ and $2-(1+\phi)(\gamma+1-(1-\gamma)\phi)\geq 0$, where $0\leq \gamma < (1-\phi)^2/(1+\phi)^2$, we have that $h$ is concave and 
\begin{eqnarray*}
     1-\phi- \frac{\phi(1+\phi)}{1-\phi} \gamma\leq \lambda\leq 1-\phi - \frac{\phi^2}{1-\phi}\gamma.
\end{eqnarray*}

For complex eigenvalues, which means that  $\abs{\gamma+1-\alpha(1-\gamma)\kappa_k}\leq 2\sqrt{\gamma}$ for all $k$. Then $\gamma\geq  (1-\phi)^2/(1+\phi)^2$ and 
\begin{eqnarray*}
    \lambda = \max_k\abs{\lambda^{\pm}_k} = \max_k\bbracket{\abs{\frac{\gamma+1-\alpha(1-\gamma)\kappa_k\pm \sqrt{-1}\sqrt{4\gamma-(\alpha(1-\gamma)\kappa_k-\gamma-1)^{2}}}{2}}}=\sqrt{\gamma}.
\end{eqnarray*}
\end{proof}

\subsection{Proof of Theorem \ref{thm:lrate}}\label{se:rateL>0}

Recall the iteration (\ref{iter}) that
\begin{eqnarray}
    \notag\left(
    \begin{array}{ccc}
      \widetilde{m}_{t+1} \\
      \widetilde{x}_{t+1} \\
    \end{array}
\right) =\Gamma^{t}\left(
    \begin{array}{ccc}
      \widetilde{m}_{1} \\
      \widetilde{x}_{1} \\
    \end{array}
\right)-\alpha (1-\gamma) \sum_{j=1}^t \Gamma^{t-j}\left(
    \begin{array}{ccc}
    0 \\
      \sum_{k=1}^{j}\gamma^{j-k}(\triangledown g_{\eta_{k}}(x_k) -\Sigma\widetilde{x}_k)\\
    \end{array}
\right).
\end{eqnarray}
Define the events
\begin{eqnarray*}
E_{j}=\bbracket{\norm{\left(
    \begin{array}{ccc}
      \widetilde{m}_{j} \\
      \widetilde{x}_{j} \\
    \end{array}
\right)}\leq \frac{C_1}{\sqrt{\batch(1-\lambda)}}\alpha\sigma+C_{2}q_{1}^{1/2}\lambda^{(1-\delta)(j-1)}},\quad 1\leq j\leq t.
\end{eqnarray*}
For certain constants $C_1,C_2 \geq 1$, which will be defined subsequently, the probability of event $E_1$ is $\mP(E_1) = 1$. 
In the remaining part of proof, we use the inductive method to demonstrate that, for any $t\geq 1$, on the events $\cap_{j=1}^{t}E_{j}$, $t\geq 1$, the event $E_{t+1}$ occurs with at least $1 - 2/T^2$ probability.

For $\overline{L}>0$, it holds that
\begin{eqnarray}
    \notag\triangledown g_{\eta_{k}}(x_k) -\Sigma\widetilde{x}_k &&= \bracket{\triangledown g_{\eta_{k}}(x_k)  - \triangledown g_{\eta_{k}}(x^*) +    \triangledown g(x^*)- \triangledown g(x_k) }+ \triangledown g_{\eta_{k}}(x^*) + \triangledown g(x_k)-\Sigma\widetilde{x}_k\\
    \label{l1comp}&&= \bracket{\triangledown g_{\eta_{k}}(x_k)  - \triangledown g_{\eta_{k}}(x^*) +    \triangledown g(x^*)- \triangledown g(x_k) }+ \triangledown g_{\eta_{k}}(x^*) + V_{k}\widetilde{x}_{k},
\end{eqnarray}
where $\norm{V_{k}}=\norm{\int_{0}^{1}\Sigma(x^{*}+y\widetilde{x}_{k})-\Sigma(x^*) dy}\leq \overline{L}\norm{\widetilde{x}_{k}}$.
By applying Lemmas \ref{l1sum1}, \ref{l1sum2} and \ref{l1sum3}, we establish bounds for each of the three components in (\ref{l1comp}).
Then from the iteration (\ref{iter}), with at least $1-2/T^2$ probability, we have that on the events $\cap_{j=1}^{t}E_{j}$
\begin{eqnarray*}
    &&\norm{\left(
    \begin{array}{ccc}
      \widetilde{m}_{t+1} \\
      \widetilde{x}_{t+1} \\
    \end{array}
\right)}\leq M \lambda^t q_1^{1/2} + \alpha\frac{ \sigma}{\sqrt{\batch} }M\frac{\sqrt{2}c\bracket{2\log T+4d}}{(1-\lambda)^{1/2}}+  \alpha M \overline{L}\bracket{\frac{2C_1^2\alpha^2\sigma^2}{\batch(1-\lambda)^2} + \frac{4C_2^2q_1\lambda^{(1-\delta)(t-1)}}{\delta(1-\lambda)}}\\
&&+  \alpha\frac{L_f}{\sqrt{\batch}} M c  (2\log T+4d)  \bracket{\frac{4 C_1 \alpha \sigma }{\sqrt{\batch}(1-\lambda)}+ \frac{4\sqrt{2} C_2 q_1^{1/2}\lambda^{(1-\delta)(t-1)}}{\sqrt{\delta}(1-\lambda)^{1/2}}}\\
&&\leq \frac{C_1}{\sqrt{\batch(1-\lambda)}} \alpha \sigma M \bracket{ \frac{\sqrt{2}c\bracket{2\log T+4d}}{C_1}  +  c  (2\log T+4d) \frac{4\alpha L_f }{\sqrt{\batch}(1-\lambda)^{1/2}} + \frac{2 \overline{L}C_1 \alpha^2 \sigma}{\sqrt{\batch}(1-\lambda)^{3/2}}}\\
&&+ C_2 q_1^{1/2} \lambda^{(1-\delta)t} M\bracket{\frac{\lambda^{\delta t}}{C_2}+  \frac{4\sqrt{2}c  (2\log T+4d) \alpha L_f}{\sqrt{\batch}\lambda^{(1-\delta)}\sqrt{\delta}(1-\lambda)^{1/2}} + \frac{4\overline{L}\alpha C_2q_1^{1/2} }{\lambda^{(1-\delta)}\delta(1-\lambda)}},
\end{eqnarray*}
where $c>0$ is an absolute constant. To ensure the occurrence of event $E_{t+1}$, the learning rate $\alpha$ and the batch size $\batch$ must fulfill the condition
\begin{eqnarray*}
    M\bracket{\frac{4\sqrt{2}c  (2\log T+4d) }{\lambda^{(1-\delta)}\sqrt{\delta}(1-\lambda)^{1/2}}\frac{\alpha L_f}{\sqrt{\batch}} + \frac{2 \overline{L}C_1 \alpha^2 \sigma}{\sqrt{\batch}(1-\lambda)^{3/2}}}\leq \frac{1}{3},
\end{eqnarray*}
where
\begin{eqnarray*}
    C_1 =  3\sqrt{2}c \bracket{2\log T+4d} M .
\end{eqnarray*}
Furthermore, we require that the initialization $q_1= \norm{\widetilde{m}_1}^2+\norm{\widetilde{x}_1}^2=\norm{\widetilde{x}_1}^2$ satisfy
\begin{eqnarray*}
    4M\overline{L}\alpha C_2q_1^{1/2} \frac{1}{\lambda^{1-\delta}\delta(1-\lambda)} \leq \frac{1}{3},
\end{eqnarray*}
where
\begin{eqnarray*}
   C_2 = 3M.
\end{eqnarray*}
It is straightforward to verify that 
\begin{eqnarray*}
    \mP\bracket{E_{t+1}\mid \cap_{j=1}^{t}E_{j}} = \mP\bracket{\norm{\left(
    \begin{array}{ccc}
      \widetilde{m}_{t+1} \\
      \widetilde{x}_{t+1} \\
    \end{array}
\right)}\leq \frac{C_1}{\sqrt{\batch(1-\lambda)}} \alpha \sigma+C_2q_1^{1/2} \lambda^{(1-\delta)t}\mid \cap_{j=1}^{t}E_{j}}\geq 1-\frac{2}{T^2},
\end{eqnarray*}
and the induction is completed. Consequently, we deduce that the probability of the intersection of events from $E_1$ through $E_T$ is at least $\mP\bracket{\bigcap_{j=1}^{T}E_{j}} \geq 1 - \frac{2}{T}$.

\subsection{Supporting Lemmas for Bounds in Theorem \ref{thm:lrate} Proof}

\begin{lemma}\label{l1sum1} Under (A1)-(A2) and (A3'), we have 
    \begin{eqnarray*}
        \mP\bracket{\norm{\sum_{j=1}^{t}\Gamma^{t-j}\left(
    \begin{array}{ccc}
     0 \\
      \sum_{k=1}^j\gamma^{j-k}\triangledown g_{\eta_{k}}(x^{*}) \\
    \end{array}
\right)}\geq M\frac{\sigma}{\sqrt{\batch}}\frac{\sqrt{2}c\bracket{2\log T+4d}}{(1-\lambda)^{1/2}(1-\gamma)}}\leq  \frac{1}{T^2},
    \end{eqnarray*}
    for $1\leq t\leq T$, where $c>0$ is an absolute constant.
\end{lemma}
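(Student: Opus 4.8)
The plan is to expose the independence concealed in the nested sum and then combine a Bernstein-type tail bound with a covering argument over the sphere. Write $w_k=\triangledown g_{\eta_k}(x^*)=\frac1\batch\sum_{i=1}^\batch\triangledown f_{\xi_{ki}}(x^*)$ and $\bar w_k=(0^\top,w_k^\top)^\top\in\mR^{2d}$. Since $x^*$ minimizes $f$ we have $\mE[\triangledown f_\xi(x^*)]=0$, so each $w_k$ is mean zero, and the $w_k$ are mutually independent because the mini-batches $\eta_k$ are independent. Exchanging the order of summation, $\sum_{j=1}^t\sum_{k=1}^j=\sum_{k=1}^t\sum_{j=k}^t$, the quantity in the statement equals $S_t:=\sum_{k=1}^t B_{t,k}\bar w_k$ with $B_{t,k}:=\sum_{j=k}^t\gamma^{j-k}\Gamma^{t-j}$. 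Unpacking the batch average, $S_t$ is a sum of $t\batch$ independent mean-zero vectors, and (A3') controls the sub-exponential size of each summand.

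The next step is to bound the coefficient matrices. Lemma \ref{lemma:g_gamma} gives $\norm{\Gamma^{j}}\le M\lambda^{j}$, and Theorem \ref{thm:lambda} gives $\lambda\ge\sqrt\gamma$, hence $\gamma/\lambda\le\lambda<1$. Therefore $\norm{B_{t,k}}\le M\sum_{j=k}^t\gamma^{j-k}\lambda^{t-j}=M\lambda^{t-k}\sum_{m\ge0}(\gamma/\lambda)^m\le M\lambda^{t-k}\tfrac{\lambda}{\lambda-\gamma}$, so $\sum_{k=1}^t\norm{B_{t,k}}^2\le M^2\frac{\lambda^2}{(\lambda-\gamma)^2}\frac1{1-\lambda^2}$. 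The key algebraic point, which produces exactly the advertised dependence, is the elementary inequality $\frac{\lambda(1-\gamma)}{\lambda-\gamma}\le1+\lambda$, valid whenever $\gamma\le\lambda^2$ (write $\gamma=s\lambda$ with $s\le\lambda$ and check monotonicity in $s$); combined with $1-\lambda^2=(1-\lambda)(1+\lambda)$ it yields $\sum_{k}\norm{B_{t,k}}^2\le\frac{2M^2}{(1-\gamma)^2(1-\lambda)}$. Taking inner products with a fixed unit $u=(u_1^\top,u_2^\top)^\top$, each scalar summand $\frac1\batch v_k^\top\triangledown f_{\xi_{ki}}(x^*)$ with $v_k=(B_{t,k})_{12}^\top u_1+(B_{t,k})_{22}^\top u_2$ obeys $\norm{v_k}\le\norm{B_{t,k}}$, so by (A3') it is sub-exponential with parameter $\norm{v_k}\sigma/\batch$, giving the variance proxy $\nu^2:=\frac{\sigma^2}{\batch}\sum_k\norm{v_k}^2\le\frac{2\sigma^2M^2}{\batch(1-\gamma)^2(1-\lambda)}$, i.e. $\nu\le\frac{\sqrt2\,\sigma M}{\sqrt{\batch}\,(1-\gamma)(1-\lambda)^{1/2}}$.

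It remains to concentrate $\norm{S_t}=\sup_{\norm u=1}u^\top S_t$. For fixed $u$, $u^\top S_t$ is a sum of $t\batch$ independent mean-zero sub-exponential scalars, so Bernstein's inequality gives $\mP(\abs{u^\top S_t}\ge s)\le2\exp(-c_0\min(s^2/\nu^2,\,s/b_*))$ with $b_*=\frac{\sigma}{\batch}\max_k\norm{v_k}$. A short computation shows $b_*\le\nu/\sqrt{\batch}\le\nu$: indeed $\nu=\frac{\sigma}{\sqrt{\batch}}\bracket{\sum_k\norm{v_k}^2}^{1/2}\ge\frac{\sigma}{\sqrt{\batch}}\max_k\norm{v_k}$, so for $s\asymp(2\log T+4d)\nu$ the Gaussian term $s^2/\nu^2$ governs the minimum and the linear tail never dominates. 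I then take a $\tfrac12$-net $\mathcal N$ of the sphere $S^{2d-1}$ with $\abs{\mathcal N}\le5^{2d}$, so $\log\abs{\mathcal N}\le4d$, union-bound over $\mathcal N$, and use $\sup_{\norm u=1}u^\top S_t\le2\max_{u\in\mathcal N}u^\top S_t$; the $4d$ is absorbed by the net cardinality and the $2\log T$ yields the failure probability $T^{-2}$. This gives $\norm{S_t}\le\sqrt2\,c(2\log T+4d)\,\nu\le M\frac{\sigma}{\sqrt{\batch}}\frac{\sqrt2\,c(2\log T+4d)}{(1-\lambda)^{1/2}(1-\gamma)}$ with probability $1-T^{-2}$, which is the claim; the absolute constant $c$ absorbs $c_0$ and the net-conversion factor.

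The main obstacle is not the concentration machinery itself but matching the precise denominator $(1-\lambda)^{1/2}(1-\gamma)$ rather than a lossy $(1-\lambda)^{3/2}$: because $\lambda\ge\gamma$ these two differ, and the tighter one forces the refined geometric bound $\norm{B_{t,k}}\le M\lambda^{t-k}\lambda/(\lambda-\gamma)$ together with the inequality $\lambda(1-\gamma)/(\lambda-\gamma)\le1+\lambda$, instead of the crude $\norm{B_{t,k}}\le M\lambda^{t-k}/(1-\lambda)$. A secondary point requiring care is verifying $b_*\lesssim\nu$, which is what guarantees the deviation scales like $\nu$ (the sub-Gaussian rate) and keeps the batch size entering as $\batch^{-1/2}$.
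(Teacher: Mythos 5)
Your proof is correct and its architecture is the same as the paper's: reindex the double sum as $\sum_{k=1}^t B_{t,k}\bar w_k$ (the paper's $Y_{t,k}$), bound $\sum_{k}\norm{B_{t,k}}^2$ by $2M^2/((1-\gamma)^2(1-\lambda))$, then concentrate via Bernstein plus a $\tfrac12$-net. The differences are in execution, and both are legitimate. For the coefficient bound, you use the closed-form geometric series $\norm{B_{t,k}}\le M\lambda^{t-k}\lambda/(\lambda-\gamma)$ together with the inequality $\lambda(1-\gamma)/(\lambda-\gamma)\le 1+\lambda$, which is equivalent to $\gamma\le\lambda^2$ and therefore needs Theorem \ref{thm:lambda}; the paper's Lemma \ref{le:sum} instead expands the square into a double sum and gets the same constant unconditionally, for any $\gamma,\lambda\in[0,1)$. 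For the concentration, you unpack the mini-batch and apply Bernstein to the $t\batch$ per-sample scalars with variance proxy $\nu$ and max-scale $b_*$, checking $b_*\le\nu$; the paper instead asserts that each batch-averaged $Y_{t,k}$ is $\norm{B_{t,k}}c\sigma/\sqrt{\batch}$-sub-exponential (a true but unproved fact about centered averages of sub-exponential vectors) and then invokes its packaged covering result, Lemma \ref{le:sube}. Your per-sample treatment is more self-contained and makes the origin of the $\batch^{-1/2}$ factor explicit, at the cost of re-deriving the net argument inline. One cosmetic point: your remark that the Gaussian term ``governs the minimum'' is not always true when $\batch$ is small relative to $(2\log T+4d)^2$, but it is also not needed — the inequality $b_*\le\nu$ alone gives $\min(s^2/\nu^2,\, s/b_*)\gtrsim 2\log T+4d$, which is exactly the crude bound $\max_k\sigma_k\le\bracket{\sum_k\sigma_k^2}^{1/2}$ used inside the paper's Lemma \ref{le:sube}.
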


\begin{proof}
Define
\begin{eqnarray*}
    Y_{t,k} = \sum_{j=k}^t\Gamma^{t-j} \gamma^{j-k}\left(
    \begin{array}{ccc}
     0 \\
    \triangledown g_{\eta_k}(x^*)  \\
    \end{array}
\right).
\end{eqnarray*}
By the definition of the sub-exponential, $Y_{t,k}$ is $\norm{\sum_{j=k}^t\Gamma^{t-j} \gamma^{j-k}}\frac{c\sigma}{\sqrt{\batch}}$-sub-exponential random vector for some absolute constant $c>0$, and $\{Y_{t,k}\}_{k=1}^t$ are independent.
Then by Lemmas \ref{le:sum} and \ref{lemma:g_gamma}, we have
\begin{eqnarray*}
    \sum_{k=1}^t\norm{\sum_{j=k}^t\Gamma^{t-j} \gamma^{j-k}}^2\leq  M^2  \frac{2}{(1-\lambda)(1-\gamma)^2}.
\end{eqnarray*}
By applying Lemma \ref{le:sube} on $\{Y_{t,k}\}_{k=1}^t$, with probability $1-1/T^2$, we can obtain
\begin{eqnarray*}
    \norm{\sum_{k=1}^t Y_{t,k}} \leq  M\frac{\sigma}{\sqrt{\batch}}\frac{\sqrt{2}c\bracket{2\log T+4d}}{(1-\lambda)^{1/2}(1-\gamma)}.
\end{eqnarray*}
\end{proof}

\begin{lemma} \label{l1sum2}
Under (A1)-(A2) and (A3'), suppose that the $t$-th iteration $(\widetilde m_t,\widetilde x_t)$ satisfies
\begin{eqnarray*}
    \sqrt{\norm{\widetilde{m}_{j}}^2+ \norm{\widetilde{x}_{j}}^2}\leq \frac{C_1}{\sqrt{\batch(1-\lambda)}} \alpha\sigma+C_2 q_1^{1/2}\lambda^{(1-\delta)(j-1)},
\end{eqnarray*}
for $1\leq j\leq t$, we have
\begin{eqnarray*}
\mP\bracket{\norm{\sum_{j=1}^{t}\Gamma^{t-j}\left(
    \begin{array}{ccc}
      0 \\
     \sum_{k=1}^j\gamma^{j-k}\bracket{\triangledown g_{\eta_{k}}(x_k)  - \triangledown g_{\eta_{k}}(x^*) +    \triangledown g(x^*)- \triangledown g(x_k) }\\
    \end{array}
\right)}\geq C }\leq  \frac{1}{T^2},
\end{eqnarray*}
where
\begin{eqnarray*}
    C=  c (2\log T+4d)  M \frac{L_f}{\sqrt{\batch}}\bracket{\frac{4 C_1 \alpha \sigma }{\sqrt{\batch}(1-\lambda)(1-\gamma)}+ \frac{4\sqrt{2} C_2 q_1^{1/2}\lambda^{(1-\delta)(t-1)}}{\sqrt{\delta}(1-\lambda)^{1/2}(1-\gamma)}},
\end{eqnarray*}
and $c>0$ is an absolute constant.
\end{lemma}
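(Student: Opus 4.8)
The plan is to mirror the proof of Lemma~\ref{l1sum1}, replacing the i.i.d.\ gradient noise $\triangledown g_{\eta_k}(x^*)$ by the centered, curvature-driven differences $D_k := \triangledown g_{\eta_{k}}(x_k)-\triangledown g_{\eta_{k}}(x^*)+\triangledown g(x^*)-\triangledown g(x_k)$, which are no longer independent across $k$ but form a martingale difference sequence. First I would interchange the order of summation, writing
\begin{eqnarray*}
\sum_{j=1}^{t}\Gamma^{t-j}\left(\begin{array}{c} 0 \\ \sum_{k=1}^j\gamma^{j-k} D_k\end{array}\right) = \sum_{k=1}^{t} B_{t,k} \left(\begin{array}{c} 0 \\ D_k\end{array}\right), \qquad B_{t,k} := \sum_{j=k}^{t}\Gamma^{t-j}\gamma^{j-k},
\end{eqnarray*}
so that the object of interest is a single sum $\sum_{k=1}^t Z_k$ with $Z_k = B_{t,k}(0,D_k^\top)^\top$ adapted to the filtration $\mathcal{F}_k = \sigma(\eta_1,\dots,\eta_k)$. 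Since $x_k$ is $\mathcal{F}_{k-1}$-measurable while $\eta_k$ is drawn freshly, $\mE[D_k\mid\mathcal{F}_{k-1}]=\triangledown g(x_k)-\triangledown g(x^*)+\triangledown g(x^*)-\triangledown g(x_k)=0$, confirming that $\{Z_k\}$ is a martingale difference sequence.

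Next I would quantify the conditional tail of each increment. Writing $D_k = \frac{1}{\batch}\sum_{i=1}^{\batch}\big[\triangledown f_{\xi_{ki}}(x_k)-\triangledown f_{\xi_{ki}}(x^*) - (\triangledown f(x_k)-\triangledown f(x^*))\big]$, each summand is bounded in norm by $2L_{\xi_{ki}}\norm{\widetilde{x}_k}$ via the definition of $L_\xi$ and the triangle inequality, and $L_{\xi_{ki}}$ is sub-exponential with parameter $L_f$ by (A3'). Hence, conditionally on $\mathcal{F}_{k-1}$, $D_k$ is a centered average of $\batch$ independent sub-exponential vectors and is therefore $cL_f\norm{\widetilde{x}_k}/\sqrt{\batch}$-sub-exponential for an absolute constant $c$. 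The hypothesis of the lemma supplies the deterministic control $\norm{\widetilde{x}_k}\le b_k := \frac{C_1}{\sqrt{\batch(1-\lambda)}}\alpha\sigma + C_2 q_1^{1/2}\lambda^{(1-\delta)(k-1)}$, so the conditional sub-exponential scale of $Z_k$ is at most $\norm{B_{t,k}}\,cL_f b_k/\sqrt{\batch}$. I would then invoke the martingale-adapted analogue of Lemma~\ref{le:sube}, producing a deviation of the form $c(2\log T+4d)\sqrt{\sum_{k=1}^{t}\norm{B_{t,k}}^2 (L_f b_k/\sqrt{\batch})^2}$ at confidence $1-T^{-2}$.

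Finally I would bound the predictable variance $\sum_{k=1}^{t}\norm{B_{t,k}}^2 b_k^2$. Here $\norm{\Gamma^j}\le M\lambda^j$ (Lemma~\ref{lemma:g_gamma}) together with $\lambda\ge\sqrt{\gamma}$ reduces $\norm{B_{t,k}}$ to a convolution $\sum_{j=k}^t\lambda^{t-j}\gamma^{j-k}$ of two geometric sequences, and Lemma~\ref{le:sum} supplies the required summation bounds. For the constant piece of $b_k$, namely $C_1\alpha\sigma/\sqrt{\batch(1-\lambda)}$, the relevant sum is $\sum_k\norm{B_{t,k}}^2\le 2M^2/\big((1-\lambda)(1-\gamma)^2\big)$ (the same estimate used in Lemma~\ref{l1sum1}), producing the first term of $C$. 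For the decaying piece $C_2 q_1^{1/2}\lambda^{(1-\delta)(k-1)}$ the relevant sum is $\sum_k\norm{B_{t,k}}^2\lambda^{2(1-\delta)(k-1)}$; I would pull out the factor $\lambda^{2(1-\delta)(t-1)}$ and bound the residual geometric series by a constant multiple of $1/\big(\delta(1-\lambda)(1-\gamma)^2\big)$, so that after taking the square root the second term of $C$—carrying $\lambda^{(1-\delta)(t-1)}$ and $1/(\sqrt{\delta}(1-\lambda)^{1/2}(1-\gamma))$—emerges, and multiplying the standard deviation by $c(2\log T+4d)L_f/\sqrt{\batch}$ recovers $C$ exactly. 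The main obstacle is the concentration step itself: the increments $Z_k$ are dependent and their conditional sub-exponential scale $L_f\norm{\widetilde{x}_k}/\sqrt{\batch}$ is random, so instead of the independence-based tail bound of Lemma~\ref{l1sum1} I must deploy a martingale (Bernstein/Freedman-type) vector concentration whose predictable variance is dominated by the deterministic envelope $b_k$ furnished by the lemma's hypothesis.
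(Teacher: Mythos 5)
Your proposal follows essentially the same route as the paper's proof: the same interchange of summation giving $Y_{t,k}=B_{t,k}(0,D_k^\top)^\top$, the same martingale-difference observation with conditional sub-exponential scale $cL_f\norm{\widetilde{x}_k}/\sqrt{\batch}$ bounded deterministically by the lemma's hypothesis, and the same variance bounds via Lemma \ref{lemma:g_gamma} and Lemma \ref{le:sum} leading to the stated constant $C$. If anything, you are more careful than the paper at the concentration step: the paper applies its independence-based tail bound (Lemma \ref{le:sube}) directly to the dependent increments $\{Y_{t,k}\}$, whereas you correctly flag that a Freedman-type martingale analogue with predictable variance dominated by the deterministic envelope $b_k$ is what is actually required.
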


\begin{proof}
Define
\begin{eqnarray*}
    Y_{t,k} = \sum_{j=k}^t\Gamma^{t-j} \gamma^{j-k}\left(
    \begin{array}{ccc}
     0 \\
    \triangledown g_{\eta_{k}}(x_k)  - \triangledown g_{\eta_{k}}(x^*) +    \triangledown g(x^*)- \triangledown g(x_k)  \\
    \end{array}
\right).
\end{eqnarray*}
According to the iteration of SGDM, we have $\xi_k$ and $x_k$ are independent and $\{Y_{t,k}\}_{k=1}^t$ are martingale difference. Due to the $L_f$-smooth of the individual gradient $\triangledown f_\xi(x)$, it holds that
\begin{eqnarray*}
    \max_{\norm{v}=1}\mE\Bracket{\exp\bracket{\abs{v^\top Y_{t,k}}/\sigma_k}} \leq 2,
\end{eqnarray*}
where 
\begin{eqnarray*}
    \sigma_k = c\frac{L_f}{\sqrt{\batch}}\norm{\sum_{j=k}^t\Gamma^{t-j} \gamma^{j-k}}  \norm{\widetilde{x}_k},
\end{eqnarray*}
and $c>0$ is an absolute constant.
Then according to Lemmas \ref{le:sum} and \ref{lemma:g_gamma}, we have
\begin{eqnarray*}
\sum_{k=1}^t \norm{\sum_{j=k}^t\Gamma^{t-j} \gamma^{j-k}}^2 \norm{\widetilde{x}_k}^2 \leq M^2 \bracket{2C_1^2\alpha^2\frac{\sigma^2}{\batch} \frac{2}{(1-\gamma)^2(1-\lambda)^2} + 2C_2^2 q_1  \frac{4\lambda^{2(1-\delta)(t-1)}}{\delta (1-\gamma)^2(1-\lambda)}}.
\end{eqnarray*}
By applying Lemma \ref{le:sube} on $\{Y_{t,k}\}_{k=1}^t$, with probability $1-1/T^2$, we can obtain the following result:
\begin{eqnarray*}
    \norm{\sum_{j=1}^t Y_{t,j}}  \leq  c (2\log T+4d)  M \frac{L_f}{\sqrt{\batch}}\bracket{\frac{4 C_1 \alpha \sigma }{\sqrt{\batch}(1-\lambda)(1-\gamma)}+ \frac{4\sqrt{2} C_2 q_1^{1/2}\lambda^{(1-\delta)(t-1)}}{\sqrt{\delta}(1-\lambda)^{1/2}(1-\gamma)}}.
\end{eqnarray*}
\end{proof}

\begin{lemma} \label{l1sum3}
Under (A1)-(A2) and (A3'), suppose that the $t$-th iteration $(\widetilde m_t,\widetilde x_t)$  satisfies
\begin{eqnarray*}
    \sqrt{\norm{\widetilde{m}_{j}}^2+ \norm{\widetilde{x}_{j}}^2}\leq \frac{C_1}{\sqrt{\batch(1-\lambda)}} \alpha\sigma+C_2 q_1^{1/2}\lambda^{(1-\delta)(j-1)},
\end{eqnarray*}
for $1\leq j\leq t$, for $\overline{L}>0$, we have
\begin{eqnarray*}
\norm{\sum_{j=1}^{t}\Gamma^{t-j}\left(
    \begin{array}{ccc}
    0 \\
     \sum_{k=1}^j \gamma^{j-k} V_{k}\widetilde{x}_{k} \\
    \end{array}
\right)}\leq  M \overline{L}\bracket{\frac{2C_1^2\alpha^2\sigma^2}{\batch(1-\lambda)^2(1-\gamma)} + \frac{4C_2^2q_1\lambda^{(1-\delta)(t-1)}}{\delta(1-\lambda)(1-\gamma)}}.
\end{eqnarray*}
 \end{lemma}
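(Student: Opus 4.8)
The plan is to reduce the matrix-weighted double sum to a scalar geometric double sum and then control that sum by elementary estimates, now carrying the extra Hessian-deviation factor $V_k$. A useful preliminary observation is that, unlike the companion Lemmas \ref{l1sum1} and \ref{l1sum2}, this bound is purely deterministic: since $\norm{V_k}\le\overline{L}\norm{\widetilde{x}_k}$ by (A2), each summand is controlled pointwise through $\norm{V_k\widetilde{x}_k}\le\overline{L}\norm{\widetilde{x}_k}^2$, so no concentration inequality (no use of Lemma \ref{le:sube}) is required.

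First I would apply the triangle inequality together with the operator-norm bound $\norm{\Gamma^{t-j}}\le M\lambda^{t-j}$ from Lemma \ref{lemma:g_gamma} and the pointwise bound above, reducing the left-hand side to
\begin{eqnarray*}
M\overline{L}\sum_{j=1}^{t}\lambda^{t-j}\sum_{k=1}^{j}\gamma^{j-k}\norm{\widetilde{x}_k}^2.
\end{eqnarray*}
I would then insert the iteration hypothesis and split via $(a+b)^2\le 2a^2+2b^2$ into $\norm{\widetilde{x}_k}^2\le \frac{2C_1^2\alpha^2\sigma^2}{\batch(1-\lambda)}+2C_2^2 q_1\lambda^{2(1-\delta)(k-1)}$, so that it suffices to bound the two resulting scalar double sums separately.

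For the constant part, summing the inner geometric series gives $\sum_{k=1}^{j}\gamma^{j-k}\le (1-\gamma)^{-1}$ and then $\sum_{j=1}^{t}\lambda^{t-j}\le (1-\lambda)^{-1}$, which reproduces the first term $\frac{2M\overline{L}C_1^2\alpha^2\sigma^2}{\batch(1-\lambda)^2(1-\gamma)}$. The decaying part is the crux. Here I would exploit $\lambda\ge\sqrt{\gamma}$ from Theorem \ref{thm:lambda} to write $\gamma^{j-k}=(\sqrt{\gamma})^{j-k}(\sqrt{\gamma})^{j-k}\le \lambda^{j-k}(\sqrt{\gamma})^{j-k}$, whence $\lambda^{t-j}\gamma^{j-k}\le \lambda^{t-k}(\sqrt{\gamma})^{j-k}$ and summing the inner series yields $\sum_{j=k}^{t}\lambda^{t-j}\gamma^{j-k}\le \frac{\lambda^{t-k}}{1-\sqrt{\gamma}}\le \frac{2\lambda^{t-k}}{1-\gamma}$. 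This is precisely where the single factor $(1-\gamma)^{-1}$ arises while the decay $\lambda^{t-k}$ is preserved. It then remains to bound $\sum_{k=1}^{t}\lambda^{2(1-\delta)(k-1)}\lambda^{t-k}$; factoring out $\lambda^{(1-\delta)(t-1)}$ and summing the residual geometric series in $k$ produces the halved exponent $\lambda^{(1-\delta)(t-1)}$ together with a factor of order $(\delta(1-\lambda))^{-1}$, giving the second term $\frac{4M\overline{L}C_2^2 q_1\lambda^{(1-\delta)(t-1)}}{\delta(1-\lambda)(1-\gamma)}$.

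I expect the main obstacle to be this last geometric sum. Its residual exponent is proportional to $k(1-2\delta)$, which changes sign at $\delta=\tfrac12$, so the estimate must be organized so that the bound degrades only through the $\delta^{-1}$ factor rather than blowing up near $\delta=\tfrac12$ (using, for instance, the elementary inequality $x\lambda^{x}\le C(1-\lambda)^{-1}$ to absorb the linear-in-$t$ behaviour at the transition). This careful bookkeeping, including the halving of the exponent from $2(1-\delta)$ to $1-\delta$, is exactly the content packaged by the summation estimate Lemma \ref{le:sum}, which I would invoke to state the two double-sum bounds cleanly.
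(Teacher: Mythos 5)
Your proposal is correct and follows essentially the same route as the paper's proof: a deterministic bound via the triangle inequality, $\norm{\Gamma^{t-j}}\leq M\lambda^{t-j}$ from Lemma \ref{lemma:g_gamma}, the pointwise bound $\norm{V_k\widetilde{x}_k}\leq \overline{L}\norm{\widetilde{x}_k}^2$, the split $\norm{\widetilde{x}_k}^2\leq \frac{2C_1^2\alpha^2\sigma^2}{\batch(1-\lambda)}+2C_2^2q_1\lambda^{2(1-\delta)(k-1)}$, and the geometric double-sum estimates of Lemma \ref{le:sum} exploiting $\lambda\geq\sqrt{\gamma}$. The only cosmetic difference is that the paper first weakens $\lambda^{2(1-\delta)(k-1)}$ to $\lambda^{(1-\delta)(k-1)}$ and then invokes the third display of Lemma \ref{le:sum}, whereas you collapse the inner $j$-sum first and then halve the exponent; both yield the identical constants.
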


\begin{proof}
 By the fact that $\sum_{j=1}^t \sum_{k=1}^j \lambda^{t-j}\gamma^{j-k}\leq (1-\lambda)^{-1} (1-\gamma)^{-1}$ and Lemma \ref{le:sum}, we have
\begin{eqnarray*}
&&\norm{\sum_{j=1}^{t}\Gamma^{t-j}\left(
    \begin{array}{ccc}
    0 \\
     \sum_{k=1}^j \gamma^{j-k} V_{k}\widetilde{x}_{k} \\
    \end{array}
\right)}\leq \sum_{j=1}^t M \lambda^{t-j} \sum_{k=1}^j \gamma^{j-k} \overline{L}  \norm{\widetilde{x}_k}^2\cr
&&\leq M \overline{L}  \sum_{j=1}^t\lambda^{t-j}\sum_{k=1}^j \gamma^{j-k} \bracket{\frac{2C_1^2}{\batch(1-\lambda)}\alpha^2\sigma^2 + 2C_2^2q_1\lambda^{2(1-\delta)(k-1)}}\cr
&& \leq  M \overline{L}\bracket{\frac{2C_1^2\alpha^2\sigma^2}{\batch(1-\lambda)^2(1-\gamma)} + 2C_2^2q_1 \sum_{k=1}^t \sum_{j=k}^t \lambda^{t-j}\gamma^{j-k} \lambda^{(1-\delta)(k-1)}}\cr
&&\leq  M \overline{L}\bracket{\frac{2C_1^2\alpha^2\sigma^2}{\batch(1-\lambda)^2(1-\gamma)} + 2C_2^2q_1\frac{2\lambda^{(1-\delta)(t-1)}}{\delta(1-\gamma)(1-\lambda)}}.
\end{eqnarray*}
\end{proof}

\section{Proofs of convergence rates of averaged SGDM}
\subsection{Proof of Theorem \ref{th2}}

By the definition of $\Gamma$, one can easily check that
\begin{eqnarray*}
 \alpha (I-\Gamma)^{-1}\left(
    \begin{array}{ccc}
     0 \\
      \triangledown g_{\eta_{t}}(x^{*}) \\
    \end{array}
\right)=\alpha\left(\begin{array}{cc}
I, & \frac{1}{\alpha} I \\
\frac{\gamma}{\gamma-1} \Sigma^{-1}, & \frac{1}{\alpha} \Sigma^{-1}
\end{array}\right)\left(
    \begin{array}{ccc}
     0 \\
      \triangledown g_{\eta_{t}}(x^{*}) \\
    \end{array}
\right) = \left(
    \begin{array}{ccc}
       \triangledown g_{\eta_{t}}(x^{*})\\
      \Sigma^{-1}\triangledown g_{\eta_{t}}(x^{*}) \\
    \end{array}
\right),
\end{eqnarray*}
and we have
\begin{eqnarray*}
    &&(I-\Gamma)^{-1} \sum_{t=1}^n \left(
    \begin{array}{ccc}
     0 \\
      \triangledown g_{\eta_{t}}(x^{*}) \\
    \end{array}
\right) -(1-\gamma)\sum_{t=1}^n\sum_{j=1}^{t}\Gamma^{t-j}\left(
    \begin{array}{ccc}
     0 \\
      \sum_{k=1}^j\gamma^{j-k}\triangledown g_{\eta_{k}}(x^{*}) \\
    \end{array}
\right)\\
&&=\sum_{k=1}^n \bracket{(I-\Gamma)^{-1}- (1-\gamma)\sum_{t=k}^n\sum_{j=k}^t \Gamma^{t-j} \gamma^{j-k}} \left(
    \begin{array}{ccc}
     0 \\
      \triangledown g_{\eta_{k}}(x^{*}) \\
    \end{array}
\right).
\end{eqnarray*}
Then summing (\ref{iter}) over $t$ from $n_0+1$ to $n$, for $\overline{L}=0$, we obtain
\begin{eqnarray}
\notag &&\sum_{t=n_0+1}^n  \left(
    \begin{array}{ccc}
      \widetilde{m}_{t+1} \\
      \widetilde{x}_{t+1} \\
    \end{array}
\right) +  \sum_{t=n_0+1}^{n}\left(
    \begin{array}{ccc}
     \triangledown  g_{\eta_{t}}(x^{*})  \\
       \Sigma^{-1}\triangledown  g_{\eta_{t}}(x^{*}) \\
    \end{array}
\right) =\sum_{t=n_0+1}^n  \left(
    \begin{array}{ccc}
      \widetilde{m}_{t+1} \\
      \widetilde{x}_{t+1} \\
    \end{array}
\right) + \alpha \sum_{t=n_0+1}^{n}(I-\Gamma)^{-1}\left(
    \begin{array}{ccc}
     0 \\
       \triangledown  g_{\eta_{t}}(x^{*}) \\
    \end{array}
\right) \\
\notag&&=- \alpha (1-\gamma)\sum_{t=1}^n  \sum_{j=1}^t \Gamma^{t-j}\left(
    \begin{array}{ccc}
    0 \\
      \sum_{k=1}^{j}\gamma^{j-k} \bracket{\triangledown g_{\eta_{k}}(x_k) - \triangledown g_{\eta_{k}}(x^*) + \triangledown g(x^*) - \triangledown g(x_k)}\\
    \end{array}
\right)\\
\notag&&+ \alpha (1-\gamma)\sum_{t=1}^{n_0}  \sum_{j=1}^t \Gamma^{t-j}\left(
    \begin{array}{ccc}
    0 \\
      \sum_{k=1}^{j}\gamma^{j-k} \bracket{\triangledown g_{\eta_{k}}(x_k) - \triangledown g_{\eta_{k}}(x^*) + \triangledown g(x^*) - \triangledown g(x_k)}\\
    \end{array}
\right)\\
\notag &&+\alpha \sum_{k=1}^{n}\bracket{(I-\Gamma)^{-1}- (1-\gamma)\sum_{t=k}^n\sum_{j=k}^t \Gamma^{t-j} \gamma^{j-k}} \left(
    \begin{array}{ccc}
     0 \\
      \triangledown g_{\eta_{k}}(x^{*}) \\
    \end{array}
\right)\\
\label{l0avesum}&& -\alpha \sum_{k=1}^{n_0}\bracket{(I-\Gamma)^{-1}- (1-\gamma)\sum_{t=k}^{n_0}\sum_{j=k}^t \Gamma^{t-j} \gamma^{j-k}} \left(
    \begin{array}{ccc}
     0 \\
      \triangledown g_{\eta_{k}}(x^{*}) \\
    \end{array}
\right)+\sum_{t=n_0+1}^{n}\Gamma^{t}\left(
    \begin{array}{ccc}
      \widetilde{m}_{1} \\
      \widetilde{x}_{1} \\
    \end{array}
\right).
\end{eqnarray}
The subsequent proof aims to calculate the expected values of various components within the equation (\ref{l0avesum}). For the $t$-th iteration $(\widetilde m_t,\widetilde x_t)$ satisfying 
\begin{eqnarray*}
    \mE[\norm{\widetilde{m}_t}^2+\norm{\widetilde{x}_t}^2]\leq \frac{C_1}{\batch(1-\lambda)}  \alpha^2\sigma^2 + C_2q_1\lambda^{2(1-\delta)(t-1)}, \quad t\geq 1,
\end{eqnarray*}
by the fact that $ \sum_{t=k}^n \sum_{j=k}^t \lambda^{t-j} \gamma^{j-k} \leq (1-\lambda)^{-1}(1-\gamma)^{-1}$ and $\frac{1-x}{1-x^{\delta}}\leq \delta^{-1}$ for $\delta\in(0,1]$, we have
\begin{eqnarray}
    \notag&&\mE\norm{\sum_{t=1}^n  \alpha (1-\gamma) \sum_{j=1}^t \Gamma^{t-j}\left(
    \begin{array}{ccc}
    0 \\
      \sum_{k=1}^{j}\gamma^{j-k} \bracket{\triangledown g_{\eta_{k}}(x_k) - \triangledown g_{\eta_{k}}(x^*) + \triangledown g(x^*) - \triangledown g(x_k)}\\
    \end{array}
\right)}^2\\
\notag&&\leq \alpha^2 (1-\gamma)^2 \frac{L_f^2}{\batch} M^2 \sum_{k=1}^n \bracket{ \sum_{t=k}^n \sum_{j=k}^t \lambda^{t-j} \gamma^{j-k} }^2 \bracket{\frac{C_1}{\batch(1-\lambda)} \alpha^2\sigma^2 + C_2 q_1 \lambda^{2(1-\delta)(k-1)}}\\
\label{ineq1}&&\leq  \alpha^2  \frac{L_f^2}{\batch} M^2 \bracket{ \frac{C_1\alpha^2\sigma^2}{\batch(1-\lambda)^3} n + \frac{C_2q_1}{(1-\lambda)^2(1-\lambda^{2(1-\delta)})}}\leq \alpha^2  \frac{L_f^2}{\batch} M^2 \bracket{ \frac{C_1\alpha^2\sigma^2}{\batch(1-\lambda)^3} n + \frac{C_2q_1}{(1-\delta)(1-\lambda)^3}}.
\end{eqnarray}
Incorporating Lemmas \ref{le:sum} and \ref{le:sum2}, it is established that
\begin{eqnarray}
    \notag&&\mE\norm{\alpha \sum_{k=1}^{n}\bracket{(I-\Gamma)^{-1}- (1-\gamma)\sum_{t=k}^n\sum_{j=k}^t \Gamma^{t-j} \gamma^{j-k}} \left(
    \begin{array}{ccc}
     0 \\
      \triangledown g_{\eta_{k}}(x^{*}) \\
    \end{array}
\right)}^2\\
\notag&&\leq \alpha^2 \frac{\sigma^2}{\batch} M^2 \sum_{k=1}^n\bracket{2\gamma^2 \bracket{\sum_{j=k}^{n} \lambda^{n-j}\gamma^{j-k}}^2+\frac{2\lambda^{2(n-k+1)}}{(1-\lambda)^2}}\\
\label{ineq2}&&\leq 2\alpha^2 \frac{\sigma^2}{\batch} M^2 \bracket{\frac{2\gamma^2}{(1-\gamma)^2(1-\lambda)}+\frac{\lambda^2}{(1-\lambda)^3}}\leq 6 \alpha^2 \frac{\sigma^2}{\batch} M^2\frac{1}{(1-\lambda)^3},
\end{eqnarray}
where the last inequality is due to the fact that $\sqrt{\gamma}\leq \lambda<1$ in Theorem \ref{thm:lambda}.
Furthermore, according to Lemma \ref{lemma:g_gamma}, it holds that
\begin{eqnarray}
    \label{ineq3}\norm{\sum_{t=n_0+1}^{n}\Gamma^{t}\left(
    \begin{array}{ccc}
      \widetilde{m}_{1} \\
      \widetilde{x}_{1} \\
    \end{array}
\right)}^2\leq M^2 \frac{\lambda^{2n_0}}{(1-\lambda)^2}q_1.
\end{eqnarray}
Combining the above inequalities (\ref{ineq1})(\ref{ineq2})(\ref{ineq3}), we deduce that
\begin{eqnarray*}
    &&(n-n_0)^2\mE\norm{\frac{\sum_{t=n_0+1}^n \widetilde{x}_t}{n-n_0}+\frac{\sum_{k=n_0+1}^n \Sigma^{-1}\triangledown g_{\eta_{k}}(x^{*})}{n-n_0} }^2\\
    &&\leq 5\alpha^2 \frac{L_f^2}{\batch} M^2 \bracket{ \frac{C_1\alpha^2\sigma^2}{\batch(1-\lambda)^3} (n+n_0) + \frac{2C_2q_1}{(1-\delta)(1-\lambda)^3}}\\
    && + 30 \alpha^2 \frac{\sigma^2}{\batch} M^2 \frac{1}{(1-\lambda)^3}+ M^2  \frac{\lambda^{2n_0}}{(1-\lambda)^2}q_1.
\end{eqnarray*}
Take $n_0$ such that $\lambda^{2n_0} = \batch^{-1}(1-\lambda)$ and $n\geq 2n_0$, it holds that
\begin{eqnarray*}
    \mE\norm{\frac{\sum_{t=n_0+1}^n \widetilde{x}_t}{n-n_0}+\frac{ \sum_{k=n_0+1}^n \Sigma^{-1}\triangledown g_{\eta_{k}}(x^{*})}{n-n_0} }^2 \leq  \frac{\tilde{C}_1}{\batch n^2} +  \frac{\tilde{C}_2}{\batch^2 n} ,
\end{eqnarray*}
where
\begin{eqnarray*}
\tilde{C_1} &&= 40 \alpha^2 L_f^2 M^2 \frac{C_2q_1}{(1-\delta)(1-\lambda)^3} + 120 \alpha^2\sigma^2  M^2 \frac{1}{(1-\lambda)^3}+   \frac{4 M^2 q_1}{1-\lambda},\\
\tilde{C_2} &&= 30 \alpha^2\sigma^2 L_f^2  M^2\frac{C_1\alpha^2}{(1-\lambda)^3}.
\end{eqnarray*}




\subsection{Proof of Theorem \ref{th3}}

From (\ref{iter}), for $\overline{L}>0$, we have
\begin{eqnarray*}
    &&\sum_{t=n_0+1}^n  \left(
    \begin{array}{ccc}
      \widetilde{m}_{t+1} \\
      \widetilde{x}_{t+1} \\
    \end{array}
\right) +  \sum_{t=n_0+1}^{n}\left(
    \begin{array}{ccc}
     \triangledown  g_{\eta_{t}}(x^{*})  \\
       \Sigma^{-1}\triangledown  g_{\eta_{t}}(x^{*}) \\
    \end{array}
\right) \\
&&= - \alpha (1-\gamma)\sum_{t=1}^n \sum_{j=1}^t \Gamma^{t-j}\left(
    \begin{array}{ccc}
    0 \\
      \sum_{k=1}^{j}\gamma^{j-k} \bracket{\triangledown g_{\eta_{k}}(x_k) - \triangledown g_{\eta_{k}}(x^*) + \triangledown g(x^*) - \triangledown g(x_k)}\\
    \end{array}
\right)\\
&&+ \alpha (1-\gamma)\sum_{t=1}^{n_0}  \sum_{j=1}^t \Gamma^{t-j}\left(
    \begin{array}{ccc}
    0 \\
      \sum_{k=1}^{j}\gamma^{j-k}\bracket{\triangledown g_{\eta_{k}}(x_k) - \triangledown g_{\eta_{k}}(x^*) + \triangledown g(x^*) - \triangledown g(x_k)}\\
    \end{array}
\right)\\
&&+\alpha \sum_{k=1}^{n}\bracket{(I-\Gamma)^{-1}- (1-\gamma)\sum_{t=k}^n\sum_{j=k}^t \Gamma^{t-j} \gamma^{j-k}} \left(
    \begin{array}{ccc}
     0 \\
      \triangledown g_{\eta_{k}}(x^{*}) \\
    \end{array}
\right)\\
&& -\alpha \sum_{k=1}^{n_0}\bracket{(I-\Gamma)^{-1}- (1-\gamma)\sum_{t=k}^{n_0}\sum_{j=k}^t \Gamma^{t-j} \gamma^{j-k}} \left(
    \begin{array}{ccc}
     0 \\
      \triangledown g_{\eta_{k}}(x^{*}) \\
    \end{array}
\right)\\
&&-\alpha (1-\gamma)\sum_{t=n_0+1}^{n}  \sum_{j=1}^t \Gamma^{t-j}\left(
    \begin{array}{ccc}
    0 \\
      \sum_{k=1}^{j}\gamma^{j-k} V_k\widetilde{x}_k\\
    \end{array}
\right)+\sum_{t=n_0+1}^{n}\Gamma^{t}\left(
    \begin{array}{ccc}
      \widetilde{m}_{1} \\
      \widetilde{x}_{1} \\
    \end{array}
\right).
\end{eqnarray*}
By applying latter Lemmas \ref{le:1}, \ref{le:2} and \ref{le:3}, we establish bounds for each respective component. 
Consequently, with at least probability $1-4/T$, for fixed $\delta\in(0,\frac{1}{2}]$, we have
\begin{eqnarray*}
    &&(n-n_0) \norm{\frac{\sum_{t=n_0+1}^n \widetilde{x}_t}{n-n_0}+\frac{\sum_{k=n_0+1}^n \Sigma^{-1}\triangledown g_{\eta_{k}}(x^{*})}{n-n_0} }\\
    &&\leq c(2\log T+4d)\alpha \frac{L_f}{\sqrt{\batch}} M \bracket{\frac{2\sqrt{2}C_1\alpha\sigma}{\sqrt{\batch}(1-\lambda)^{3/2}}(\sqrt{n}+\sqrt{n_0}) + \frac{8C_2 q_1^{1/2}}{(1-\lambda)^{3/2}}}\\
    &&+2\alpha \frac{\sigma}{\sqrt{\batch}} M \frac{\sqrt{3}c(2\log T+ 4d)}{(1-\lambda)^{3/2}}+\alpha M \overline{L} \bracket{\frac{2C_1^2 \alpha^2\sigma^2}{\batch(1-\lambda)^2} n+ \frac{4 C_2^2q_1\bracket{n_0(1-\lambda)+1}\lambda^{n_0}}{(1-\lambda)^2}} + M\frac{ \lambda^{n_0}}{1-\lambda} q_1^{1/2},
\end{eqnarray*}
where $c>0$ is an absolute constant.
Take $n_0$ such that  $\lambda^{n_0}=\batch^{-1/2}(1-\lambda)^{1/2}$ and $n\geq 2n_0$, then we have 
\begin{eqnarray}\label{l1ave}
    && \norm{\frac{\sum_{t=n_0+1}^n \widetilde{x}_t}{n-n_0}+\frac{\sum_{t=n_0+1}^n \Sigma^{-1}\triangledown g_{\eta_{t}}(x^{*})}{n-n_0} }\leq \frac{\Tilde{C}_1+\Tilde{C}_3}{\sqrt{\batch}n} + \frac{\Tilde{C}_2}{\batch\sqrt{n}} + \frac{\Tilde{C}_4}{\batch},
\end{eqnarray}
where
\begin{eqnarray*}
    \Tilde{C}_1 &&= \alpha L_f M  \frac{16c(2\log T+ 4d) C_2 q_1^{1/2}}{(1-\lambda)^{3/2}}+\alpha \sigma M \frac{4\sqrt{3}c(2\log T+ 4d) }{(1-\lambda)^{3/2}}+  \frac{2M q_1^{1/2}}{(1-\lambda)^{1/2}},\\
    \Tilde{C}_2 &&= \alpha L_f \sigma M  \frac{8\sqrt{2}c(2\log T+4d)C_1 \alpha}{(1-\lambda)^{3/2}} ,\\ \Tilde{C}_3&&= 8 \alpha M \overline{L} \frac{ C_2^2q_1(n_0(1-\lambda)+1)}{(1-\lambda)^{3/2}} ,\quad \Tilde{C}_4 = 4\alpha \sigma^2 M \overline{L} \frac{C_1^2 \alpha^2}{(1-\lambda)^2}.
\end{eqnarray*}
So (\ref{l1ave}) holds for $2n_0\leq n\leq T$ with probability $1-4/T$.

\subsection{Supporting Lemmas for Bounds in Theorem \ref{th3} Proof}

\begin{lemma}\label{le:1} Under (A1)-(A2) and (A3'), suppose that the $t$-th iteration $(\widetilde m_t,\widetilde x_t)$ satisfies
\begin{eqnarray*}
    \sqrt{\norm{\widetilde{m}_{j}}^2+ \norm{\widetilde{x}_{j}}^2}\leq \frac{C_1}{\sqrt{\batch(1-\lambda)}} \alpha\sigma+C_2 q_1^{1/2}\lambda^{(1-\delta)(j-1)},
\end{eqnarray*}
for $1\leq j\leq t$, we have
    \begin{eqnarray*}
        \mP\bracket{\norm{\alpha (1-\gamma)\sum_{t=1}^n  \sum_{j=1}^t \Gamma^{t-j}\left(
    \begin{array}{ccc}
    0 \\
      \sum_{k=1}^{j}\gamma^{j-k} \bracket{\triangledown g_{\eta_{k}}(x_k) - \triangledown g_{\eta_{k}}(x^*) + \triangledown g(x^*) - \triangledown g(x_k)}\\
    \end{array}
\right)}\geq  C}\leq \frac{1}{T^2},
    \end{eqnarray*}
    where
    \begin{eqnarray*}
        C &&= c(2\log T+4d)\alpha \frac{L_f}{\sqrt{\batch}} M \bracket{\frac{2\sqrt{2}C_1\alpha\sigma}{\sqrt{\batch}(1-\lambda)^{3/2}}\sqrt{n} + \frac{2\sqrt{2}C_2 q_1^{1/2}}{(1-\delta)^{1/2}(1-\lambda)^{3/2}}},
    \end{eqnarray*}
    and $c>0$ is an absolute constant.
\end{lemma}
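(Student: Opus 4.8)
The plan is to view the quantity as a martingale sum, mirroring the argument for Lemma~\ref{l1sum2} but now aggregated over the entire averaging horizon. Writing $W_k := \triangledown g_{\eta_{k}}(x_k) - \triangledown g_{\eta_{k}}(x^*) + \triangledown g(x^*) - \triangledown g(x_k)$, the first step is to interchange the order of summation, collecting the coefficient of each $W_k$ across all indices $j\ge k$ and $t\ge j$. This rewrites the quantity of interest as $\sum_{k=1}^{n}\widetilde{Y}_k$, where
\begin{eqnarray*}
\widetilde{Y}_k = \alpha(1-\gamma)\bracket{\sum_{t=k}^{n}\sum_{j=k}^{t}\Gamma^{t-j}\gamma^{j-k}}\left(\begin{array}{c} 0 \\ W_k\end{array}\right).
\end{eqnarray*}
Since $x_k$ is measurable with respect to the filtration generated by $\eta_1,\dots,\eta_{k-1}$ while $\eta_k$ is independent of the past, the conditional mean $\mE[W_k\mid \mathcal{F}_{k-1}]=0$, so $\{\widetilde{Y}_k\}_{k=1}^n$ is a martingale difference sequence and the deterministic coefficient matrix does not disturb this property.

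I would then quantify the tail of each $\widetilde{Y}_k$. By the $L_f$-smoothness encoded in (A3') and the mini-batch variance reduction, $W_k$ is sub-exponential conditionally on $\mathcal{F}_{k-1}$ with parameter proportional to $\tfrac{L_f}{\sqrt{\batch}}\norm{\widetilde{x}_k}$; hence $\widetilde{Y}_k$ has sub-exponential parameter $\sigma_k = c\,\alpha(1-\gamma)\tfrac{L_f}{\sqrt{\batch}}\norm{\sum_{t=k}^{n}\sum_{j=k}^{t}\Gamma^{t-j}\gamma^{j-k}}\norm{\widetilde{x}_k}$. Because we operate on the event where the hypothesized iterate bound holds and $\norm{\widetilde{x}_k}$ is $\mathcal{F}_{k-1}$-measurable, each $\sigma_k$ may be replaced by a predictable deterministic upper bound, which is exactly what the concentration inequality Lemma~\ref{le:sube} requires. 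Bounding the coefficient norm by $\norm{\Gamma^j}\le M\lambda^j$ (Lemma~\ref{lemma:g_gamma}) together with the geometric double-sum estimate $\sum_{t=k}^{n}\sum_{j=k}^{t}\lambda^{t-j}\gamma^{j-k}\le \tfrac{1}{(1-\lambda)(1-\gamma)}$ from Lemma~\ref{le:sum}, the prefactor $(1-\gamma)$ cancels, leaving $\sigma_k\le \tfrac{c\,\alpha L_f M}{\sqrt{\batch}(1-\lambda)}\norm{\widetilde{x}_k}$.

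Summing the squares and inserting the hypothesized per-iterate bound, with $(a+b)^2\le 2a^2+2b^2$, gives
\begin{eqnarray*}
\sum_{k=1}^{n}\norm{\widetilde{x}_k}^2 \le \frac{2nC_1^2\alpha^2\sigma^2}{\batch(1-\lambda)} + 2C_2^2 q_1\sum_{k=1}^{n}\lambda^{2(1-\delta)(k-1)} \le \frac{2nC_1^2\alpha^2\sigma^2}{\batch(1-\lambda)} + \frac{2C_2^2 q_1}{(1-\delta)(1-\lambda)},
\end{eqnarray*}
where the geometric series is controlled using $1-\lambda^{2(1-\delta)}\ge(1-\delta)(1-\lambda)$, itself a consequence of the elementary inequality $\tfrac{1-x}{1-x^{\delta}}\le\delta^{-1}$ (used earlier in the proof of Theorem~\ref{th2}) applied at $x=\lambda^2$. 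Consequently $\sqrt{\sum_k\sigma_k^2}\lesssim \tfrac{\alpha L_f M}{\sqrt{\batch}(1-\lambda)^{3/2}}\bracket{C_1\alpha\sigma\sqrt{n}/\sqrt{\batch} + C_2 q_1^{1/2}/\sqrt{1-\delta}}$ after using $\sqrt{a^2+b^2}\le a+b$.

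The final step applies the vector sub-exponential martingale tail bound (Lemma~\ref{le:sube}) to $\sum_{k=1}^n\widetilde{Y}_k$, producing the factor $2\log T+4d$ (the $\log T$ delivering the $1/T^2$ probability and the $4d$ arising from a covering of the unit sphere in $\mR^d$); multiplying by $\sqrt{\sum_k\sigma_k^2}$ reproduces $C$ up to the absolute constant $c$. I expect the main obstacle to be the bookkeeping of the summation interchange: one must verify that collecting $W_k$ over the full triangular range $k\le j\le t\le n$ leaves a deterministic, $\mathcal{F}_{k-1}$-independent coefficient so that the martingale-difference structure survives, and that the double geometric sum contracts to $\tfrac{1}{(1-\lambda)(1-\gamma)}$ without an extra factor of $n$. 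The only genuinely new feature relative to Lemma~\ref{l1sum2} is that the $C_1$-term in $\norm{\widetilde{x}_k}$ is now summed over all $n$ iterations, which converts the constant bound there into the $\sqrt{n}$ growth recorded in $C$.
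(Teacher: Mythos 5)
Your proposal is correct and follows essentially the same route as the paper's own proof: the same interchange of summation to isolate each martingale-difference term $W_k$ with deterministic coefficient $(1-\gamma)\sum_{t=k}^{n}\sum_{j=k}^{t}\Gamma^{t-j}\gamma^{j-k}$, the same sub-exponential parameter $\sigma_k\propto \tfrac{L_f}{\sqrt{\batch}}\norm{\widetilde x_k}$ times that coefficient's norm, the same geometric double-sum bound from Lemma \ref{le:sum} (with the $(1-\gamma)$ cancellation), the same series estimate giving the $\sqrt{n}$ and $(1-\delta)^{-1/2}$ terms, and the same final appeal to Lemma \ref{le:sube} with $\epsilon=T^{-2}$.
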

\begin{proof}Define
\begin{eqnarray*}
    Y_{n,k} = (1-\gamma)\sum_{t=k}^n\sum_{j=k}^t \Gamma^{t-j} \gamma^{j-k} \left(
    \begin{array}{ccc}
     0 \\
     \triangledown g_{\eta_{k}}(x_k) - \triangledown g_{\eta_{k}}(x^*) + \triangledown g(x^*) - \triangledown g(x_k) \\
    \end{array}
\right).
\end{eqnarray*}
According to the iteration of SGDM, we have $\xi_k$ and $x_k$ are independent and $\{Y_{t,k}\}_{k=1}^t$ are martingale difference. Due to the $L_f$-smooth of the individual gradient $\triangledown f_\xi(x)$, it holds that
\begin{eqnarray*}
    \max_{\norm{v}=1}\mE\Bracket{\exp\bracket{\abs{v^\top Y_{t,k}}/\sigma_k}} \leq 2,
\end{eqnarray*}
where 
\begin{eqnarray*}
    \sigma_k = c\frac{L_f}{\sqrt{\batch}}(1-\gamma)\norm{\sum_{t=k}^n\sum_{j=k}^t \Gamma^{t-j} \gamma^{j-k}}  \norm{\widetilde{x}_k},
\end{eqnarray*}
and $c>0$ is an absolute constant. By the fact that $\sum_{t=k}^n\sum_{j=k}^t\lambda^{t-j}\gamma^{j-k} \leq (1-\gamma)^{-1} (1-\lambda)^{-1}$ and $\frac{1-x}{1-x^{\delta}}\leq \delta^{-1}$ for $\delta\in(0,1]$, it holds that
\begin{eqnarray*}
    &&\sum_{k=1}^n(1-\gamma)^2  \norm{\sum_{t=k}^n\sum_{j=k}^t \Gamma^{t-j} \gamma^{j-k} }^2\norm{\widetilde{x}_k}^2\\
    &&\leq  M^2 (1-\gamma)^2\sum_{k=1}^n \bracket{\sum_{t=k}^n\sum_{j=k}^t\lambda^{t-j}\gamma^{j-k} }^2\bracket{\frac{2C_1^2}{\batch(1-\lambda)}\alpha^2\sigma^2+2C_2^2q_1 \lambda^{2(1-\delta)(k-1)}}\\
    &&\leq  M^2 \bracket{\frac{2C_1^2\alpha^2\sigma^2}{\batch(1-\lambda)^3}n + \frac{2C_2^2 q_1}{(1-\delta)(1-\lambda)^3}}.
\end{eqnarray*}
By applying Lemma \ref{le:sube} on $\{Y_{n,k}\}_{k=1}^n$, with probability $1-1/T^2$, we have
\begin{eqnarray*}
    \norm{\sum_{k=1}^n Y_{n,k}}\leq c(2\log T+4d)\frac{L_f}{\sqrt{\batch}} M \bracket{\frac{2\sqrt{2}C_1\alpha\sigma}{\sqrt{\batch}(1-\lambda)^{3/2}}\sqrt{n} + \frac{2\sqrt{2}C_2 q_1^{1/2}}{(1-\delta)^{1/2}(1-\lambda)^{3/2}}}.
\end{eqnarray*}
\end{proof}

\begin{lemma}\label{le:2} Under (A1)-(A2) and (A3'), we have
\begin{eqnarray*}
\mP\bracket{\norm{\alpha \sum_{k=1}^{n}\bracket{(I-\Gamma)^{-1}- (1-\gamma)\sum_{t=k}^n\sum_{j=k}^t \Gamma^{t-j} \gamma^{j-k}} \left(
    \begin{array}{ccc}
     0 \\
      \triangledown g_{\eta_{k}}(x^{*}) \\
    \end{array}
\right)}\geq  C}\leq \frac{1}{T^2},
\end{eqnarray*}
where 
\begin{eqnarray*}
    C&&= \alpha \frac{\sigma}{\sqrt{\batch}} M \frac{\sqrt{3}c(2\log T+ 4d)}{(1-\lambda)^{3/2}},
\end{eqnarray*}
and $c>0$ is an absolute constant.
\end{lemma}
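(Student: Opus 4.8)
The plan is to recognize the quantity inside the norm as a sum of independent, mean-zero, sub-exponential random vectors and then invoke the vector Bernstein bound of Lemma~\ref{le:sube}, exactly along the lines of Lemmas~\ref{l1sum1} and~\ref{l1sum2}. First I would reindex the double sum (as was already done in the derivation leading to \eqref{l0avesum}) so that the summand attached to each mini-batch gradient $\triangledown g_{\eta_k}(x^*)$ is exposed: writing $W_k := (I-\Gamma)^{-1} - (1-\gamma)\sum_{t=k}^n\sum_{j=k}^t \Gamma^{t-j}\gamma^{j-k}$, the vector of interest is $\alpha\sum_{k=1}^n W_k\,(0,\triangledown g_{\eta_k}(x^*))^\top$. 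Set $Y_{n,k} = W_k\,(0,\triangledown g_{\eta_k}(x^*))^\top$. Since the mini-batches $\eta_k$ are drawn independently, the $Y_{n,k}$ are independent, and they are mean-zero because $\mE[\triangledown g_{\eta_k}(x^*)] = \triangledown f(x^*) = 0$ at the minimizer. By (A3') together with mini-batch averaging, each $\triangledown g_{\eta_k}(x^*)$ is sub-exponential with parameter $c\sigma/\sqrt{\batch}$, so $Y_{n,k}$ is sub-exponential with parameter $\sigma_k = c\norm{W_k}\sigma/\sqrt{\batch}$.

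The core computation is to control the variance proxy $\sum_{k=1}^n \sigma_k^2 = (c^2\sigma^2/\batch)\sum_{k=1}^n \norm{W_k}^2$. The key observation is that $W_k$ is precisely the truncation residual of the Neumann-type expansion $(I-\Gamma)^{-1} = (1-\gamma)\sum_{a,b\ge 0}\gamma^a\Gamma^b$: reindexing $a=j-k$, $b=t-j$ shows $W_k = (1-\gamma)\sum_{a+b>n-k}\gamma^a\Gamma^b$. Because $\Gamma$ is not symmetric I cannot bound its powers by their spectral radius directly; instead I would invoke $\norm{\Gamma^j}\le M\lambda^j$ from Lemma~\ref{lemma:g_gamma}, which turns this tail into a geometric estimate $\norm{W_k}\lesssim M\lambda^{n-k}/(1-\lambda)$. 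Summing the squares via the geometric-series estimates of Lemma~\ref{le:sum} and the residual bound of Lemma~\ref{le:sum2}, and using $\sqrt{\gamma}\le\lambda$ from Theorem~\ref{thm:lambda} to dominate the $\gamma$-dependent piece by the $\lambda$-dependent one, yields $\sum_{k=1}^n\norm{W_k}^2 \le 3M^2/(1-\lambda)^3$, matching the $\sqrt{3}$ appearing in the target constant $C$.

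Finally I would feed $\sqrt{\sum_k\sigma_k^2}\le \sqrt{3}\,cM\sigma/(\sqrt{\batch}(1-\lambda)^{3/2})$ into Lemma~\ref{le:sube}, which delivers, with probability at least $1-T^{-2}$, the bound $\norm{\sum_{k=1}^n Y_{n,k}}\le c(2\log T + 4d)\sqrt{\sum_k\sigma_k^2}$; multiplying by $\alpha$ reproduces the stated $C = \alpha(\sigma/\sqrt{\batch})M\sqrt{3}c(2\log T+4d)/(1-\lambda)^{3/2}$. The main obstacle I anticipate lies in the second step: pinning down the operator norm of the non-symmetric truncation residual $W_k$ and, in particular, showing that its slowest-decaying contribution is governed by $\lambda$ rather than $\gamma$, so that the square-sum closes at the $(1-\lambda)^{-3}$ rate rather than a worse power. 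Once that is secured, the dimension and horizon dependence (the $4d$ and $\log T$ factors) enters only through the generic sub-exponential concentration inequality, and the argument is otherwise routine.
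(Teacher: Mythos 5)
Your proposal is correct and follows essentially the same route as the paper's own proof: the same decomposition into independent, mean-zero, sub-exponential summands $Y_{n,k}$, the same bound on the truncation residual via Lemma~\ref{le:sum2} combined with Lemma~\ref{le:sum} and the fact $\sqrt{\gamma}\le\lambda$ to reach $\sum_k \norm{W_k}^2 \le 3M^2/(1-\lambda)^3$, and the same final application of Lemma~\ref{le:sube}. No substantive differences to report.
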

\begin{proof}
Define
\begin{eqnarray*}
    Y_{n,k} = \bracket{(I-\Gamma)^{-1}- (1-\gamma)\sum_{t=k}^n\sum_{j=k}^t \Gamma^{t-j} \gamma^{j-k}} \left(
    \begin{array}{ccc}
     0 \\
     \triangledown g_{\eta_k}(x^*)  \\
    \end{array}
\right).
\end{eqnarray*}
Then $\{Y_{n,k}\}_{k=1}^n$ are independent and sub-exponential. By Lemma \ref{le:sum2} and the fact $\sqrt{\gamma}\leq \lambda<1$, it holds that
\begin{eqnarray*}
    \sum_{k=1}^n\norm{(I-\Gamma)^{-1}- (1-\gamma)\sum_{t=k}^n\sum_{j=k}^t \Gamma^{t-j} \gamma^{j-k}}^2&&\leq  M^2\bracket{\frac{2\gamma^2}{(1-\gamma)^2(1-\lambda)}+\frac{\lambda^2}{(1-\lambda)^3}}\leq  \frac{3 M^2}{(1-\lambda)^3}.
\end{eqnarray*}
By applying Lemma \ref{le:sube} on $\{Y_{n,k}\}_{k=1}^n$, with probability $1-1/T^2$, we have
\begin{eqnarray*}
    \norm{\sum_{k=1}^nY_{n,k}}\leq   \frac{\sigma}{\sqrt{\batch}} M\frac{\sqrt{3}c(2\log T+ 4d)}{(1-\lambda)^{3/2}}.
\end{eqnarray*}
\end{proof}


\begin{lemma}\label{le:3}
Under (A1)-(A2) and (A3'), suppose that the $t$-th iteration $(\widetilde m_t,\widetilde x_t)$ satisfies
\begin{eqnarray*}
    \sqrt{\norm{\widetilde{m}_{j}}^2+ \norm{\widetilde{x}_{j}}^2}\leq \frac{C_1}{\sqrt{\batch(1-\lambda)}} \alpha\sigma+C_2 q_1^{1/2}\lambda^{(1-\delta)(j-1)},
\end{eqnarray*}
for $1\leq j\leq t$ and fixed $\delta\in(0,\frac{1}{2}]$, then for $\overline{L}>0$, we have
\begin{eqnarray*}
    &&\norm{\alpha (1-\gamma)\sum_{t=n_0+1}^n\sum_{j=1}^{t}\Gamma^{t-j}\left(
    \begin{array}{ccc}
     0 \\
      -\sum_{k=1}^j\gamma^{j-k}V_k\widetilde{x}_k \\
    \end{array}
\right)}\\
&&\leq \alpha M \overline{L} \bracket{\frac{2C_1^2 \alpha^2\sigma^2}{\batch(1-\lambda)^2} n+4C_2^2q_1 \frac{\bracket{n_0(1-\lambda)+1}\lambda^{n_0}}{(1-\lambda)^2}}.
\end{eqnarray*}
\end{lemma}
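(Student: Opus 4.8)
The plan is to reduce the vector norm on the left-hand side to a scalar triple sum and then resolve it by careful geometric-sum bookkeeping, in the same spirit as the last-iterate estimate of Lemma \ref{l1sum3}, but now with the outer index $t$ ranging over the averaging window $[n_0+1,n]$.

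First I would apply the triangle inequality together with the operator-norm bound $\norm{\Gamma^{t-j}}\le M\lambda^{t-j}$ from Lemma \ref{lemma:g_gamma} and the Lipschitz-Hessian estimate $\norm{V_k}\le\overline{L}\norm{\widetilde{x}_k}$ that follows from (A2) (so that $\norm{V_k\widetilde{x}_k}\le\overline{L}\norm{\widetilde{x}_k}^2$). This bounds the left-hand side by $\alpha(1-\gamma)M\overline{L}\sum_{t=n_0+1}^n\sum_{j=1}^t\lambda^{t-j}\sum_{k=1}^j\gamma^{j-k}\norm{\widetilde{x}_k}^2$. I would then use $\norm{\widetilde{x}_k}^2\le\norm{\widetilde{m}_k}^2+\norm{\widetilde{x}_k}^2$ and square the hypothesized last-iterate bound, with $(a+b)^2\le 2a^2+2b^2$, to get $\norm{\widetilde{x}_k}^2\le\frac{2C_1^2}{\batch(1-\lambda)}\alpha^2\sigma^2+2C_2^2 q_1\lambda^{2(1-\delta)(k-1)}$. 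This separates the triple sum into a stationary (noise) part that is constant in $k$ and a geometrically decaying part.

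For the noise part the inner sums collapse immediately, since $\sum_{k=1}^j\gamma^{j-k}\le(1-\gamma)^{-1}$ and $\sum_{j=1}^t\lambda^{t-j}\le(1-\lambda)^{-1}$, while $\sum_{t=n_0+1}^n 1\le n$; this yields $n(1-\lambda)^{-1}(1-\gamma)^{-1}$, whose factor $(1-\gamma)$ cancels the one multiplying the whole expression and produces exactly the first target term $\frac{2C_1^2\alpha^2\sigma^2}{\batch(1-\lambda)^2}n$. The decaying part is the crux. Here I would use the standing restriction $\delta\le\frac{1}{2}$ to pass from $\lambda^{2(1-\delta)(k-1)}$ to the cruder $\lambda^{k-1}$, and then invoke Theorem \ref{thm:lambda} in the form $\lambda\ge\sqrt{\gamma}$, i.e. $\gamma\le\lambda^2$, to dominate the inner convolution by a pure $\lambda$-geometric series, $\sum_{k=1}^j\gamma^{j-k}\lambda^{k-1}\le\lambda^{j-1}/(1-\lambda)$. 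Summing over $j$ then collapses $\sum_{j=1}^t\lambda^{t-j}\lambda^{j-1}=t\lambda^{t-1}$, so the whole decaying triple sum is controlled by the arithmetic-geometric tail $\sum_{t=n_0+1}^\infty t\lambda^{t-1}=\frac{\lambda^{n_0}(n_0(1-\lambda)+1)}{(1-\lambda)^2}$, which is precisely the source of the factor $(n_0(1-\lambda)+1)\lambda^{n_0}$ in the statement. Absorbing the spare $(1-\gamma)$ against the $(1-\lambda)$ factors through $\lambda\ge\sqrt{\gamma}$ then gives the second target term.

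The main obstacle is this last step: correctly extracting the arithmetic-geometric tail $\sum_{t>n_0}t\lambda^{t-1}$ and threading the two couplings ($\delta\le\frac{1}{2}$ giving $\lambda^{2(1-\delta)(k-1)}\le\lambda^{k-1}$, and $\lambda\ge\sqrt{\gamma}$ giving $\gamma\le\lambda^2$) through the nested sums, so that the $n_0$-dependence emerges exactly as $(n_0(1-\lambda)+1)\lambda^{n_0}$ rather than a looser polynomial-in-$n_0$ factor, and so that the leftover $(1-\gamma)$ is spent to leave $(1-\lambda)^{-2}$. The individual estimates are elementary, but the order of summation and the cancellations must be tracked with care.
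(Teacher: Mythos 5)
Your overall architecture (triangle inequality with $\norm{\Gamma^{t-j}}\le M\lambda^{t-j}$ and $\norm{V_k\widetilde{x}_k}\le\overline{L}\norm{\widetilde{x}_k}^2$, squaring the hypothesis, splitting into a stationary noise part and a decaying part) matches the paper's, and your noise part is handled correctly and gives exactly the first target term. But the decaying part has a genuine gap in its final step. After you coarsen the inner convolution via $\gamma\le\lambda^2$ to get $\sum_{k=1}^j\gamma^{j-k}\lambda^{k-1}\le\lambda^{j-1}/(1-\lambda)$, your running bound carries the factor $(1-\gamma)/(1-\lambda)^{3}$ (one $(1-\lambda)^{-1}$ from this step, $(1-\lambda)^{-2}$ from the arithmetic--geometric tail), whereas the target carries $(1-\lambda)^{-2}$ with no $(1-\gamma)$. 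Your proposed repair --- ``absorbing the spare $(1-\gamma)$ against the $(1-\lambda)$ factors through $\lambda\ge\sqrt{\gamma}$'' --- goes the wrong way: $\lambda\ge\sqrt{\gamma}$ gives $1-\lambda\le 1-\sqrt{\gamma}\le 1-\gamma$, so the leftover ratio $(1-\gamma)/(1-\lambda)$ is at least $1$ and can be arbitrarily large. For instance, at $\gamma=0$ (SGD) with small $\alpha\mu$ one has $\lambda=1-\alpha\mu$, so $(1-\gamma)/(1-\lambda)=1/(\alpha\mu)$; your bound is then weaker than the lemma's claim by this unbounded factor, and the proof as written does not establish the statement.

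The fix stays entirely within your framework: do not coarsen the convolution all the way to a pure $\lambda$-series. Instead write
\begin{eqnarray*}
\sum_{k=1}^j\gamma^{j-k}\lambda^{k-1} \;=\; \lambda^{j-1}\sum_{i=0}^{j-1}\bracket{\frac{\gamma}{\lambda}}^{i}\;\leq\; \frac{\lambda^{j-1}}{1-\gamma/\lambda}\;\leq\;\frac{\lambda^{j-1}}{1-\sqrt{\gamma}}\;\leq\;\frac{2\lambda^{j-1}}{1-\gamma},
\end{eqnarray*}
using $\gamma/\lambda\le\sqrt{\gamma}$ (from $\lambda\ge\sqrt{\gamma}$) and $1-\gamma=(1-\sqrt{\gamma})(1+\sqrt{\gamma})\le 2(1-\sqrt{\gamma})$. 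Now the factor $2/(1-\gamma)$ cancels the prefactor $\alpha(1-\gamma)M\overline{L}$ exactly, and the rest of your computation --- $\sum_{j=1}^t\lambda^{t-j}\lambda^{j-1}=t\lambda^{t-1}$ followed by $\sum_{t=n_0+1}^{\infty}t\lambda^{t-1}=\lambda^{n_0}\bracket{n_0(1-\lambda)+1}/(1-\lambda)^2$ --- delivers precisely the stated bound, constant $4$ included. With this correction your route is in fact slightly cleaner than the paper's: the paper keeps the $\gamma$- and $\lambda$-sums separate and splits the $k$-summation into $k\le n_0$ (contributing $n_0\lambda^{n_0}/((1-\lambda)(1-\gamma))$) and $k>n_0$ (contributing $\lambda^{n_0}/((1-\lambda)^2(1-\gamma))$), whereas your arithmetic--geometric tail produces the $(n_0(1-\lambda)+1)$ factor in a single stroke without any case split.
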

\begin{proof}
By the fact that $\sum_{t=k}^n\sum_{j=k}^t\lambda^{t-j}\gamma^{j-k} \leq (1-\gamma)^{-1} (1-\lambda)^{-1}$, we have
\begin{eqnarray*}
    &&\norm{\alpha (1-\gamma)\sum_{t=n_0+1}^n\sum_{j=1}^{t}\Gamma^{t-j}\left(
    \begin{array}{ccc}
     0 \\
      -\sum_{k=1}^j\gamma^{j-k}V_k\widetilde{x}_k \\
    \end{array}
\right)}\\
&&\leq \alpha(1-\gamma) M \overline{L} \sum_{k=1}^n \bracket{\sum_{t=\max\{n_0+1,k\}}^n \sum_{j=k}^t \lambda^{t-j}\gamma^{j-k} }\bracket{\frac{2C_1^2}{\batch(1-\lambda)} \alpha^2\sigma^2+2C_2^2q_1 \lambda^{2(1-\delta)(k-1)}}\\
&&\leq \alpha M \overline{L} \frac{2C_1^2 \alpha^2\sigma^2}{\batch(1-\lambda)^2} n+2\alpha (1-\gamma)M \overline{L} C_2^2q_1  \sum_{k=1}^{n_0}  \bracket{\sum_{t=n_0+1}^n \sum_{j=k}^t \lambda^{t-j}\gamma^{j-k} } \lambda^{2(1-\delta)(k-1)}\\
&& + 2\alpha (1-\gamma)M \overline{L} C_2^2q_1  \sum_{k=n_0+1}^{n}  \bracket{\sum_{t=k}^n \sum_{j=k}^t \lambda^{t-j}\gamma^{j-k} } \lambda^{2(1-\delta)(k-1)}.
\end{eqnarray*}
For the second term, due to that $\lambda\geq \sqrt{\gamma}$ and $\delta\in(0,\frac{1}{2}]$, we have
\begin{eqnarray*}
    &&\sum_{k=1}^{n_0}  \bracket{\sum_{t=n_0+1}^n \sum_{j=k}^t \lambda^{t-j}\gamma^{j-k} } \lambda^{2(1-\delta)(k-1)} \leq \sum_{k=1}^{n_0}  \bracket{ \lambda^{n_0+1-k}\sum_{t=n_0+1}^n \lambda^{t-n_0-1}\sum_{j=k}^t (\gamma/\lambda)^{j-k} } \lambda^{2(1-\delta)(k-1)}\\
    &&\leq \sum_{k=1}^{n_0}  \frac{2}{(1-\lambda)(1-\gamma)}\lambda^{n_0+1-k + 2(1-\delta)(k-1)} \leq \frac{2n_0 \lambda^{n_0}}{(1-\lambda)(1-\gamma)}.
\end{eqnarray*}
For the third term, we have
\begin{eqnarray*}
    \sum_{k=n_0+1}^{n}  \bracket{\sum_{t=k}^n \sum_{j=k}^t \lambda^{t-j}\gamma^{j-k} } \lambda^{2(1-\delta)(k-1)} &&\leq  \sum_{k=n_0+1}^{n}  \frac{1}{(1-\lambda)(1-\gamma)}\lambda^{2(1-\delta)(k-1)} \\
    &&\leq \frac{2\lambda^{n_0}}{(1-\lambda)^2(1-\gamma)}.
\end{eqnarray*}
Thus we draw the conclusion.
\end{proof}



    

\section{Several Useful Lemmas}

\begin{lemma}\label{le:sum}
    For $0\leq \lambda,\gamma<1$, we have
    \begin{eqnarray*}
        \sum_{k=1}^t\bracket{\sum_{j=k}^t \lambda^{t-j}\gamma^{j-k}}^2\leq \frac{2}{(1-\gamma)^2(1-\lambda)}.
    \end{eqnarray*}
    In addition, if $\lambda\geq \sqrt{\gamma}$, for fixed $\delta\in(0,1]$, we have
    \begin{eqnarray*}
        \sum_{k=1}^t\bracket{\sum_{j=k}^t\lambda^{t-j}\gamma^{j-k}}^2 \lambda^{2(1-\delta)(k-1)}\leq  \frac{4}{\delta(1-\gamma)^2(1-\lambda)}\lambda^{2(1-\delta)(t-1)},
    \end{eqnarray*}
    and
    \begin{eqnarray*}
        \sum_{k=1}^t\sum_{j=k}^t\lambda^{t-j}\gamma^{j-k} \lambda^{(1-\delta)(k-1)}\leq  \frac{2}{\delta(1-\gamma)(1-\lambda)}\lambda^{(1-\delta)(t-1)}.
    \end{eqnarray*}
\end{lemma}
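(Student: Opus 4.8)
The plan is to reduce all three sums to one–dimensional geometric series by the substitution $n=t-k$. Writing $S_n=\sum_{i=0}^{n}\lambda^{n-i}\gamma^i$, the change of index $i=j-k$ gives $\sum_{j=k}^{t}\lambda^{t-j}\gamma^{j-k}=S_{t-k}$, so the first sum equals $\sum_{n=0}^{t-1}S_n^2$, while the two weighted sums factor as $\lambda^{2(1-\delta)(t-1)}\sum_{n=0}^{t-1}S_n^2\,\lambda^{-2(1-\delta)n}$ and $\lambda^{(1-\delta)(t-1)}\sum_{n=0}^{t-1}S_n\,\lambda^{-(1-\delta)n}$. The decaying prefactors $\lambda^{(1-\delta)(t-1)}$ on the right-hand sides are thus produced automatically, and it remains only to bound the residual geometric sums by constants independent of $t$.

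For the first inequality I would use no relation between $\lambda$ and $\gamma$. Splitting $\lambda^{t-j}\gamma^{j-k}=\gamma^{(j-k)/2}\cdot\bigl(\lambda^{t-j}\gamma^{(j-k)/2}\bigr)$ and applying Cauchy--Schwarz gives $\bigl(\sum_{j=k}^{t}\lambda^{t-j}\gamma^{j-k}\bigr)^2\le\frac{1}{1-\gamma}\sum_{j=k}^{t}\lambda^{2(t-j)}\gamma^{j-k}$. Summing over $k$, swapping the order so that $k$ runs up to $j$, and evaluating $\sum_{k\le j}\gamma^{j-k}\le(1-\gamma)^{-1}$ together with $\sum_{j}\lambda^{2(t-j)}\le(1-\lambda^2)^{-1}$ yields $\frac{1}{(1-\gamma)^2(1-\lambda^2)}\le\frac{2}{(1-\gamma)^2(1-\lambda)}$, which is the claim.

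For the two weighted inequalities the hypothesis $\lambda\ge\sqrt\gamma$ (equivalently $\gamma\le\lambda^2<\lambda$) becomes essential, and here I would replace Cauchy--Schwarz by the sharp geometric estimate $S_n=\lambda^{n}\sum_{i=0}^{n}(\gamma/\lambda)^i\le\lambda^{n+1}/(\lambda-\gamma)$. Inserting this bound and summing the resulting geometric series in $\lambda^{2\delta}$ (respectively $\lambda^{\delta}$) gives $\frac{\lambda^2}{(\lambda-\gamma)^2(1-\lambda^{2\delta})}$ (respectively $\frac{\lambda}{(\lambda-\gamma)(1-\lambda^{\delta})}$). Two elementary facts then finish the argument: first $1-\lambda^{a}\ge a(1-\lambda)$ for $a\in(0,1]$, supplemented by $1-\lambda^{2\delta}\ge 1-\lambda$ when $2\delta>1$, which turns $(1-\lambda^{\delta})^{-1}$ and $(1-\lambda^{2\delta})^{-1}$ into $\delta^{-1}(1-\lambda)^{-1}$; and second the algebraic inequality $\lambda/(\lambda-\gamma)\le 2/(1-\gamma)$, which under $\gamma\le\lambda^2$ is equivalent to $(1-\lambda)^2\ge0$ and so always holds. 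Applying the latter once gives the constant $2$ in the third inequality, and applying it after squaring gives the constant $4$ in the second.

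The main obstacle is securing the correct $(1-\gamma)^{-2}$ dependence rather than a spurious extra power of $(1-\lambda)^{-1}$: since $\gamma\le\lambda^2$ forces $(1-\gamma)^{-1}\le(1-\lambda)^{-1}$, the stated bounds are strictly sharper than what a naive Cauchy--Schwarz argument delivers for the weighted sums, which instead produces uncontrollable cross factors of the form $\bigl(1-\gamma\lambda^{-2(1-\delta)}\bigr)^{-1}$ that are \emph{not} dominated by $(1-\gamma)^{-1}$ as $\delta\to0$. The decisive device is therefore the pair of estimates $S_n\le\lambda^{n+1}/(\lambda-\gamma)$ and $\lambda/(\lambda-\gamma)\le2/(1-\gamma)$; this is exactly where the $(1-\gamma)$ factors, and hence the constants $2$ and $4$, arise.
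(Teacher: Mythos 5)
Your proof is correct, but it takes a genuinely different route from the paper's. The paper expands each squared inner sum into a triple sum $\sum_k\sum_i\sum_j \lambda^{2t-i-j}\gamma^{i+j-2k}$, exchanges the order of summation, bounds the geometric sum over $k$ to reduce to a two-dimensional sum with kernel $\gamma^{|i-j|}$ (or $(\sqrt{\gamma}/ \lambda)$-type ratios for the weighted cases), and then evaluates; the same symmetrization template is reused for all three bounds. You instead bound the inner sum $S_n=\sum_{i=0}^n\lambda^{n-i}\gamma^i$ pointwise --- by Cauchy--Schwarz for the unweighted sum, and by the sharp geometric estimate $S_n\leq \lambda^{n+1}/(\lambda-\gamma)$ for the weighted ones --- and then sum one-dimensional geometric series. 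Your route is more modular and makes the role of the hypothesis $\lambda\geq\sqrt{\gamma}$ completely transparent: it enters only through the algebraic inequality $\lambda/(\lambda-\gamma)\leq 2/(1-\gamma)$, which under $\gamma\leq\lambda^2$ reduces to $(1-\lambda)^2\geq 0$; this is a cleaner accounting of where the constants $2$ and $4$ come from than the paper's chain of kernel bounds. Your Cauchy--Schwarz argument for the first inequality even yields the slightly sharper constant $1/\bigl((1-\gamma)^2(1-\lambda^2)\bigr)$. Both proofs rely on the same elementary fact $1-\lambda^{\delta}\geq\delta(1-\lambda)$ for $\delta\in(0,1]$ (your case split at $2\delta>1$ handles the exponent $2\delta$ correctly), and both implicitly use that $\lambda>\gamma$ when $0<\gamma<1$ (you through $\lambda-\gamma$ in the denominator, the paper through $1-\gamma^2/\lambda^2$), which is indeed guaranteed since $\lambda\geq\sqrt{\gamma}>\gamma$; the degenerate case $\gamma=0$ is trivial in either treatment. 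Your closing remark about why a naive Cauchy--Schwarz on the weighted sums produces the uncontrollable factor $\bigl(1-\gamma\lambda^{-2(1-\delta)}\bigr)^{-1}$ is accurate and is precisely the obstruction that forces either your sharp geometric estimate or the paper's exchange-of-summation device.
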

\begin{proof}
\begin{eqnarray*}
    \sum_{k=1}^t\bracket{\sum_{j=k}^t \lambda^{t-j}\gamma^{j-k}}^2 &&= \sum_{k=1}^t \sum_{i=k}^t\sum_{j=k}^t\lambda^{2t-i-j}\gamma^{i+j-2k}=\sum_{i=1}^t\sum_{j=1}^t \lambda^{2t-i-j} \sum_{k=1}^{\min\{i,j\}} \gamma^{i+j-2k}\\
    &&\leq \frac{1}{1-\gamma} \sum_{i=1}^t\sum_{j=1}^t \lambda^{2t-i-j} \gamma^{\abs{i-j}}\leq \frac{2}{1-\gamma} \sum_{k=0}^{t-1} \sum_{j=1}^{t-k} \lambda^{2t-2j-k} \gamma^k\\
    &&\leq \frac{2}{1-\gamma} \sum_{k=0}^{t-1} \frac{\lambda^{k}}{1-\lambda^2} \gamma^k\leq \frac{2}{(1-\gamma)(1-\lambda)(1-\lambda \gamma)} \leq \frac{2}{(1-\gamma)^2(1-\lambda)}.
\end{eqnarray*}
If $\lambda\geq \sqrt{\gamma}$, we have
\begin{eqnarray*}
    &&\lambda^{-2(1-\delta)(t-1)}\sum_{k=1}^t\bracket{\sum_{j=k}^t\lambda^{t-j}\gamma^{j-k}}^2 \lambda^{2(1-\delta)(k-1)} = \sum_{k=1}^t \sum_{i=k}^t\sum_{j=k}^t \lambda^{2k-i-j} \gamma^{i+j-2k} \lambda^{2\delta(t-k)}\\
    &&\leq \sum_{i=1}^t \sum_{j=1}^t\lambda^{2\delta(t-\min\{i,j\})} \sum_{k=1}^{\min\{i,j\}}\bracket{\frac{\gamma}{\lambda}}^{i+j-2k}\leq \frac{1}{1-\gamma^2/\lambda^2}  \sum_{i=1}^t \sum_{j=1}^t\lambda^{2\delta(t-\min\{i,j\})} \bracket{\sqrt{\gamma}}^{\abs{i-j}}\\
    &&\leq \frac{2}{1-\gamma} \sum_{i=1}^t \sum_{k=0}^{t-i} \lambda^{2\delta(t-i)} \bracket{\sqrt{\gamma}}^{k}\leq \frac{4}{(1-\gamma)^2(1-\lambda^{2\delta})}\leq \frac{4}{\delta(1-\gamma)^2(1-\lambda)},
\end{eqnarray*}
where the last inequality is due to the fact that $\frac{1-x}{1-x^\delta}\leq \frac{1}{\delta}$ for $\delta\in(0,1]$. Similarly, we have
\begin{eqnarray*}
    &&\lambda^{-(1-\delta)(t-1)}\sum_{k=1}^t\sum_{j=k}^t\lambda^{t-j}\gamma^{j-k} \lambda^{(1-\delta)(k-1)}= \sum_{k=1}^t\sum_{j=k}^t\lambda^{\delta(t-k)}\bracket{\frac{\gamma}{\lambda}}^{j-k} \\
    &&\leq \sum_{j=1}^t \lambda^{\delta(t-j)} \sum_{k=1}^j (\sqrt{\gamma})^{j-k} \leq \frac{2}{1-\gamma} \frac{1}{1-\lambda^\delta} \leq \frac{2}{\delta(1-\gamma)(1-\lambda)}.
\end{eqnarray*}

\end{proof}

\begin{lemma}\label{le:sube}
    Suppose $x_k\in \mR^d$ are independent, zero-mean $\sigma_k$-sub-exponential random vector, we have
    \begin{eqnarray*}
        \mP\bracket{\norm{\sum_{k=1}^t x_k}>c\bracket{\log(1/\epsilon)+2d}\sqrt{\sum_{k=1}^n  \sigma_k^2}}\leq \epsilon,
    \end{eqnarray*}
    where $c>0$ is an absolute constant.
\end{lemma}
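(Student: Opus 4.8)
The plan is to combine a covering argument over the unit sphere with a one-dimensional Bernstein-type concentration bound, where the factor $2d$ in the statement will arise from the cardinality of the net.

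First I would reduce the vector norm to a supremum of scalar projections. Fix a $\frac{1}{2}$-net $\mathcal{N}$ of the unit sphere $S^{d-1}\subset\mR^d$; a standard volumetric bound gives $\abs{\mathcal{N}}\leq 5^d$, and for any $y\in\mR^d$ one has $\norm{y}\leq 2\max_{v\in\mathcal{N}}\inprod{v,y}$. Applying this with $y=\sum_k x_k$ reduces the problem to controlling $\inprod{v,\sum_k x_k}=\sum_k\inprod{v,x_k}$ for each fixed $v\in\mathcal{N}$, followed by a union bound over the $\abs{\mathcal{N}}\leq 5^d$ directions.

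Next I would establish a scalar tail bound for each fixed unit vector $v$. Since $x_k$ is zero-mean and $\sigma_k$-sub-exponential, the defining inequality $\mE[\exp(\abs{\inprod{v,x_k}}/\sigma_k)]\leq 2$ yields, by a routine moment-generating-function argument, a quadratic bound $\mE[\exp(\theta\inprod{v,x_k})]\leq \exp(C\theta^2\sigma_k^2)$ valid for $\abs{\theta}\leq c_0/\sigma_k$. By independence, $\mE[\exp(\theta\sum_k\inprod{v,x_k})]\leq\exp(C\theta^2\sum_k\sigma_k^2)$ on $\abs{\theta}\leq c_0/\max_k\sigma_k$, so a Chernoff optimization over $\theta$ in this range produces the two-regime Bernstein bound
\begin{eqnarray*}
\mP\bracket{\abs{\textstyle\sum_k\inprod{v,x_k}}>u}\leq 2\exp\bracket{-c'\min\bbracket{\frac{u^2}{\sum_k\sigma_k^2},\frac{u}{\max_k\sigma_k}}}.
\end{eqnarray*}

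Finally I would choose the threshold and chase constants. Setting $u=\frac{c}{2}(\log(1/\epsilon)+2d)\sqrt{\sum_k\sigma_k^2}$ and using $\sqrt{\sum_k\sigma_k^2}\geq\max_k\sigma_k$, both arguments of the $\min$ are at least $c'\frac{c}{2}(\log(1/\epsilon)+2d)$ once $c$ is a large absolute constant; combining with the union bound over $\mathcal{N}$ and the factor $2$ from the net reduction gives
\begin{eqnarray*}
\mP\bracket{\norm{\textstyle\sum_k x_k}>c(\log(1/\epsilon)+2d)\sqrt{\textstyle\sum_k\sigma_k^2}}\leq 2\cdot 5^d\exp\bracket{-c'\tfrac{c}{2}(\log(1/\epsilon)+2d)}.
\end{eqnarray*}
Since $\log 5<2$, the contribution $d\log 5$ is absorbed into $2d$ while $\log 2$ and $\log(1/\epsilon)$ are absorbed into $\log(1/\epsilon)$, so the right-hand side is at most $\epsilon$ for $c$ large enough. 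The main obstacle will be the constant bookkeeping across the two regimes of the Bernstein bound --- ensuring that the far-tail (linear) regime still yields a deviation that is \emph{linear} in $(\log(1/\epsilon)+2d)$ rather than degrading --- together with confirming that the net cardinality contributes exactly the claimed $2d$ scaling.
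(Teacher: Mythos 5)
Your proposal is correct and follows essentially the same route as the paper's proof: a $\tfrac{1}{2}$-net reduction with the factor-of-two inequality $\norm{y}\leq 2\max_{z\in\mathcal{N}}z^\top y$, a union bound over the net (whose cardinality produces the $2d$ term), a scalar Bernstein bound for each projection, and a threshold choice exploiting $\max_k\sigma_k\leq\sqrt{\sum_k\sigma_k^2}$ so that both regimes of the minimum are linear in $\log(1/\epsilon)+2d$. The only cosmetic differences are that the paper invokes Bernstein's inequality directly rather than rederiving it from the moment generating function, and bounds the net cardinality by $6^d$ (net of the unit ball) instead of your $5^d$ (net of the sphere); neither affects the argument.
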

\begin{proof}
Let $\mathcal{N}$ be the 1/2-net of the unit ball $\{z\in\mR^d:\norm{z}\leq 1\}$ with respect to the Euclidean norm that satisfies $\abs{\mathcal{N}}\leq 6^d$. By the inequality that 
 \begin{eqnarray*}
     \max_{\norm{v}=1} v^\top x \leq \max_{z\in\mathcal{N}} z^\top x +\max_{\norm{v}=\frac{1}{2}} v^\top x = \max_{z\in\mathcal{N}} z^\top x +\frac{1}{2}\max_{\norm{v}=1} v^\top x,
 \end{eqnarray*}
we have
\begin{eqnarray*}
    \mP[\max_{\norm{v}=1} v^\top \sum_{k=1}^n x_k> \delta] \leq \mP[2\max_{z\in \mathcal{N}} z^\top \sum_{k=1}^n x_k> \delta] \leq 6^d \sup_{\norm{z}=1}\mP[2 z^\top \sum_{k=1}^n x_k> \delta].
\end{eqnarray*}
According to Bernstein’s inequality, there holds
\begin{eqnarray*}
    \mP[2 z^\top \sum_{k=1}^n x_k> \delta] \leq \exp\bracket{-c_0 \min\bbracket{\frac{\delta^2}{4\sum_{k=1}^n  \sigma_k^2},\frac{\delta}{2\max_k \sigma_k}}},
\end{eqnarray*}
for some absolute constant $c_0$.
To bound $\mP[\max_{\norm{v}=1} v^\top \sum_{k=1}^n x_k> \delta] \leq \epsilon$, we find $\delta$ such that
\begin{eqnarray*}
    \delta &&= 2\max\bbracket{\sqrt{\sum_{k=1}^n  \sigma_k^2}\frac{1}{\sqrt{c_0}}\sqrt{\log(1/\epsilon)+2d}, \max_k \sigma_k \frac{1}{c_0}\bracket{\log(1/\epsilon)+2d}}\leq c\sqrt{\sum_{k=1}^n  \sigma_k^2}\bracket{\log(1/\epsilon)+2d},
\end{eqnarray*}
where $c= 2 \max\bbracket{\frac{1}{c_0},\frac{1}{\sqrt{c_0}}}.$
\end{proof}

\begin{lemma}\label{le:sum2}
For $\Gamma$ satisfying the conditions in Lemma \ref{lemma:g_gamma}, we have
    \begin{eqnarray*}
        \norm{(I-\Gamma)^{-1}- (1-\gamma)\sum_{t=k}^n\sum_{j=k}^t \Gamma^{t-j} \gamma^{j-k}}&&\leq M \bracket{\gamma\sum_{j=k}^{n} \lambda^{n-j}\gamma^{j-k}+\frac{\lambda^{n-k+1}}{1-\lambda}},
    \end{eqnarray*}
and
    \begin{eqnarray*}
        \norm{(I-\Gamma)^{-1}- \sum_{t=0}^n \Gamma^t}&&\leq M \frac{\lambda^{n+1}}{1-\lambda}.
    \end{eqnarray*}
\end{lemma}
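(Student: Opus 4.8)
The plan is to base both inequalities on the Neumann series $(I-\Gamma)^{-1}=\sum_{t\geq 0}\Gamma^{t}$, which converges in operator norm because the spectral radius $\lambda$ of $\Gamma$ satisfies $\lambda<1$ by Theorem \ref{thm:lambda}, together with the geometric decay $\norm{\Gamma^{j}}\leq M\lambda^{j}$ established in Lemma \ref{lemma:g_gamma}. I first note that this decay bound also holds at $j=0$, since $M\geq \norm{P}\norm{P^{-1}}\geq \norm{PP^{-1}}=1$, so that $\norm{\Gamma^{0}}=1\leq M$. The second inequality is then immediate: writing $(I-\Gamma)^{-1}-\sum_{t=0}^{n}\Gamma^{t}=\sum_{t=n+1}^{\infty}\Gamma^{t}$ and applying the triangle inequality gives $\norm{(I-\Gamma)^{-1}-\sum_{t=0}^{n}\Gamma^{t}}\leq \sum_{t=n+1}^{\infty}M\lambda^{t}=M\lambda^{n+1}/(1-\lambda)$.

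For the first inequality, the key step is to reindex the finite double sum. Setting $p=t-j$ and $q=j-k$, as $(t,j)$ ranges over $k\leq j\leq t\leq n$ the pair $(p,q)$ ranges over all nonnegative integers with $p+q\leq n-k$, so $(1-\gamma)\sum_{t=k}^{n}\sum_{j=k}^{t}\Gamma^{t-j}\gamma^{j-k}=(1-\gamma)\sum_{p,q\geq 0,\,p+q\leq n-k}\Gamma^{p}\gamma^{q}$. Because the matrix factor $\Gamma^{p}$ and the scalar $\gamma^{q}$ separate, the corresponding infinite sum factorizes as $(1-\gamma)\bracket{\sum_{p\geq 0}\Gamma^{p}}\bracket{\sum_{q\geq 0}\gamma^{q}}=(1-\gamma)(I-\Gamma)^{-1}(1-\gamma)^{-1}=(I-\Gamma)^{-1}$. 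Subtracting therefore leaves exactly the complementary tail, $(I-\Gamma)^{-1}-(1-\gamma)\sum_{t=k}^{n}\sum_{j=k}^{t}\Gamma^{t-j}\gamma^{j-k}=(1-\gamma)\sum_{p,q\geq 0,\,p+q\geq n-k+1}\Gamma^{p}\gamma^{q}$.

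It remains to bound the norm of this tail. Writing $m=n-k$ and applying $\norm{\Gamma^{p}}\leq M\lambda^{p}$, I would split the region $\{p+q\geq m+1\}$ into $\{p\geq m+1\}$ (with all $q\geq 0$) and $\{0\leq p\leq m,\ q\geq m+1-p\}$, which is a genuine partition. The first piece contributes $(1-\gamma)(1-\gamma)^{-1}\sum_{p\geq m+1}\lambda^{p}=\lambda^{m+1}/(1-\lambda)$, and the second contributes $(1-\gamma)(1-\gamma)^{-1}\sum_{p=0}^{m}\lambda^{p}\gamma^{m+1-p}=\gamma\sum_{p=0}^{m}\lambda^{p}\gamma^{m-p}$; substituting $r=m-p$ rewrites the latter as $\gamma\sum_{j=k}^{n}\lambda^{n-j}\gamma^{j-k}$, reproducing precisely the stated bound. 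All the arithmetic is elementary geometric summation, so the only real care is in the reindexing and in checking that the two-region split recovers the two terms of the claim exactly rather than merely up to a constant factor; that bookkeeping, together with the $\norm{\Gamma^{0}}\leq M$ edge case, is the main (and rather mild) obstacle.
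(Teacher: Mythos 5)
Your proof is correct and follows essentially the same route as the paper: both rest on the Neumann series and the reindexing $p=t-j$, $q=j-k$, which reduces the difference to exactly the tail $\sum_{p=0}^{n-k}\Gamma^{p}\gamma^{n-k+1-p}+\sum_{p>n-k}\Gamma^{p}$, bounded geometrically. The only cosmetic difference is that the paper first conjugates by $\tilde{P}$ and computes with the diagonal matrix $\Lambda$ (so $\norm{P}\norm{P^{-1}}\leq M$ is applied once), whereas you apply $\norm{\Gamma^{j}}\leq M\lambda^{j}$ termwise, correctly patching the $j=0$ case via $M\geq\norm{P}\norm{P^{-1}}\geq 1$.
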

\begin{proof}
From Lemma \ref{lemma:g_gamma}, we have $\Gamma = \Tilde{P}\Lambda \Tilde{P}^{-1}$. Then
\begin{eqnarray*}
     &&\norm{(I-\Gamma)^{-1}- (1-\gamma)\sum_{t=k}^n\sum_{j=k}^t \Gamma^{t-j} \gamma^{j-k}} \leq \norm{P}\norm{P^{-1}} \norm{(I-\Lambda)^{-1}- (1-\gamma)\sum_{t=k}^n\sum_{j=k}^t \Lambda^{t-j} \gamma^{j-k}}\\
     &&\leq M \norm{\sum_{j=0}^\infty \Lambda^j- (1-\gamma)\sum_{j=0}^{n-k} \Lambda^j \sum_{l=0}^{n-k-j}\gamma^l} =  M \norm{\sum_{j=0}^{n-k} \Lambda^j (1-(1-\gamma)\sum_{l=0}^{n-k-j}\gamma^l)+\sum_{j=n-k+1}^\infty \Lambda^j}\\
     &&\leq M \bracket{\gamma\sum_{j=k}^{n} \lambda^{n-j}\gamma^{j-k}+\frac{\lambda^{n-k+1}}{1-\lambda}}.
\end{eqnarray*}
Also we have
\begin{eqnarray*}
    \norm{(I-\Gamma)^{-1}- \sum_{t=0}^n \Gamma^t}\leq M \sum_{j=n+1}^\infty \lambda^j \leq M\frac{\lambda^{n+1}}{1-\lambda}.
\end{eqnarray*}
\end{proof}
\end{appendices}

\end{document}